\theoremstyle{plain}
\newtheorem{theorem}{Theorem}
\newtheorem{proposition}{Proposition}
\newtheorem{lemma}{Lemma}
\theoremstyle{definition}
\newtheorem{definition}{Definition}
\newtheorem{assumption}{Assumption}
\theoremstyle{remark}
\newtheorem{remark}[theorem]{Remark}
\newcommand{\A}{\mathcal{A}}
\newcommand{\D}{\mathcal{D}}
\newcommand{\LL}{\mathcal{L}}
\def\1{\bm{1}}
\DeclareMathAlphabet{\mathsfit}{\encodingdefault}{\sfdefault}{m}{sl}
\SetMathAlphabet{\mathsfit}{bold}{\encodingdefault}{\sfdefault}{bx}{n}
\newcommand{\E}{\mathbb{E}}
\DeclareMathOperator*{\argmax}{arg\,max}
\begin{document}
\title{Meta Stackelberg Game: Robust Federated Learning against Adaptive and Mixed Poisoning Attacks}

\author{Tao Li, Henger Li, Yunian Pan, Tianyi Xu, Zizhan Zheng, Quanyan Zhu 

\thanks{The first two authors contributed equally to this work. (Corresponding author: Tao Li). }
\thanks{Tao Li, Yunian Pan, and Quanyan Zhu are with the Department of Electrical and Computer Engineering, New York University. \texttt{tl2636, yp1170, qz494@nyu.edu}}
\thanks{Henger Li, Tianyi Xu, and Zizhan Zheng are with the Department of Computer Science, Tulane University. \texttt{hli30, txu9, zzheng3@tulane.edu} }

}



\maketitle

\begin{abstract}
    Federated learning (FL) is susceptible to a range of security threats. Although various defense mechanisms have been proposed, they are typically non-adaptive and tailored to specific types of attacks, leaving them insufficient in the face of multiple uncertain, unknown, and adaptive attacks employing diverse strategies. This work formulates adversarial federated learning under a mixture of various attacks as a Bayesian Stackelberg Markov game, based on which we propose the meta-Stackelberg defense composed of pre-training and online adaptation. {The gist is to simulate strong attack behavior using reinforcement learning (RL-based attacks) in pre-training and then design meta-RL-based defense to combat diverse and adaptive attacks.} We develop an efficient meta-learning approach to solve the game, leading to a robust and adaptive FL defense. Theoretically, our meta-learning algorithm, meta-Stackelberg learning, provably converges to the first-order $\varepsilon$-meta-equilibrium point in $O(\varepsilon^{-2})$ gradient iterations with $O(\varepsilon^{-4})$ samples per iteration. Experiments show that our meta-Stackelberg framework performs superbly against strong model poisoning and backdoor attacks of uncertain and unknown types.
\end{abstract}

\begin{IEEEkeywords}
    Federated learning, mixed attacks, Bayesian Stackelberg Markov game, meta learning, meta-Stackelberg equilibrium
\end{IEEEkeywords}

\section{Introduction}
\IEEEPARstart{F}ederated learning (FL) allows multiple devices with private data to jointly train a model without sharing their local data~\cite{mcmahan2017communication}. However, FL systems are vulnerable to various adversarial attacks such as untargeted model poisoning attacks (e.g., IPM \cite{xie2020fall}, LMP \cite{fang2020local}) and backdoor attacks (e.g., BFL \cite{bagdasaryan2020backdoor}, DBA \cite{zhang2022distributed}). To address these vulnerabilities, various robust aggregation rules such as Krum~\cite{blanchard2017machine}, coordinate-wise trimmed mean~\cite{yin2018byzantine}, and FLTrust~\cite{cao2020FLTrust} have been proposed to defend against untargeted attacks, and both training-stage and post-training defenses such as Norm bounding~\cite{sun2019can}, NeuroClip~\cite{wang2023mm}, and Prun~\cite{wu2020mitigating} have been proposed to mitigate backdoor attacks. Further, dynamic defenses that myopically adapt parameters such as learning rate \cite{ozdayi2021defending}, norm clipping threshold \cite{guo2021resisting}, and regularizer \cite{acar2020federated} have been proposed. However, state-of-the-art defenses remain inadequate in countering advanced adaptive attacks (e.g., the reinforcement learning (RL)-based attacks~\cite{li2022learning, li2023learning}) that dynamically adjust the attack strategy to achieve long-term objectives. Further, current defenses are typically designed to counter specific types of attacks, rendering them ineffective in the presence of mixed attacks. As shown in \Cref{table:1} in Section~\ref{sec:exp}, simply combining existing defenses with manual tuning proves ineffective due to the interference between defense methods, the defender's lack of information about adversaries, and the dynamic nature of FL.

This work proposes a meta-Stackelberg game (meta-SG) framework that obtains superb defense performance even in the presence of strong adaptive attacks and mixed attacks of the same or different types (e.g., the coexistence of model poisoning and backdoor attacks).
\textcolor{green}{}
Our meta-SG defense framework is built upon the following key observations. First, when the attack type (to be defined in Section~\ref{sec:overview}) is known as priori, the defender can utilize the limited amount of local data at the server and publicly available information to build an approximate world model of the FL system. This allows the defender to identify a robust defense policy offline by solving either a Markov decision process (MDP) when the attack is non-adaptive or a 
Markov game when the attack is adaptive. This approach naturally applies to both a single attack and the coexistence of multiple attacks and leads to a (nearly) optimal defense. Second, when the attacks are unknown or uncertain, as in more realistic settings, the problem can be formulated as a Bayesian Stackelberg Markov game (BSMG)~\cite{sengupta20bsmg}, offering a general model for adversarial FL. However, the standard solution concept for BSMG, namely, the Bayesian Stackelberg equilibrium, targets the expected case and does not adapt to the actual attacks of certain unknown/uncertain types. 

To tackle this limitation, we propose in \Cref{def:meta-se} a novel solution concept called meta-Stackelberg equilibrium (meta-SE) for BSMG as a principled way of developing robust and adaptive defenses for FL. By integrating meta-learning and Stackelberg reasoning, meta-SE offers a computationally efficient approach to address information asymmetry in adversarial FL and enables strategic adaptation in online execution in the presence of multiple (adaptive) attackers. Before training an FL model, a meta policy is learned by solving the BSMG using experiences sampled from a set of possible attacks. When facing an actual attacker during online FL training, the meta-policy is quickly adapted using a relatively small number of samples collected on the fly. The proposed meta-SG framework only requires a rough estimate of possible worst-case attacks during meta-training, thanks to the generalization ability brought by meta-learning as theoretically certified in \Cref{main_prop:generalization}.

To solve the BSMG in the pre-training phase, we propose a meta-Stackelberg learning (meta-SL) algorithm (\Cref{algo:meta-sl}) based on the debiased meta-reinforcement learning 
approach in \cite{fallah2021convergence}. The meta-SL provably converges to the first-order $\varepsilon$-approximate meta-SE in $O(\varepsilon^{-2})$ iterations, and the associated sample complexity per iteration is of  $O(\varepsilon^{-4})$. Even though meta-SL achieves state-of-the-art sample efficiency in bi-level stochastic optimization as in  \cite{ji2021bilevel}, its operation involves the Hessian of the defender's value function.

To obtain a more practical solution (to bypass the Hessian computation), we further propose a fully first-order pre-training algorithm, called Reptile meta-SL, inspired by Reptile \cite{nichol2018first}. Reptile meta-SL only utilizes the first-order stochastic gradients from the attacker's and the defender's problem to solve for the approximate equilibrium. The numerical results in \cref{table:1} demonstrate its effectiveness in handling various types of non-adaptive attacks, including mixed attacks, while Fig.~\ref{fig:untargeted} and Fig.~\ref{fig:additional} highlight its efficiency in 
coping with uncertain or unknown attacks, including adaptive attacks.  \textbf{Our contributions} are summarized as follows.
\begin{itemize}
    \item We address critical security problems in FL when attacks are adaptive or mixed with multiple types, which are beyond the manual combination of existing defenses. 
    \item We develop a Bayesian Stackelberg game model (\Cref{subsec:bsmg}) to capture the information asymmetry in the adversarial FL under multiple uncertain/unknown 
    attacks.
    \item To create a strategically adaptable defense, we propose a new equilibrium concept: meta-Stackelberg equilibrium (\ref{eq:meta-se}), where the defender (the leader) designs a meta policy and an adaptation strategy by anticipating and adapting to the attacker's moves, leading to a data-driven approach to tackle information asymmetry.
    \item To learn the meta equilibrium defense in the pre-training phase, we develop meta-Stackelberg learning (\Cref{algo:meta-sl}), an efficient first-order meta RL algorithm, which provably converges to $\varepsilon$-approximate equilibrium in $O(\varepsilon^{-2})$ gradient steps with $O(\varepsilon^{-4})$ samples per iteration, matching the state-of-the-art sample efficiency.
    \item We conduct extensive experiments in real-world settings to demonstrate the superb performance of the meta-Stackelberg method.
\end{itemize}
Our work falls within the realm of RL and game-theoretic defenses against mixed attacks in FL. {To the best of our knowledge, we the first work to utilize RL and game-theoretical techniques to defend against mixed attacks in FL.}
\Cref{sec:related} gives a detailed review of related works.  
\section{Meta Stackelberg Defense Framework}
\label{sec:overview}
As shown in Fig~\ref{fig:fl-game}, the meta-learning framework includes two stages: \textit{pre-training}, \textit{online adaptation}. The \textit{pre-training} stage is implemented in a simulated environment, which allows sufficient training using trajectories generated from the interactions between the defender and 
the attacker with its type randomly sampled from a set of potential attacks. Both adaptive and non-adaptive attacks could be considered for pre-training. After obtaining a meta-policy, the defender will interact with the real FL environment in the \textit{online adaptation} stage to tune its defense policy using {feedback (i.e., model updates and environment parameters)} received in the presence of real attacks that are not necessarily in the pre-training attack set.  
Finally, at the last round of FL training, the defender will perform a post-training defense on the global model.
Pre-training and online adaptation are indispensable in the proposed framework. \cref{table:3} in Appendix~\ref{app:add-exp} summarizes the experiments on directly applying defense learned from pre-training without online adaptation and adaptation from a randomly initialized defense policy without pre-training, both failing to address malicious attacks.

\begin{figure}
    \centering
    \includegraphics[width=1\linewidth]{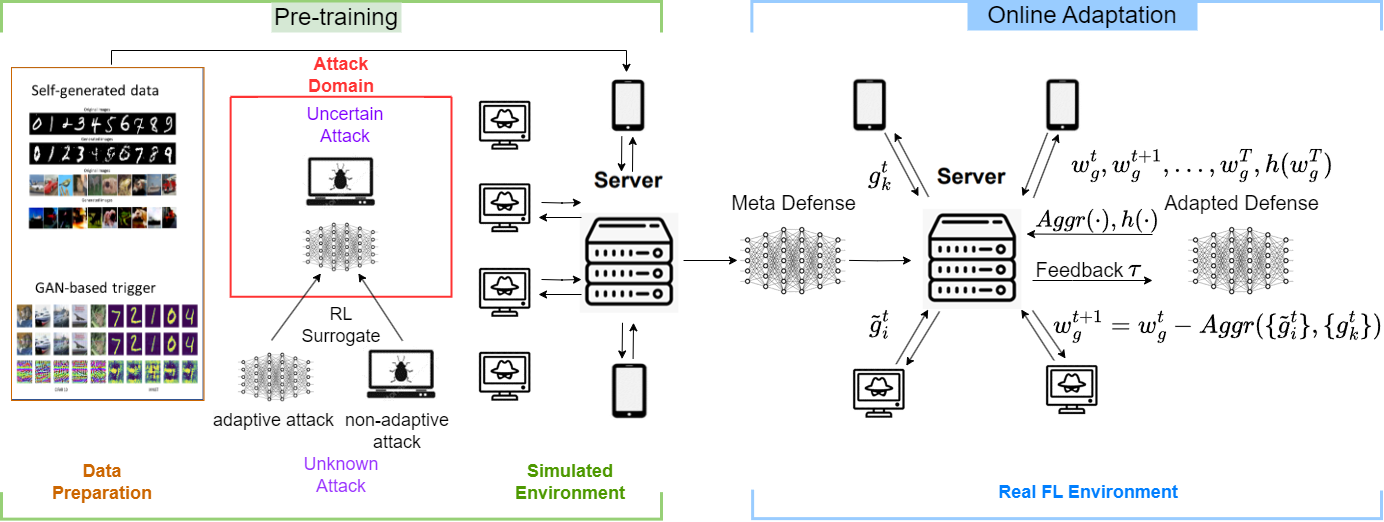}
    \caption{A graphical abstract of meta-Stackelberg defense. In the pertaining stage, a simulated environment is constructed using generated data and the attack domain. The defender utilizes meta-Stackelberg learning (\Cref{algo:meta-sl}) to obtain the meta policy to be online adapted in the real FL.}
    \label{fig:fl-game}
    \vspace{-0.6cm}
\end{figure}
\paragraph{FL objective}  Consider a learning system that includes one server and $n$ clients, each client possesses its own private dataset $D_i={(x_i^j,y_i^j)_{j=1}^{|D_i|}}$ where $|D_i|$ is the size of the dataset for the $i$-th client.
Let $U=\{D_1, D_2, \dots, D_n\}$ denote the collection of all client datasets. The objective of federated learning is to obtain a model $w$ that minimizes the average loss across all the devices: $\min_w {F(w)}:=\frac{1}{n}\sum_{i=1}^{n}f(w, D_i)$, where $f(w, D_i):=\frac{1}{|D_i|}\sum_{j=1}^{|D_i|}\ell(w,(x_i^j,y_i^j))$ is the local empirical loss with $\ell(\cdot,\cdot)$ being the loss function. 

\paragraph{Attack objective} We consider two major categories of attacks: untargeted model poisoning attacks and backdoor attacks. An untargeted model poisoning attack aims to maximize the average model loss, i.e., $\min_w -F(w)$, while a targeted one strives to cause misclassification of poisoned test inputs to one or more target labels (e.g., backdoor attacks). A malicious client $i$ employing targeted attack first produces a poisoned dataset $D'_{i}$ by altering a subset of data samples $(x_i^j,y_i^j) \in D_i$ to $(\hat{x}^{j}_i,c^*)$. Here, $\hat{x}^{j}_i$ is the tainted sample with a backdoor trigger inserted, and $c^* \neq y^j_i, c^*\in C$ is the targeted label. Let $\rho_i=|D'_i|/|D_i|$ denote the poisoning ratio, which is typically unknown to the defender. To simplify the notation, we assume that among the $M = M_1 + M_2$ malicious clients, the first $M_1$ malicious clients carry out targeted attacks, and the following $M_2$ malicious clients undertake an untargeted attack. Note that clients in the same category may use different attack methods. Then, the joint objective of these malicious clients is $\min_{w}F'(w):=\frac{1}{M_1}\sum_{i=1}^{M_1}f(w,D_i')-\frac{1}{M_2}\sum_{i=M_1+1}^{M}f(w, D_i)$.

\paragraph{FL process} At each round $t$ out of $H$ rounds {of FL training}, the server randomly selects a subset of clients $\mathcal{S}^t$ and sends them the most recent global model $w_g^t$. Every benign client $k \in \mathcal{S}^t$ updates the model using their local data via one or more iterations of stochastic gradient descent and returns the model update $g^t_k$ to the server. 
In contrast, an adversary $j \in \mathcal{S}^t$ creates a malicious model update $\widetilde{g}^t_j$ and sends it back. The server then collects the set of model updates $\{{\widetilde{g}_i^t}\cup {\widetilde{g}_j^t}\cup {g_k^t}\}_{i,j,k\in \mathcal{S}^t}$, for $i\in \{1, \ldots, M_1\}, j\in \{M_1+1, \ldots, M\}, k\in \mathcal{S}^t\setminus\{1,\ldots, M \}$, utilizes an aggregation rule $Aggr$ to combine them, and updates the global model with the learning rate $\eta^t$: $w_g^{t+1}=w_g^t-\eta^t Aggr({\widetilde{g}_i^t}\cup {\widetilde{g}_j^t}\cup {g_k^t})$, which is then sent to clients in round $t+1$. 
At the end of each round, the defender performs a post-training defense $h(\cdot)$ on the global model $\widehat{w}_g^t=h(w_g^t)$ 
to evaluate the current defense performance. Only at the final round $H$ or whenever a client is leaving the FL systems, the global model with post-training defense $\widehat{w}_g^t$ will be sent to all (leaving) clients.

\paragraph{Attack types} We introduce the concept of \emph{attack type} to differentiate various attack scenarios, which typically include the following three aspects. The first aspect is the attack objective chosen by a malicious client. Let $\Omega_1$ be the set of all possible attack objectives from the defender's knowledge base. We set $\Omega_1=\{\text{untargeted, targeted}\}$ in this work. The second aspect specifies the attack method (i.e., the algorithm used to generate the actual attack policy such as IPM and DBA) adopted by a malicious client. Let $\Omega_2$ be the set of all possible attack methods from the defender's knowledge base. The third aspect captures the configuration associated with an attack method, including its hyperparameters and other attributes (e.g., 
triggers implanted in backdoor attacks, labels used in targeted attacks, and attacker's knowledge about the FL system). Let $\Omega_3$ denote the set of all possible configurations. For each malicious client $i$, the tuple
$(\omega_1, \omega_2, \omega_3)_i$ specifies its particular attack type. Let $\xi=\{(\omega_1, \omega_2, \omega_3)_i\}_{i=1}^M$ be the joint attack type. The following refers to $\xi\in (\Omega_1\times\Omega_2\times\Omega_3)^M$ as the attack type in the FL process. Further, let $\Xi\subset(\Omega_1\times\Omega_2\times\Omega_3)^M$ denote the domain of attacks the defender is aware of. \cref{table:attacks} in Appendix~\ref{app:exp-setup} summarizes the types of all the attacks considered in this work. However, the actual attack type encountered during FL training is not necessary in $\Xi$, although it is presumably similar to a known type in $\Xi$.

\subsection{Pre-training as Bayesian Stackelberg Markov Game}
\label{subsec:bsmg}
From the discussion above, the global model updates and the final output are jointly influenced by the defender (through aggregation) and the malicious clients (through corrupted gradients). Hence, the FL process in an adversarial environment can be formulated as a two-player discrete-time Bayesian Stackelberg Markov game (BSMG) defined by a tuple $\langle {S}, {A}_\D, {A}_{\xi},\mathcal{T}, r, \gamma, H \rangle$. Using discrete time index $t$ ({one step corresponds to one FL round}), we have the following.   
\begin{itemize}
    \item $S$ is the state space, and its elements represent the global model at each round $s^t=w_g^t$.  
    \item $A_{\D}$ is the defender's action set. Each action $a_\D^t$ represents a combination of the robust aggregation and post-training defenses: $a_\D^t=\{Aggr(\cdot), h(\cdot)\}$. 
    \item $A_{\xi}$ is the type-$\xi$ attacker's action set. Each action includes the joint model updates of all malicious clients: $a_\A^t=\{\widetilde{g}_i^t\}_{i=1}^{M_1}\cup \{\widetilde{g}_i^t\}_{i=M_1+1}^{M}$. 
    \item 
    {$\mathcal{T}(s^{t+1}|s^t, Aggr(\cdot), a^t_\A)$} {specifies the distribution of the next state given the current state and joint actions at $t$, which is determined by the global model update:}
    $w_g^{t+1}=w_g^t-\eta^t Aggr({\widetilde{g}_i^t}\cup {\widetilde{g}_j^t}\cup {g_k^t})$. 
    \item $r_\D, r_{\xi}$ are the defender's and the attacker's reward functions (to be maximized), respectively. The defender aims to minimize the loss after the post-training: $r_{\mathcal{D}}^t:= -F(\widehat{w}^{t}_g)$ where $\widehat{w}^{t}_g=h({w}^{t}_g)$. The attacker's $r_\xi^t$ is given by the joint attack objective: $-F'(\widehat{w}^{t}_g)$. 
\end{itemize}

\begin{remark}
Even though the defender's reward evaluation considers a post-training defense applied to each step, such a defense is actually executed only at the final round or to a client leaving the FL system. The key message is that the post-training defense $h(\cdot)$ in defense actions do not interfere with the model updates on $w_g^t$, since the transition function $\mathcal{T}$ does not involve $h(\cdot)$. Compared with existing reward designs that only focus on the last round model accuracy \cite{li2022learning}, our reward design prioritizes practical implementation and long-term defense performance, where clients can join and leave the FL system anytime before the final round. {This design enables us to combine a post-training defense along with techniques for modifying the model structure, e.g., NeuroClip~\cite{wang2022universal} and Prun~\cite{wu2020mitigating}.}
\end{remark}
The Stackelberg interactions among players are deferred to \Cref{sec:meta-sl}, while the rest of this section presents an overview of the pre-training and online adaptation stages. We summarize the frequently used notations in \Cref{tab:notations}.
\begin{table}[!ht]
\footnotesize
    \centering
    \begin{tabular}{ll}
    \toprule
      Notation(s)   & Description  \\
      \midrule
        $w_g^t$, $\hat{w}_g^t$ & Global model weights, post-training-defense weights \\
        $\mathcal{D}$, $\mathcal{A}$ & Defender, Attacker \\
        $\xi$, $\Xi$ & Attack type, attack domain \\
        $F$, $F'$, $F''$ & FL, attack, and approximated attack objectives  \\
        $a_\D^t$, $a_\A^t$ & Defense and attack actions \\
        $\mathcal{T}$ & Transition, i.e., global model update  \\
        $r_\D$, $r_\xi$ & Defender's and type-$\xi$ Attacker's rewards \\
        $\pi_\D$, $\pi_\xi$ & Defender's and type-$\xi$ attacker's policies \\
        $\theta$, $\Theta$ & Defender's policy parameter, and the domain  \\
        $\phi$, $\Phi$ & Generic Attacker's parameter, and the domain\\
        $\phi_\xi$, $\phi_\xi^*$ & Type-$\xi$ Attacker's parameter, and the optimal attack \\
        $Q(\Xi)$ & Prior distribution over the attack domain  \\
        $J_\D(\theta, \phi, \xi)$ & Expected cumulative defense rewards under type-$\xi$ attack \\ 
        $J_\A(\theta, \phi, \xi)$ & Expected cumulative type-$\xi$ attack rewards \\
        $\tau_\xi$ & FL system trajectory  under type-$\xi$ attack\\
        $q(\theta,\phi_\xi)$ & Trajectory distribution under type-$\xi$ attack  \\
        $d_i$ & Residue factors of $q(\theta,\xi_i)$ after removing $\pi_{\xi_i}$\\
        $\nabla_\theta J_\D(\tau)$ & Estimated gradient using trajectory $\tau$ \\
        $\mathcal{L}_{\D}(\theta,\phi,\xi)$ & Expected rewards after gradient adaptation on $\theta$ \\
        $\mathcal{L}_{\A}(\theta,\phi,\xi)$ & Expected type-$\xi$ rewards after gradient adaptation \\
        $V(\theta)$ & Worst-case expected defense rewards over all attack types  \\
        $\hat{V}(\theta)$ & Sample average of $V$ w.r.t. sampled attack types \\
        \bottomrule   
    \end{tabular}
    \caption{A summary of frequently used notations.}
    \label{tab:notations}
\end{table}

\subsection{Simulated Pre-training Environments}
\label{subsec:sim}
With the game model defined above, the defender (i.e., the server) can, in principle, identify a strong defense by solving the game (we discuss different solution concepts in Section~\ref{sec:meta-sl}). Due to efficiency and privacy concerns in FL, however, it is often infeasible to solve the game in real time when facing the actual attacker. Instead, the defender can create a simulated environment to approximate the actual FL system during the pre-training stage. The main challenge, however, is that the defender lacks information about the individual devices in FL. 

\paragraph{White-box simulation} We first consider the {\it white-box} setting where the defender is aware of the number of malicious devices in each category (i.e., $M_1$ and $M_2$) and their actual attack types, as well as the \textit{non-i.i.d.} level (to be defined in~\cref{sec:exp-setup}) of local data distributions across devices. 
However, it does not have access to individual devices' local data and random seeds, making it difficult to simulate clients' local training and evaluate rewards. To this end, we assume that the server has a small amount of root data randomly sampled from the collection of all client datasets $U$ as in previous work~\cite{cao2020FLTrust, miao2022privacy}. We then use generative model (e.g., conditional GAN model~\cite{mirza2014conditional} for MNIST and diffusion model~\cite{sohl2015deep} for CIFAR-10 in our experiments) to generate as much data as necessary to mimic the local training (see details in Appendix~\ref{app:self-data}). We give an ablation study (\cref{table:5}) in Appendix~\ref{app:add-exp} to evaluate the influence of limited/biased root data. We remark that the purpose of pre-training is to derive a defense policy rather than the model itself. Directly using the shifted data (root or generated) to train the FL model will result in low model accuracy (see~\cref{table:3} in Appendix~\ref{app:add-exp}).

\paragraph{Black-box simulation} We then consider the more realistic {\it black-box} setting, where the defender has no access to the number of malicious devices and their actual attack types, nor the \textit{non-i.i.d.} level of local data distributions. To obtain a robust defense, we assume the server considers the worst-case scenario based on a rough estimate of the missing information (see our ablation study in Appendix~\ref{app:add-exp}) and adopts the RL-based attacks to simulate the worst-case attacks (see Section~\ref{sec:RL-attack-defense}) when the attack is unknown or adaptive. In the face of an unknown backdoor attack, the defender does not know the backdoor triggers and targeted labels. To simulate a backdoor attacker's behavior, we first implement multiple GAN-based attack models as in~\cite{doan2021lira} to generate worst-case triggers (which maximizes attack performance given the backdoor objective) in the simulated environment. Since the defender does not know the {poisoning ratio $\rho_i$} and the target label of the attacker's poisoned dataset (
needed to determine the attack objective $F'$), we {approximate} the attacker's reward function by $r_{\mathcal{A}}^t=-F''(\widehat{w}^{t+1}_g)$, where $F''(w):=\min_{c\in C} [\frac{1}{M_1}\sum_{i=1}^{M_1}\frac{1}{|D_i'|}\sum_{j=1}^{|D_i'|}\ell(w,(\hat{x}_i^j,c))]-\frac{1}{M_2}\sum_{i=M_1+1}^M f(\omega, D_i)$. $F''$ differs $F'$ only in the first $M_1$ clients, where we use a strong target label (that minimizes the expected loss) as a surrogate to the true label $c^*$. We report the defense performance against white-box and black-box backdoor attacks in Fig.~\ref{fig:backdoors} in Appendix~\ref{app:add-exp}.

\subsection{Online Adaptation and Execution}
\label{sec:online}
When deploying the pre-trained defense policy online, the defender interacts with the FL system and collects online samples, including the states (global model weights), actions (clients' local updates), and rewards information. Since the defender cannot access clients' data, the exact reward evaluation is missing. Instead, it calculates estimated rewards using the self-generated data and simulated triggers from the pertaining stage, as well as new data, inferred online through methods such as inverting gradient~\cite{geiping2020inverting} and reverse engineering~\cite{wang2019neural}. Inferred data samples are blurred using data augmentation~\cite{shorten2019survey} to protect clients' privacy.  For a fixed {number} of FL rounds ({e.g., $50$ for MNIST and $100$ for CIFAR-10 in our experiments}), the defense policy will be updated using gradient ascents from the collected samples. Ideally, the defender's adaptation time  (including the time for collecting new samples and updating the policy) should be significantly less than the whole FL training period so that the defense execution will not be delayed. In real-world FL training, the server typically waits for up to $10$ minutes before receiving responses from the clients~\cite{bonawitz2019towards,kairouz2021advances}, enabling defense policy's online update with enough episodes.
\section{Meta Stackelberg Learning}
\label{sec:meta-sl}
Since the pre-training is modeled by a BSMG, solving the game efficiently is crucial to a successful defense. This work's main contribution includes the formulation of a new solution concept to the game, meta-Stackelberg equilibrium (meta-SE), and a learning algorithm to approximate such equilibrium in finite time. To motivate the proposed concept, we begin by addressing the defense against non-adaptive attacks. 

Consider the attacker employing a non-adaptive attack of type $\xi$; in other words, the attack action at each iteration is determined by a fixed attack strategy $\pi_{\xi}$, {where $\pi_{\xi}(a)$ gives the probability of taken action $a \in A_{\xi}$, independent of the FL training and the defense strategy}.
In this case, BSMG reduces to an MDP, where the transition kernel is $\mathcal{T}_{\xi}(\cdot|s,a_\D)\triangleq \int_{A_\xi} \mathcal{T}(\cdot|s,a_\mathcal{A}, a_\mathcal{D})d\pi_{\xi}(a_\A)$.  Parameterizing the defender's policy $\pi_\D(a_\D^t|s^t;\theta)$ by a neural network with model weights $\theta\in \Theta$, the solution to the following optimization problem $\max_{\theta\in \Theta} 
\E_{a_\D^t\sim \pi_\D, s^t\sim \mathcal{T}_\xi}[\sum_{t=1}^H \gamma^t r_\D^t]\triangleq J_\D(\theta, \xi)$ 
gives the optimal defense against the non-adaptive attack. When the actual attack in the online stage falls within $\Xi$, which the defender is uncertain of, one can consider the defense against the expected attack: $\max_{\theta}\E_{\xi\sim Q}J_\D(\theta, \xi)$, where $Q$ is a distribution over the attack domain to be designed by the defender. One intuitive design is to include all reported attack methods in history as the attack domain and their empirical frequency as the $Q$ distribution. 

In stark contrast to non-adaptive attacks, an adaptive attack can adjust attack actions to the FL environment and the defense mechanism \cite{li2022learning,li2023learning}. Most existing attacks are history-independent \cite{rodriguez2023survey,xia2023poisoning}. Hence, we assume that an adaptive attack takes the current state (global model) as input, i.e., the attack policy is a Markov policy denoted by $\pi_{\xi}(a^t_\A|s^t;\phi)$, which is parameterized by $\phi\in \Phi$.   An optimal adaptive attack policy is the best response to the existing defense $\pi_\D(\cdot|s^t;\theta)$: $\phi_\xi^*\in \argmax \E_{a_\A^t\sim \pi_\xi, a_\D^t\sim \pi_\D}[\sum_{t=1}^H \gamma^t r_\xi^t]\triangleq J_\A(\theta,\phi,\xi)$. Then, the defender's cumulative rewards under such attack is $J_\D(\theta, \phi_\xi^*, \xi)\triangleq \E_{a_\A^t\sim \pi_\xi, a_\D^t\sim \pi_\D}[\sum_{t=1}^H \gamma^t r_\D^t]$. 

\subsection{RL-based Attacks and Defenses}
\label{sec:RL-attack-defense}
The actual attack type (which could be either adaptive or non-adaptive) encountered in the online phase may be not in $\Xi$ and thus unknown to the defender. 
To prepare for these unknown attacks, we propose to use multiple RL-based attacks with different objectives, adapted from RL-based untargeted model poising attack~\cite{li2022learning} and RL-based backdoor attack~\cite{li2023learning}, as surrogates for unknown attacks, which are added to the attack domain for pre-training. The rationale behind the RL surrogates includes: (1) they achieve strong attack performance by optimizing long-term objectives, {which is typically more general than myopic attacks with short-term goals}; (2) they adopt the most general action space (i.e., model updates), which allows them to mimic any adaptive or non-adaptive attacks given the corresponding objectives; (3) they are flexible enough to incorporate multiple attack methods by using RL to tune the hyper-parameters of a mixture of attacks. A similar argument applies to RL-based defenses. We remark that in this paper, an RL-based attack (defense) is not a single attack (defense) as in~\cite{li2022learning,li2023learning} but a systematically synthesized combination of existing attacks (defenses). In the simulated environment, we train our defense against the strongest white-box RL attacks in~\cite{li2022learning,li2023learning} with different objectives (e.g., untargeted or targeted), which is considered the optimal attack strategy. The ``worst-case'' scenario is commonly used in security scenarios to ensure the associated defense has performance guarantees under ``weaker'' attacks with similar objectives. Such a robust defense policy gives us a good starting point to further adapt to uncertain or unknown attacks. Our defense is generalizable to other adaptive attacks (see \cref{table:6} in Appendix~\ref{app:add-exp}). The key novelty of our RL-based defense is that instead of using a fixed and hand-crafted algorithm as in existing approaches, we use RL to optimize the policy network $\pi_\D(a_\D^t|s^t;\theta)$. Similar to RL-based attacks, the most general action space could be the set of global model parameters. However, the high dimensional action space will lead to an extremely large search space that is prohibitive in terms of training time and memory space. Thus, we apply compression techniques (see Appendix~\ref{app:exp-setup}) to reduce the action from a high-dimensional space to a 3-dimensional space {incorporating robust aggregation and post-training defenses}. Note that the execution of our defense policy is lightweight, without using any extra data for evaluation/validation. See the discussion in Appendix~\ref{app:exp-setup} on how we apply our RL-based defense during online adaptation.

\subsection{Meta-Stackelberg Equilibrium}
\label{subsec:meta-se}
{As discussed in Section~\ref{subsec:bsmg}, one of the key challenges to solving the BSMG} is the defender's incomplete information on attack types. Prior works have explored a Bayesian equilibrium approach to address this issue \cite{sengupta20bsmg}. Given the 
{set of possible attacks $\Xi$ that the defender is aware of} and a prior distribution $Q$ over the domain, the Bayesian Stackelberg equilibrium (BSE) is given by the following bi-level optimization.
\begin{definition}[Bayesian Stackelberg equilibrium]
\label{def:bse}
A pair of the defender's policy $\theta$ and the attacker's type-dependent policy $(\phi_\xi)_{\xi\in\Xi}$ is a Bayesian Stackelberg equilibrium if it satisfies
    \begin{equation}
\label{eq:bse}
    \max_{\theta\in \Theta} \E_{\xi\sim Q}[J_\D(\theta, \phi_\xi^*, \xi)], \text{ s.t. } \phi_\xi^*\in \argmax J_\A(\theta, \phi, \xi). \tag{BSE}
\end{equation}
\end{definition}
In (\ref{eq:bse}), unaware of the exact attacker type, the defender {(the leader)} aims to maximize the defense performance against an average of all attack types, anticipating their best responses. 

 From a game-theoretic viewpoint, the Bayesian equilibrium in (\ref{eq:bse}) is of ex-ante. The defender determines its equilibrium strategy only knowing the type distribution $Q$. However, as the Markov game proceeds, the attacker's moves (e.g., malicious global model updates) during the interim stage (online stage) reveal additional information on the attacker's private type. This Bayesian equilibrium defense strategy fails to handle the emerging information on the attacker's hidden type in the interim stage, as the policy obtained from (\ref{eq:bse}) remains fixed throughout the online stage without adaptation.

To address the limitation of Bayesian equilibrium, we introduce the {novel} solution concept, meta-Stackelberg equilibrium (meta-SE),  to equip the defender with online responsive intelligence under incomplete information. As a synthesis of meta-learning and Stackelberg equilibrium, the meta-SE aims to pre-train a meta policy on a variety of attack types sampled from the attack domain $\Xi$ such that online gradient adaption applied to the base produces a decent defense against the actual attack in the online environment. Using mathematical terms, we denote by $\tau_\xi:=  (s^k, a^k_\D, a^k_{\xi})_{k=1}^H$ the trajectory of the FL system under type-$\xi$ attacker {up to round $H$}, which is subject to the distribution $q(\theta, \phi_\xi):=\prod_{t=1}^H \pi_\D(a_\D^t|s^t;\theta)\pi_\xi(a^t_\A|s^t, \phi_\xi)\mathcal{T}(s^{t+1}|s^t, a_\D^t, a_\A^t)$. Let $\hat{\nabla}_\theta J_\D(\tau)$ be the unbiased estimate of the policy gradient $\nabla_\theta J_\D$ using the sample trajectory $\tau_\xi$ (see Appendix~\ref{app:algo}). Then, a one-step gradient adaptation using the sample trajectory is given by $\theta+\eta \hat{\nabla}_\theta J_\D$. Incorporating this gradient adaptation into (\ref{eq:bse}) leads to the proposed meta-SE.   
\begin{definition}[Meta-Stackelberg Equilibrium]
\label{def:meta-se}
    A pair of the defender's policy $\theta$ and the attacker's type-dependent policy $(\phi_\xi)_{\xi\in\Xi}$ is a one-step gradient-based meta-Stackelberg equilibrium if it satisfies
    \begin{align}
 \label{eq:meta-se}
    \max_{\theta\in\Theta} & \mathbb{E}_{{\xi}\sim Q(\Xi)}\mathbb{E}_{\tau\sim q}[J_\D(\theta+\eta \hat{\nabla}_\theta J_\D(\tau),\phi^*_{\xi}, {\xi})], \tag{meta-SE}\\
    \text{s.t. }&  \phi^*_{ {\xi}}\in \argmax \mathbb{E}_{\tau\sim q}J_\A(\theta+\eta \hat{\nabla}_\theta J_\D(\tau), \phi,  {\xi}), \forall \xi\in \Xi. \nonumber
\end{align}
\end{definition}
\begin{remark}
    The meta-SE is open to various online adaptation schemes, such as multi-step gradient \cite{nichol2018first} recurrent neural network-based adaptation \cite{duan2016rl}. Our experiments implement multi-step gradient adaptation due to simplicity; see \Cref{algo:meta-rl} in Appendix~\ref{app:algo} and online adaptation setup in Appendix~\ref{app:exp-setup}. 
\end{remark}

The idea of adding the gradient adaptation to the equilibrium is inspired by the recent developments in gradient-based meta-learning \cite{finn2017model,nichol2018first}. When the attack is non-adaptive, the BSMG reduces to an MDP problem, as delineated at the beginning of this section. Consequently, (\ref{eq:meta-se}) turns into the standard form of meta-learning \cite{finn2017model}.   Unlike the conventional (\ref{eq:bse}), the solution to (\ref{eq:meta-se}) gives the defender a decent defense initialization after pre-training whose gradient adaptation in the online stage is tailored to type $\xi$, since the online trajectory follows the distribution $q(\theta,\phi_\xi)$ that contains information on the attack type. The novelty of (\ref{eq:meta-se}) lies in that the leader (defender) determines an optimal adaptation scheme rather than a policy, which is computed using an online trajectory without knowing the actual type, creating a data-driven strategic adaptation after the pre-training. 

\subsection{Meta-Stackelberg Learning}
\label{subsec:meta-sl}
Unlike finite Stackelberg Markov games that can be solved (approximately) using mixed-integer programming  \cite{Vorobeychik_Singh_2021}, two-stage bilinear programming \cite{tao23pot} or Q-learning \cite{sengupta20bsmg}, our BSMG admits high-dimensional continuous state and action spaces, posing a more challenging computation issue. Hence, we resort to a two-timescale policy gradient (PG) algorithm, referred to as meta-Stackelberg learning (meta-SL) presented in \Cref{algo:meta-sl}, to solve for (\ref{eq:meta-se}) in a similar vein to \cite{li2022sampling, hong23twotime}, which alleviates the nonstationarity caused by concurrent policy updates from both players \cite{bora23two-timescale, tao_info, tao22confluence}. As shown in the pseudo-code, meta-SL features a nested-loop structure, where the inner loop (line 13-15) learns the attacker's best response for each sampled type defined in the constraint in (\ref{eq:meta-se}) while fixing the current defense at the $t$-th outer loop. Once the inner loop terminates after $N_\A$ rounds, the returned attack policy $\phi_\xi^t(N_\A)$, as an approximate to $\phi_\xi^*$, is utilized to estimate the policy gradient of the defender's value function. Of particular note is that when evaluating the defender's policy gradient under a given type $\xi$, the gradient computation $\nabla_\theta \E_{\tau\sim q(\theta)}[J_\D(\theta+\eta \hat{\nabla}_\theta J_\D(\tau), \phi_\xi^*, \xi )]$ involves the Hessian computation due to $\hat{\nabla}_\theta J_\D(\tau)$. Even though \cite{fallah2021convergence} gives an unbiased sample estimate of the policy gradient, leading to debiased meta-learning, the sample complexity induced by the Hessian is prohibitive. To avoid Hessian estimation, we adopt another meta-learning scheme called Reptile \cite{nichol2018first} to update the defense policy. The key difference is that Reptile directly evaluates the policy gradient at the adapted policy $\theta_\xi^t$ (line 11) instead of the current meta policy $\theta^t$. We provide a step-by-step derivation of debiased meta-learning in Appendix~\ref{app:algo}. 
\begin{algorithm}[H]
\footnotesize
\caption{Meta-Stackelberg Learning}
\label{algo:meta-sl}
    \begin{algorithmic}[1]
    \STATE \textbf{Input: } the distribution $Q(\Xi)$, initial defense meta policy $\theta^0$, pre-defined attack methods $\{\pi_\xi\}_{\xi\in \Xi}$,  pre-trained RL attack policies $\{\phi_\xi^{0}\}_{\xi\in {\Xi}}$, step size parameters $\kappa_\D$, $\kappa_\A$, $\eta$, and iterations numbers $N_\A, N_\D$;
    \STATE \textbf{Output: }$\theta^{N_\D}$;
     \FOR{iteration $t=0$ to $N_\D-1$} 
     \IF{\texttt{meta-RL} (for non-adaptive)}  
     \STATE Sample a batch of $K$ attack types $\xi$ from  ${\Xi}$;
     \STATE Estimate $\hat{\nabla}J_D(\xi):=\hat{\nabla}_\theta J_\D(\theta, \pi_\xi, \xi)|_{\theta=\theta_\xi^t}$;
     \ENDIF
     \IF{\texttt{meta-SG} } 
     \STATE Sample a batch of $K$ attack types $\xi\in {\Xi}$;
     \FOR{each sampled attack $\xi$}
     \STATE Apply one-step adaptation \\
     $\theta_\xi^t\gets \theta^t+\eta \hat{\nabla}_\theta J_\D(\theta^t, \phi^t_\xi, \xi)$;
     \STATE $\phi^t_\xi(0)\gets \phi^t_\xi$;
     \FOR{iteration $k=0,\ldots, N_\A-1$}
     \STATE $\phi^{t}_\xi(k+1)\gets \phi^{t}_\xi(k)+\kappa_\A \hat{\nabla}_{\phi} J_\A(\theta_\xi^t, \phi_\xi^t(k), \xi)$;
     \ENDFOR
     \IF{\texttt{Reptile}}
     \STATE $\hat{\nabla}J_\D(\xi)\gets\hat{\nabla}_\theta J_\D(\theta, \phi^t_\xi(N_\A), \xi)|_{\theta=\theta_\xi^t}$;
     \ENDIF
     \IF{\texttt{Debiased}}
     \STATE $\hat{\nabla}J_\D(\xi)\gets \hat{\nabla}_\theta J_\D(\theta+\eta \hat{\nabla}_\theta J_\D, \phi^t_\xi(N_\A), \xi)|_{\theta=\theta^t}$;
     \ENDIF
     \ENDFOR
    \ENDIF
     \STATE $\theta^{t+1}\gets \theta^t+ \kappa_\D/K \sum_{\xi}\hat{\nabla}J_\D(\xi)$
     \ENDFOR
\end{algorithmic}
\end{algorithm}

The rest of this subsection addresses the computational expense of the proposed meta-SL under debiased meta-learning from a theoretical perspective. 
We begin with the definition of two quantities, $\LL_\D (\theta, \phi, \xi) \triangleq \E_{\tau \sim q} J_{\D}(\theta+\eta \hat{\nabla}_{\theta} J_\D (\tau), \phi, \xi) $, and $\LL_\A (\theta, \phi, \xi) \triangleq  \E_{\tau \sim q} J_\A ( \theta + \hat{\nabla}_\theta J_\D(\tau), \phi, \xi )$, for any fixed type $\xi \in \Xi$. 
We highlight the strict competitiveness (\Cref{ass:sc}) and continuity/smoothness (\Cref{asslip}) of these two quantities. These properties allow us to formalize a slightly weaker solution concept in \Cref{def:meta-fose}.
\begin{assumption}[Strict-Competitiveness]
\label{ass:sc}
    The BSMG is strictly competitive, i.e., there exist constants $c<0$, $d$ such that $\forall \xi \in \Xi$, $s \in S$, $a_\D, a_\A \in A_\D \times A_\xi$, $r_\D(s,a_\D, a_\A)=c \cdot r_\A(s,a_\D, a_\A)+d$.
\end{assumption}
The notion of strict competitiveness (SC) can be treated as a generalization of zero-sum games: if one joint action $(a_\D, a_\A)$ leads to payoff increases for one player, it must decrease the other's payoff.
In adversarial FL, the untargeted attack naturally makes the game zero-sum (hence, SC). {The purpose of introducing \Cref{ass:sc} is to establish the Danskin-type result \cite{danskin-type} for the Stackelberg game with nonconvex value functions (see \Cref{liplemma} in Appendix~\ref{app:theory}), which spares us from the Hessian inversion appeared in implicit function theorem (see \Cref{lemma:ift} in Appendix~\ref{app:theory}). More specifically, it enables us to estimate the gradients of value function $ V(\theta) :=  \mathbb{E}_{\xi \sim Q, \tau \sim q} J_\D (\theta + \eta \hat{\nabla}_{\theta} J_{\D}(\tau), \phi_\xi, \xi) $, where $\{ \phi_\xi: \phi_\xi \in \arg\max_{\phi}  \LL_{\A} (\theta, \phi, \xi) \}_{\xi \in \Xi}$, without considering the second-order information.

 \begin{assumption}[type-wise Lipschitz] \label{asslip}
 The functions $\mathcal{L}_{\D}$ and $\LL_\A$ are continuously diffrentiable in both $\theta$ and $\phi$. Furthermore, there exists constants $L_{11}$, $L_{12}, L_{21}$, and $L_{22}$ such that
 for all $\theta, \theta_1, \theta_2 \in \Theta$ and $\phi, \phi_1, \phi_2 \in \Phi$, and for any $\xi \in \Xi$,
\begin{align*}
    \left\|\nabla_{\theta} \mathcal{L}_\D  \left(\theta_{1}, \phi, \xi\right)-\nabla_{\theta} \mathcal{L}_\D\left(\theta_{2}, \phi, \xi\right)\right\| &  \leq L_{11}\left\|\theta_{1}-\theta_{2}\right\|,  
    \\ 
    \left\|\nabla_{\phi} \mathcal{L}_\D \left(\theta, \phi_{1}, \xi\right)-\nabla_{\phi} \mathcal{L}_\D \left(\theta, \phi_{2}, \xi\right)\right\| & \leq L_{22}\left\|\phi_{1}-\phi_{2}\right\|, 
     \\ 
     \left\|\nabla_{\theta} \mathcal{L}_\D \left(\theta, \phi_{1}, \xi\right)-\nabla_{\theta} \mathcal{L}_\D \left(\theta, \phi_{2}, \xi\right)\right\| & \leq L_{12}\left\|\phi_{1}-\phi_{2}\right\|, 
    \\ 
    \left\|\nabla_{\phi} \mathcal{L}_\D \left(\theta_{1}, \phi, \xi\right)-\nabla_{\phi} \mathcal{L}_\D \left(\theta_{2}, \phi, \xi\right)\right\|  & \leq L_{21}\left\|\theta_{1}-\theta_{2}\right\|,   
    \\ 
    \| \nabla_{\phi} \LL_\A (\theta, \phi_1, \xi ) - \nabla_{\phi} \LL_\A (\theta, \phi_2, \xi)\| & \leq  L_{21} \| \phi_1 - \phi_2\|,  
    \\ 
    \| \nabla_{\phi} \LL_\A (\theta_1, \phi, \xi ) - \nabla_{\phi} \LL_\A (\theta_2, \phi, \xi)\| & \leq  L_{21} \| \theta_1 - \theta_2\|.
\end{align*}
\end{assumption}

\begin{definition}[First-order Equilibrium]
 \label{def:meta-fose}
For $\varepsilon \in [0,1)$, a pair $(\theta^*,\{\phi^*_\xi\}_{\xi \in \Xi}) \in \Theta \times \Phi^{|\Xi|}$ is a $\varepsilon$-\textit{meta First-Order Stackelbeg Equilibrium} ($\varepsilon$-meta-FOSE) if it satisfies that for  $\xi \in \Xi$, $\max_{\theta \in B(\theta^*)} \langle \nabla_{\theta} \LL_\D (\theta^*, \phi^*_\xi, \xi), \theta - \theta^*\rangle \leq \varepsilon$, $ \max_{\phi \in  B(\phi^*_\xi )} \langle \nabla_{\phi} \LL_\A (\theta^*, \phi^*_\xi, \xi) , \phi - \phi^*_\xi \rangle \leq \varepsilon$, $B( \theta^* )=  \{ \theta \in \Theta : \| \theta - \theta^*\| \leq 1\}$, and  $B( \phi^*_\xi ) =  \{ \phi \in \Phi : \| \phi - \phi^*_\xi \| \leq 1\} $, when $\varepsilon = 0$, it is called a meta-FOSE.
 \end{definition}
\Cref{def:meta-fose} constitutes a necessary equilibrium condition for \ref{eq:meta-se}), which can be reduced to $\| \nabla_{\theta} \LL_\D (\theta^*, \phi_\xi, \xi)\| \leq \varepsilon$ and $\|\nabla_{\phi} \LL_\A (\theta^*, \phi_\xi, \xi)\| \leq \varepsilon$ in the unconstraint settings since the ball radius is set to 1.  While omitting the second-order conditions, in the strictly competitive setting, $\varepsilon$-meta-FOSE is a more reasonable focal point, (see references \cite{nouiehed2019solving,pang2016unified}.) as its existence is guaranteed by \Cref{thm:existence}.
\begin{theorem}
\label{thm:existence}
 When $\Theta$ and $\Phi$ are compact and convex, there exists at least one meta-FOSE.
\end{theorem}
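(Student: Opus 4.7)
My plan is to reduce the bilevel Stackelberg existence question to a single-level scalar optimization on the compact convex set $\Theta$, then upgrade the resulting maximizer into a meta-FOSE using a Danskin-type envelope identity. The reduction hinges on strict competitiveness (\Cref{ass:sc}), which collapses the two-level structure into a scalar minimization; the envelope step hinges on \Cref{asslip} combined with \Cref{liplemma}, which is precisely the Danskin-type identity tailored to strictly competitive Stackelberg games and which lets me avoid the inverse Hessians of the generic implicit-function-theorem route of \Cref{lemma:ift}. The outline is: (i) resolve the attacker's inner problem type-wise via Berge's maximum theorem; (ii) rewrite the leader's bilevel program as a continuous scalar minimization on $\Theta$; (iii) extract $\theta^*$ via the extreme value theorem; and (iv) verify the two meta-FOSE inequalities at $(\theta^*, \{\phi_\xi^*\}_{\xi \in \Xi})$.

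For step (i), fix $\theta \in \Theta$ and $\xi \in \Xi$: by \Cref{asslip}, the map $\phi \mapsto \LL_\A(\theta, \phi, \xi)$ is continuous on the compact $\Phi$, so $\arg\max_\phi \LL_\A(\theta, \phi, \xi)$ is nonempty, and $G_\xi(\theta) := \max_{\phi \in \Phi} \LL_\A(\theta, \phi, \xi)$ is continuous in $\theta$ by Berge's maximum theorem. For step (ii), summing the pointwise relation $r_\D = c\, r_\A + d$ with $c < 0$ over the finite horizon $H$ and taking a trajectory expectation yields $\LL_\D(\theta, \phi, \xi) = c\, \LL_\A(\theta, \phi, \xi) + d'$ for a constant $d'$ independent of $(\theta, \phi, \xi)$. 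Consequently, for any attacker best-response selection $\phi_\xi^*(\theta) \in \arg\max_\phi \LL_\A(\theta, \phi, \xi)$, the leader's reduced value $V(\theta) := \mathbb{E}_{\xi \sim Q}\,\LL_\D(\theta, \phi_\xi^*(\theta), \xi)$ equals $c\, G(\theta) + d'$ with $G(\theta) := \mathbb{E}_{\xi \sim Q} G_\xi(\theta)$; each $G_\xi$ is continuous in $\theta$, and bounded convergence makes $G$ continuous on $\Theta$. For step (iii), compactness of $\Theta$ and the extreme value theorem yield a minimizer $\theta^* \in \Theta$ of $G$, equivalently a maximizer of $V$; I then pick, for each $\xi$, any $\phi_\xi^* \in \arg\max_\phi \LL_\A(\theta^*, \phi, \xi)$.

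The attacker's meta-FOSE inequality at each $\phi_\xi^*$ is immediate from the standard first-order necessary condition for maximizing the smooth function $\LL_\A(\theta^*, \cdot, \xi)$ over the convex $\Phi$, which gives $\langle \nabla_\phi \LL_\A(\theta^*, \phi_\xi^*, \xi), \phi - \phi_\xi^* \rangle \leq 0$ for all $\phi \in \Phi \supseteq B(\phi_\xi^*)$. The leader's inequality is where I expect the main obstacle: $V$ is a composition with a possibly non-unique inner argmax and therefore need not be classically differentiable, so no direct gradient argument on $V$ is available. This is exactly where strict competitiveness pays off: the Danskin-type envelope \Cref{liplemma} supplies a measurable selection of $\phi_\xi^*$ along which the subdifferential of $V$ at $\theta^*$ is represented by $\mathbb{E}_\xi \nabla_\theta \LL_\D(\theta^*, \phi_\xi^*, \xi)$ \emph{without} any inverse Hessian, and combining this with the first-order necessary optimality of the minimizer $\theta^*$ on the convex $\Theta$ yields the leader's meta-FOSE inequality at the chosen selection. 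A robust alternative, should the selection argument turn out to be delicate, is to apply Brouwer's fixed-point theorem to the continuous self-map $T(\theta, \{\phi_\xi\}) := \bigl( \Pi_\Theta[\theta + \alpha\, \mathbb{E}_\xi \nabla_\theta \LL_\D(\theta,\phi_\xi,\xi)],\, \{ \Pi_\Phi[\phi_\xi + \beta\, \nabla_\phi \LL_\A(\theta,\phi_\xi,\xi)]\}_{\xi \in \Xi} \bigr)$ on the compact convex product $\Theta \times \Phi^{|\Xi|}$; its fixed points coincide with the variational-inequality characterization of \Cref{def:meta-fose} and thus deliver a meta-FOSE directly.
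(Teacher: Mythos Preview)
Both routes you sketch are viable, but neither coincides with the paper's construction, and your primary route leans on extra hypotheses not in the theorem statement. The paper proves existence using only compactness, convexity, and smoothness of $\LL_\D,\LL_\A$---no strict competitiveness, no PL---via a regularized fixed-point argument: it augments the utilities to strongly concave auxiliaries $\tilde{\ell}_\D(\theta;(\theta',\phi'))=\ell_\D(\theta,\phi')-\tfrac{\gamma_c}{2}\|\theta-\theta'\|^2$ and $\tilde{\ell}_\xi(\phi_\xi;(\theta',\phi'))=\ell_\xi(\theta',\phi)-\tfrac{\gamma_c}{2}\|\phi_\xi-\phi'_\xi\|^2$, shows the resulting argmax self-map on $\Theta\times\Phi^{|\Xi|}$ is upper hemicontinuous by Berge, and applies Kakutani. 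At a fixed point the quadratic regularizer's gradient vanishes, so the first-order conditions of the auxiliary game collapse to the meta-FOSE inequalities.

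Your primary approach imports \Cref{ass:sc} to collapse the bilevel structure and then \Cref{liplemma} for the envelope step; but \Cref{liplemma} itself requires \Cref{plass}, and without PL the value $V$ need not be classically differentiable---Danskin only yields a convex-hull subdifferential, from which extracting a \emph{single} selection $\{\phi_\xi^*\}$ satisfying the leader's variational inequality is not automatic. So this route proves a narrower statement under stronger standing assumptions. Your fallback---Brouwer applied to the projected-gradient self-map $T$---is correct and matches the paper's level of generality; it is arguably cleaner than the Kakutani-plus-regularization device (needing only $C^1$ gradients and a single-valued continuous map), while the paper's construction follows the classical technique of \cite{pang2016unified} and makes single-valuedness explicit through strong concavity. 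Either fixed-point route delivers the variational-inequality characterization of \Cref{def:meta-fose} directly, without ever needing the Stackelberg hierarchy to collapse.
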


Our convergence analysis is based on a regularity assumption adapted from the Polyak-Łojasiewicz (PL) condition \cite{karimi2016linear}. PL condition is a much weaker alternative to convexity conditions (e.g., essential/weak/restricted convexity) \cite{karimi2016linear}, which is customary in nonconvex analysis. 
Despite the lack of theoretical justifications for the PL condition in the literature, \cite{li2022sampling} empirically demonstrates that the cumulative rewards in meta-reinforcement learning satisfy the PL condition.

\begin{assumption}[Stackelberg Polyak-Łojasiewicz condition] \label{plass}
There exists a positive constant $\mu$ such that for any $(\theta, \phi) \in \Theta\times \Phi$ and $\xi \in \Xi$, the following inequalities hold: $\frac{1}{2\mu} \|\nabla_{\phi } \mathcal{L}_{\D} (\theta, \phi, \xi)\|^2 \geq  \max_{\phi} \LL_\D (\theta, \phi, \xi) -  \mathcal{L}_{\D}(\theta, \phi, \xi)$, $\
    \frac{1}{2\mu} \|\nabla_{\phi } \LL_{\A} (\theta, \phi, \xi)\|^2 \geq 
   \max_{\phi} \LL_\A (\theta, \phi, \xi) -  \LL_\A(\theta, \phi, \xi) $.
\end{assumption}

To analyze the algorithmic performance, we require some standard assumptions on batch reinforcement learning, along with some additional information about the parameter space and function structure, which ensures that the approximation error induced by inner loops is decreasing. These assumptions, commonly used in the literature \cite{fallah2021convergence}, are all stated in \Cref{ass:grad}.

\begin{assumption}
\label{ass:grad}
 The following holds true throughout the progression of \Cref{algo:meta-sl}:
\begin{enumerate}
    \item The compact  space $\Theta$ has diameter bounded by $D_\Theta \geq \sup_{\theta_1, \theta_2  \in \Theta} \|\theta_1 - \theta_2\|$; the initialization $\theta^0$ admits at most $D_V$ function gap, i.e., $D_V:= \max_{\theta \in \Theta} V(\theta) - V(\theta^0)$.
     \item The following relation holds: $  0 < \mu <  -cL_{22}$. 
  \item For any $\theta, \phi$ and attacker type $\xi \in \Xi$, the stochastic policy gradient estimators are bounded, unbiased (for attacker), with $\frac{\sigma^2}{N_b}$ bounded variances, i.e., 
  \begin{equation*}
  \begin{aligned}
         \|\nabla_\theta J_\D (\theta, \phi, \xi) \|^2 \leq G^2, & \quad \|\nabla_\phi J_\A (\theta, \phi, \xi) \|^2 \leq G^2, \\
          \E [\hat{\nabla}_{\phi} J_\A(\theta^t, \phi^t_\xi, \xi) & - \nabla_{\phi} J_\A(\theta^t, \phi^t_\xi, \xi)] = 0, \\
         \E [   \| \hat{\nabla}_{\phi} J_\A(\theta^t, \phi^t_\xi, \xi)  - & \nabla_{\phi}  J_\A(\theta^t, \phi^t_\xi, \xi) \|^2 ] \leq  \frac{\sigma^2}{N_b},  \\
     \E [   \| \hat{\nabla}_{\theta} J_\D(\theta^t, \phi^t_\xi, \xi) - & \nabla_{\theta} J_\D(\theta^t, \phi^t_\xi, \xi) \|^2 ] \leq  \frac{\sigma^2}{N_b}.
  \end{aligned}
  \end{equation*}

\end{enumerate}
\end{assumption}

\begin{theorem}\label{thm:main}
     Under assumptions~\ref{ass:sc}, \ref{asslip}, \ref{plass}, and \ref{ass:grad} for any given $\varepsilon \in (0,1)$, let the learning rates $\kappa_\A = \frac{1}{L_{22}}$ and $\kappa_\D = \frac{1}{L}$, $\rho = 1 + \frac{\mu }{c L_{22}} \in (0, 1)$, $L = L_{11} + \frac{L_{12}L_{21}}{\mu}$, $ \bar{L} = \max \{ L_{11}, L_{12}, L_{22}, L_{21}, V_{\infty} \}$ where $V_{\infty} :=  \max\{ \max\|\nabla V(\theta)\|, 1 \}$; let the batch size and inner-loop iteration size be properly chosen,  
     \begin{equation*}
        \begin{aligned}
             N_\A & \geq \frac{1}{\log \rho^{-1}}\log \frac{32 D_V^2 (2V_{\infty} + LD_{\Theta})^4 \bar{L} |c|G^2    }{ L^2 \mu^2\varepsilon^4},  \\
             N_b  & \geq \frac{32 \mu L_{21}^2 D_V^2 ( 2 V_{\infty} +  L D_{\Theta} )^4}{ |c| L_{22}^2 \sigma^2  \bar{L} L\varepsilon^4} ;
        \end{aligned}
     \end{equation*}
      then, \Cref{algo:meta-sl} finds a $\varepsilon$-meta-FOSE within $N_\D$ iterations in expectation, where explicitly, 
     \begin{equation*}
          N_\D \geq \frac{ 4D_V (2 V_{\infty} + LD_{\Theta})^2}{L\varepsilon^2 } .
     \end{equation*}
    which leads to the sample complexity $N_\A\sim \mathcal{O}(\log\epsilon^{-1})$, $N_b\sim \mathcal{O}(\epsilon^{-4})$, and $N_\D \sim \mathcal{O}(\varepsilon^{-2})$.
\end{theorem}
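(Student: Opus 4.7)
The plan is to recast Algorithm~\ref{algo:meta-sl} as a stochastic gradient method on the value function $V(\theta)$, and then glue together three standard ingredients: a Danskin-type envelope identity, a PL-driven geometric contraction in the inner loop, and a smooth non-convex descent lemma in the outer loop. The key enabler is Assumption~\ref{ass:sc}: since $r_\D = c\, r_\A + d$ with $c<0$, any $\phi$ that maximizes $\LL_\A(\theta,\cdot,\xi)$ also minimizes $\LL_\D(\theta,\cdot,\xi)$, so a Danskin-type argument (the ``Danskin-type result'' already quoted in the paper) will give
\[
\nabla V(\theta) \;=\; \mathbb{E}_{\xi\sim Q}\!\left[\nabla_\theta \LL_\D(\theta,\phi^*_\xi(\theta),\xi)\right],
\]
with \emph{no} implicit Jacobian through $\phi^*_\xi(\theta)$, thereby eliminating the Hessian-inversion step that would otherwise appear through the implicit function theorem. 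This is precisely what allows the first-order scheme (lines~13--15 plus the Reptile update on line~17) to be consistent.

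Next I would upgrade the per-type Lipschitz bounds of Assumption~\ref{asslip} into a global smoothness bound for $V$. The standard bilevel calculation, using PL on $\LL_\A$ (Assumption~\ref{plass}) to control $\|\phi^*_\xi(\theta_1)-\phi^*_\xi(\theta_2)\|\le (L_{21}/\mu)\|\theta_1-\theta_2\|$ together with the cross-Lipschitz constants $L_{11},L_{12}$, yields that $V$ is $L$-smooth with $L = L_{11}+L_{12}L_{21}/\mu$, which is exactly the constant appearing in the theorem. For the inner loop, a classical PL argument applied to projected gradient ascent on $\LL_\A$ with step size $\kappa_\A=1/L_{22}$ gives the geometric contraction
\[
\max_\phi \LL_\A(\theta^t,\phi,\xi) - \LL_\A(\theta^t,\phi^t_\xi(k),\xi)
\;\le\; \rho^{\,k}\bigl(\max_\phi \LL_\A(\theta^t,\phi,\xi) - \LL_\A(\theta^t,\phi^t_\xi(0),\xi)\bigr),
\]
with $\rho = 1 + \mu/(c L_{22})\in(0,1)$ (well-defined by Assumption~\ref{ass:grad}.2). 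Re-applying PL converts this into an $\ell_2$ bound $\|\phi^t_\xi(N_\A)-\phi^*_\xi(\theta^t)\|^2 = \tilde O(\rho^{N_\A})$, up to a variance term scaling as $\sigma^2/N_b$ inherited from the stochastic gradient of $\LL_\A$.

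For the outer loop, I would apply the standard smooth non-convex descent lemma to $V$ along the iterates of line~25 with step size $\kappa_\D=1/L$:
\[
V(\theta^{t+1}) \;\ge\; V(\theta^t) + \tfrac{1}{2L}\|\nabla V(\theta^t)\|^2 - \tfrac{1}{L}\,\|e^t\|^2,
\]
where $e^t$ is the gap between the Reptile surrogate gradient $\frac{1}{K}\sum_\xi \hat\nabla_\theta J_\D(\theta^t_\xi,\phi^t_\xi(N_\A),\xi)$ and the true $\nabla V(\theta^t)$. Decomposing $e^t$ into (i) the meta-bias $\|\theta^t_\xi-\theta^t\|=O(\eta G)$, (ii) the inner-loop bias $\|\phi^t_\xi(N_\A)-\phi^*_\xi\|$, and (iii) the minibatch variance $\sigma^2/N_b$, each term is controlled by $L_{12}$ or $L_{21}$ together with the uniform bound $V_\infty$ on $\|\nabla V\|$. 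Telescoping from $t=0$ to $N_\D-1$ and invoking $D_V=\max V-V(\theta^0)$ yields
\[
\min_{t<N_\D}\mathbb{E}\|\nabla V(\theta^t)\|^2 \;\le\; \frac{2L D_V}{N_\D} + C_1 \rho^{N_\A} + \frac{C_2}{N_b},
\]
for explicit constants $C_1,C_2$ polynomial in $\bar L, V_\infty, D_\Theta, D_V, G$. Balancing these three terms at $\varepsilon^2$ by choosing $N_\A=\Omega(\log \varepsilon^{-1})$, $N_b=\Omega(\varepsilon^{-4})$ and $N_\D=\Omega(\varepsilon^{-2})$ produces the stated thresholds; finally, translating $\|\nabla V(\theta^t)\|\le\varepsilon$ into the pair of inequalities of Definition~\ref{def:meta-fose} uses the Danskin identity again for the $\theta$-condition, and the PL bound $\|\nabla_\phi\LL_\A\|\le\varepsilon$ follows directly from the inner-loop contraction at termination.

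The principal obstacle will be bookkeeping the constants inside $C_1$ and $C_2$ in a form that matches the theorem's explicit thresholds. The sensitive step is coupling the meta-bias $\|\theta^t_\xi-\theta^t\|$ with the inner-loop bias through the cross-term $L_{12}L_{21}/\mu$: a naive bound inflates $C_1$ by a factor that destroys the stated $O(\log \varepsilon^{-1})$ rate for $N_\A$. The fix is to expand $\hat\nabla_\theta J_\D(\theta^t_\xi,\phi^t_\xi(N_\A),\xi)-\nabla V(\theta^t)$ around $(\theta^t,\phi^*_\xi)$ and apply Assumption~\ref{asslip} one factor at a time, absorbing the $\eta G$ meta-bias into the universal constant $2V_\infty+LD_\Theta$ that appears squared in the theorem statement; the remainder then telescopes cleanly.
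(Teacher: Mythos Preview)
Your proposal is essentially correct and mirrors the paper's proof architecture: the Danskin-type envelope via strict competitiveness (the paper's Lemma~\ref{liplemma}), the PL-driven geometric contraction of the inner loop (the paper's Lemma~\ref{lemma:approxgradv}), and the smooth non-convex descent plus telescoping in the outer loop (the paper's Theorem~\ref{thm:outer}) all appear in the same order with the same constants $L=L_{11}+L_{12}L_{21}/\mu$ and $\rho=1+\mu/(cL_{22})$.

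The one substantive divergence is that you analyze the \emph{Reptile} surrogate (line~17), whereas the paper's theoretical analysis is explicitly for the \emph{Debiased} estimator (line~20), as stated in the text just before Assumption~\ref{ass:sc}. Consequently the paper's error decomposition has no meta-bias term $\|\theta^t_\xi-\theta^t\|=O(\eta G)$: the adaptation step is already folded into $\LL_\D$ by definition, so $\hat\nabla_\theta\ell_\D(\theta^t,\phi^t(N_\A))$ is treated directly as a stochastic estimator of $\nabla_\theta\ell_\D$, and the only bias is $z_t=\nabla_\theta\ell_\D(\theta^t,\phi^t(N_\A))-\nabla V(\theta^t)$ arising from inner-loop suboptimality. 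Your version therefore carries an extra error source that the paper's stated constants need not absorb; the fix you sketch (pushing the $\eta G$ term into the $(2V_\infty+LD_\Theta)$ factor) is plausible for a Reptile analysis but is not what the paper actually proves. A second, purely cosmetic difference: the paper works directly with the FOSE directional quantity $e_t=\langle\nabla_\theta\ell_\D(\theta^t,\phi^t(N_\A)),\theta-\theta^t\rangle$ and telescopes to $\frac{1}{N_\D}\sum_t\E[e_t^2]\le\varepsilon^2$, rather than bounding $\|\nabla V(\theta^t)\|$ first and translating; in the unconstrained setting these are equivalent.
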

Finally, we conclude this section by analyzing the meta-SG defense's generalization ability when the learned meta policy is exposed to attacks unseen in the pre-training. \Cref{main_prop:generalization} asserts that meta-SG is generalizable to the unseen attacks, given that the unseen is not distant from those seen. {To formalize the generalization error, let the fixed attack policies $\phi_i$, $i=1, \ldots, m+1$ corresponding to each attack type $\{\xi_i\}^{m+1}_{i=1}$. For each $\theta \in \Theta$, we define 
\begin{align*}
     \hat{V} (\theta) & :=   \frac{1}{m} \sum_{i=1}^m  \E_{\tau \sim q^{\theta}_i} J_\D ( \theta  +   \eta \hat{\nabla}_{\theta} J_\D ( \tau ) , \phi_{i}, \xi_i), \\
      \hat{V}_{m+1} (\theta) & := \E_{\tau \sim q^{\theta}_{m+1}} J_\D ( \theta  +   \eta \hat{\nabla}_{\theta} J_\D ( \tau ) , \phi_{m+1}, \xi_{m+1}), 
\end{align*}
where $q^{\theta}_i (\cdot) \triangleq   q(\theta, \phi_i)  $ is the trajectory distribution determined by state dependent policies $\pi_\D(\cdot|s; \theta)$, $\pi_{\xi_i}( \cdot |s; {\phi_i} )$ and transition kernel $\mathcal{T}$. 
Let $\|\cdot\|_{TV}$ be the total variation, $d_i$ be the residue marginal factors of $q^\theta_i(\cdot)$ after removing $\pi_\D$, i.e., $d_i = \prod_{t=1}^{H-1}\pi_{\xi_i}(a_\A^t | s^t, \phi_i) \prod_{t=1}^{H-1} \mathcal{T}(s^{t+1}|s^t, a_\D^t, a_\A^t)$, we have generalization characterization in \Cref{main_prop:generalization}.} 
\begin{proposition}
    \label{main_prop:generalization}
  Under assumptions~ \ref{ass:sc}, \ref{asslip}, \ref{plass}, and \ref{ass:grad}, fixing a policy $\theta \in \Theta$, 
 \begin{equation*}
      | \hat{V}_{m+1} (\theta) - \hat{V} (\theta) | \leq C(d_{m+1},  \{d_{i}\}_{i=1}^m ), 
 \end{equation*}
where the distance function $C$ depends on the total variation between $d_{m+1}$ and $\{ d_i\}_{i=1}^m$:
\begin{align*}
     C( d_{m+1}, \{d_i\}_{i=1}^m )  & : = \frac{2\eta G^2}{m} \sum_{i=1}^m\| d_{m+1} -  d_i \|_{TV}\\ & \quad  + \frac{1 - \gamma^H}{ 1 - \gamma} \| d_{m+1} - \frac{1}{m} \sum_{i=1}^m d_i\|_{TV} .
\end{align*}
\end{proposition}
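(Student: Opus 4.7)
The plan is to decompose $\hat V_{m+1}(\theta)-\hat V(\theta)$ into a ``baseline'' term and a ``one-step adaptation'' term, and bound each by a change-of-measure inequality against an appropriate total-variation distance. Introducing the unadapted values $V_j(\theta):=J_\D(\theta,\phi_j,\xi_j)$ and the adaptation increments $u_j(\tau):=J_\D(\theta+\eta\hat{\nabla}_\theta J_\D(\tau),\phi_j,\xi_j)-J_\D(\theta,\phi_j,\xi_j)$, I write
\begin{equation*}
    \hat V_{m+1}(\theta)-\hat V(\theta)=\Bigl[V_{m+1}(\theta)-\tfrac{1}{m}\sum_{i=1}^m V_i(\theta)\Bigr]+\Bigl[\E_{\tau\sim q^\theta_{m+1}}[u_{m+1}(\tau)]-\tfrac{1}{m}\sum_{i=1}^m \E_{\tau\sim q^\theta_i}[u_i(\tau)]\Bigr],
\end{equation*}
and bound each bracket independently.

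For the baseline bracket, the factorization $q^\theta_j(\tau)=\pi_\D(\tau;\theta)\cdot d_j(\tau)$ lets me rewrite $V_j(\theta)=\int R_\D(\tau')\pi_\D(\tau';\theta)d_j(\tau')d\tau'$. Since $\pi_\D(\tau;\theta)\le 1$ as a product of conditional probabilities, a direct TV computation gives $\|q^\theta_{m+1}-\tfrac{1}{m}\sum_i q^\theta_i\|_{TV}\le \|d_{m+1}-\bar d\|_{TV}$ with $\bar d:=\tfrac{1}{m}\sum_i d_i$; combined with the horizon-$H$ discounted reward bound $|R_\D(\tau')|\le (1-\gamma^H)/(1-\gamma)$, this yields the second summand of $C$.

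For the adaptation bracket, Assumption~\ref{ass:grad} provides $\|\nabla_\theta J_\D\|\le G$ and $\|\hat{\nabla}_\theta J_\D(\tau)\|\le G$, so by the mean-value theorem $|u_j(\tau)|\le \eta G^2$ pointwise in $\tau$. Rewriting the bracket as $\tfrac{1}{m}\sum_i(\E_{q^\theta_{m+1}}[u_{m+1}]-\E_{q^\theta_i}[u_i])$ and adding and subtracting $\E_{q^\theta_i}[u_{m+1}]$, a pure change of measure produces $|\E_{q^\theta_{m+1}}[u_{m+1}]-\E_{q^\theta_i}[u_{m+1}]|\le \eta G^2\|d_{m+1}-d_i\|_{TV}$, contributing half of the first summand of $C$ upon averaging over $i$.

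The main obstacle is the residual cross-term $\E_{q^\theta_i}[u_{m+1}-u_i]$, in which both the measure and the integrand depend on the attack index. My plan is to unroll $u_j(\tau)$ via a Taylor expansion along the segment $[\theta,\theta+\eta\hat{\nabla}_\theta J_\D(\tau)]$ and invoke the likelihood-ratio identity $\nabla_\theta J_\D(\theta',\phi_j,\xi_j)=\int R_\D(\tau')\nabla_\theta\pi_\D(\tau';\theta')d_j(\tau')d\tau'$ to expose the $d_j$-factor; the difference $u_{m+1}-u_i$ then collapses to an inner integral against $d_{m+1}-d_i$, and together with the extra factor $\eta G$ from the initial gradient step supplies the remaining $\eta G^2\|d_{m+1}-d_i\|_{TV}$ contribution. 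Summing both halves produces the first summand of $C$ and, together with the baseline bound, completes the proof.
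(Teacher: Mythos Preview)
Your decomposition is \emph{different} from the paper's, and the cross-term is where the plan runs into trouble. The paper does not subtract off the unadapted value; instead it inserts the mixed term $\frac{1}{m}\sum_{i}\E_{\tau\sim q^{\theta}_{m+1}}J_\D(\theta'(\tau),\phi_i,\xi_i)$ and splits the difference into
\[
(i)\;=\;\E_{\tau\sim q^{\theta}_{m+1}}\Bigl[J_\D(\theta'(\tau),\phi_{m+1},\xi_{m+1})-\tfrac{1}{m}\sum_i J_\D(\theta'(\tau),\phi_i,\xi_i)\Bigr],
\qquad
(ii)\;=\;\tfrac{1}{m}\sum_i\Bigl[\E_{q^{\theta}_{m+1}}-\E_{q^{\theta}_i}\Bigr]J_\D(\theta'(\cdot),\phi_i,\xi_i).
\]
In $(ii)$ only the \emph{sampling law} of the adaptation trajectory changes, so a maximal-coupling argument gives directly $|(ii)_i|\le 2\eta G^2\|d_{m+1}-d_i\|_{TV}$ (the integrand vanishes on the coupling event $\tau_{m+1}=\tau_i$, and is at most $2\eta G^2$ off it). In $(i)$ only the \emph{attack index} changes while $\theta'(\tau)$ is frozen, so the inner value is an expectation under $q^{\theta'(\tau)}_j$ and the bound $\frac{1-\gamma^H}{1-\gamma}\|d_{m+1}-\bar d\|_{TV}$ follows from a single TV inequality. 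There is no cross-term.

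In your decomposition, the change-of-measure piece $|\E_{q_{m+1}}[u_{m+1}]-\E_{q_i}[u_{m+1}]|$ already costs $2\eta G^2\|d_{m+1}-d_i\|_{TV}$ (since $u_{m+1}$ ranges in $[-\eta G^2,\eta G^2]$ and the paper's TV is normalized via $\sup_{0\le f\le 1}$), not $\eta G^2$ as you claim. Your cross-term $\E_{q_i}[u_{m+1}-u_i]$, after the Taylor/likelihood-ratio manipulation you describe, needs a \emph{pointwise} bound $\|R_\D(\tau')\nabla_\theta\log\pi_\D(\tau';\theta)\|\le G$ on the single-sample score; granting that, one gets $\|\nabla_\theta J_\D(\theta_s,\phi_{m+1},\xi_{m+1})-\nabla_\theta J_\D(\theta_s,\phi_i,\xi_i)\|\le 2G\|d_{m+1}-d_i\|_{TV}$, hence another $2\eta G^2\|d_{m+1}-d_i\|_{TV}$. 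The two pieces sum to $4\eta G^2$, twice the stated coefficient. So your outline recovers the qualitative bound but not the constant in the proposition; the paper's interpolation-by-attack-type (rather than by adapted-versus-unadapted) is precisely what avoids paying twice.
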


\section{Experiments}
\label{sec:exp}

\subsection{Experiment Settings}
\label{sec:exp-setup}
\paragraph{Dataset}
Our experiments are conducted on MNIST~\cite{lecun1998gradient} and CIFAR-10~\cite{krizhevsky2009learning} datasets with a CNN classifier and ResNet-18 model respectively (see Appendix~\ref{app:exp-setup} for details). 
We consider horizontal FL and adopt 
the approach introduced in~\cite{fang2020local} to measure the diversity of local data distributions among clients. Let the dataset encompass $C$ classes, such as $C = 10$ for datasets like MNIST and CIFAR-10. Client devices are divided into $C$ groups (with $M$ attackers evenly distributed among these groups). Each group is allocated $1/C$ of the training samples in the following manner: a training instance labeled as $c$ is assigned to the $c$-th group with a probability of $q \geq 1/C$, while being assigned to every other group with a probability of $(1-q)/(C-1)$. Within each group, instances are evenly distributed among clients. A higher value of $q$ signifies a greater \textit{non-i.i.d.} level. By default, we set $q=0.5$ as the standard \textit{non-i.i.d.} level. We assume the server holds a small amount of root data randomly sampled from the  the collection of all client datasets $U$ ($100$ for MNIST and $200$ for CIFAR-10).

\paragraph{Baselines}  We evaluate our meta-RL and meta-SG defenses under the following untargeted model poisoning attacks including IPM~\cite{xie2020fall} (with scaling factor $2$), LMP~\cite{fang2020local}, RL~\cite{li2022learning}, and backdoor attacks including BFL~\cite{bagdasaryan2020backdoor} (with poisoning ratio $1$), DBA~\cite{xie2019dba} (with $4$ sub-triggers evenly distributed to 
malicious clients and poisoning ratio $0.5$), BRL~\cite{li2023learning}, and a mix of attacks from the two categories (see~\cref{table:attacks} for all attacks' categories in Appendix~\ref{app:exp-setup}). 
We consider various strong defenses as baselines, including training-stage defenses such as 
Coordinate-wise trimmed mean/median~\cite{yin2018byzantine}, Norm bounding~\cite{sun2019can}, FLTrust~\cite{cao2020FLTrust}, Krum~\cite{blanchard2017machine}, and post-training stage defenses such as NeuroClip~\cite{wang2023mm} and Prun~\cite{wu2020mitigating} and the selected combination of them. We utilize the Twin Delayed DDPG (TD3)~\cite{fujimoto2018addressing} algorithm to train both attacker's and defender's policies. 
We use the following default parameters: number of devices $=100$, number of malicious clients for untargeted model poisoning attack $=10$, number of malicious clients for backdoor attack $=5$ ($20$ for DBA), client subsampling rate $=10\%$, number of FL epochs $=500$ ($1000$) for MNIST (CIFAR-10). We fix the initial model and the random seeds for client subsampling and local data sampling for fair comparisons. 
The details of the experiment setup and additional results are provided in Appendix~\ref{app:exp-setup} and \ref{app:add-exp}, respectively.

\subsection{Experiment Results}
\begin{table*}[!h]
\centering
\begin{tabular}{@{\extracolsep{1pt}}lccccccc}
            \toprule 
        Acc/Bac & \multicolumn{1}{c}{FedAvg} & \multicolumn{1}{c}{Trimed Mean} & \multicolumn{1}{c}{FLTrust}& \multicolumn{1}{c}{ClipMed} & \multicolumn{1}{c}{FLTrust+NC} & \multicolumn{1}{c}{Meta-RL (ours)}\\
        \midrule
                NA & $0.7082/0.1$ & $0.7093/0.1078$  & $0.7139/0.1066$ & $0.5280/0.1212$ & $0.7100/0.1061$  & $\mathbf{0.7053/0.0999}$  \\
                IPM & $0.1369/\mathbf{0.0312}$ & $0.6542/0.1174$ & $0.6828/0.1054$ & $0.5172/0.1220$ & $0.6656/0.0971$ & $\mathbf{0.6862}/0.0637$ \\
                LMP & $0.1115/0.1174$ & $0.6224/0.1033$  & $0.7071/0.099$ & $ 0.5144/0.121$ & $0.7075/0.104$ & $\mathbf{0.7109/0.037}$ \\
                BFL & $0.7137/1.0$ & $0.7034/1.0$ & $\mathbf{0.7145}/1.0$ & $0.5198/0.5337$ & $0.7100/0.1061$ & $0.7106/\mathbf{0.0143}$ \\
                DBA & $0.7007/0.7815$ & $0.6904/0.7737$ & $\mathbf{0.7010}/0.8048$ & $0.4935/0.6261$ & $0.6618/0.9946$ & $0.6699/\mathbf{0.2838}$ \\
                IPM+BFL & $0.3104/0.8222$ & $0.6415/1.0$ & $0.6911/1.0$ & $0.5097/0.5776$ & $0.6817/0.0267$ & $\mathbf{0.6949/0.0025}$  \\
                LMP+DBA & $0.1124/\mathbf{0.1817}$ & $0.6444/0.7311$ & $\mathbf{0.7007}/0.7620$ & $0.4841/0.6342$ & $0.6032/0.8422$ & $0.6934/0.2136$  \\
        \bottomrule 
\end{tabular}
\caption{Comparisons of average global model accuracy (acc: higher the better) and backdoor accuracy (bac: lower the better) after 500  rounds under single/multiple type attacks on CIFAR-10. All parameters are set as default, and random seeds are fixed. Boldfaced numbers indicate the best performance.}
\label{table:1}
\end{table*}
\paragraph{Effectiveness against single/multiple types of attacks.}
We examine the defense performance of our meta-RL compared with other defense combinations in \Cref{table:1} based on average global model accuracy after 500 FL rounds on CIFAR-10, which measures the success of defense and learning speed ignoring the randomness influence (corner-case updates, bias data, etc.) at the bargaining stage of FL. The meta-RL first learns a meta-defense policy from the attack domain involving $\{$NA, IPM, LMP, BFL, DBA$\}$, then adapts it to the real single/mixed attack. 
We observe that multiple types of attacks may intervene with each other (e.g., IPM+BFL, LMP+DBA), which makes it impossible to manually address the entangled attacks.
It is not surprising to see FedAvg~\cite{mcmahan2017communication} and defenses specifically designed for untargeted attacks (i.e., Trimmed mean, FLTrust) fail to defend backdoor attacks (i.e., BFL, DBA) due to the huge deviation of defense objective from the optimum.
For a fair comparison, we further manually tune the norm threshold (more results in Appendix~\ref{app:add-exp}) from $[0.01, 0.02, 0.05, 0.1, 0.2, 0.5, 1]$ for ClipMed (i.e., Norm bounding + Coordinate-wise Median) and clipping range from $[2:2:10]$ for FLTrust + NeuroClip to achieve the best performance to balance the global model and backdoor accuracy in linear form (i.e., Acc - Bac). Intuitively, a tight threshold/range has better performance in defending against backdoor attacks, yet will hinder or even damage the FL progress. On the other hand, a loose threshold/range fails to defend backdoor injection.  Nevertheless, manually tuning in real-world FL scenarios is nearly impossible due to the limited knowledge of the ongoing environment and the presence of asymmetric adversarial information. Instead of suffering from the above concerns and exponential growth of parameter combination possibilities, our data-driven meta-RL approach can automatically tune multiple parameters at each round. Targeting the cumulative defense rewards, the RL approach naturally holds more flexibility than myopic optimization.
\begin{figure*}[t]
 \vspace{-5pt}
  \centering
  \begin{subfigure}{0.24\textwidth}
      \centering
          \includegraphics[width=\textwidth]{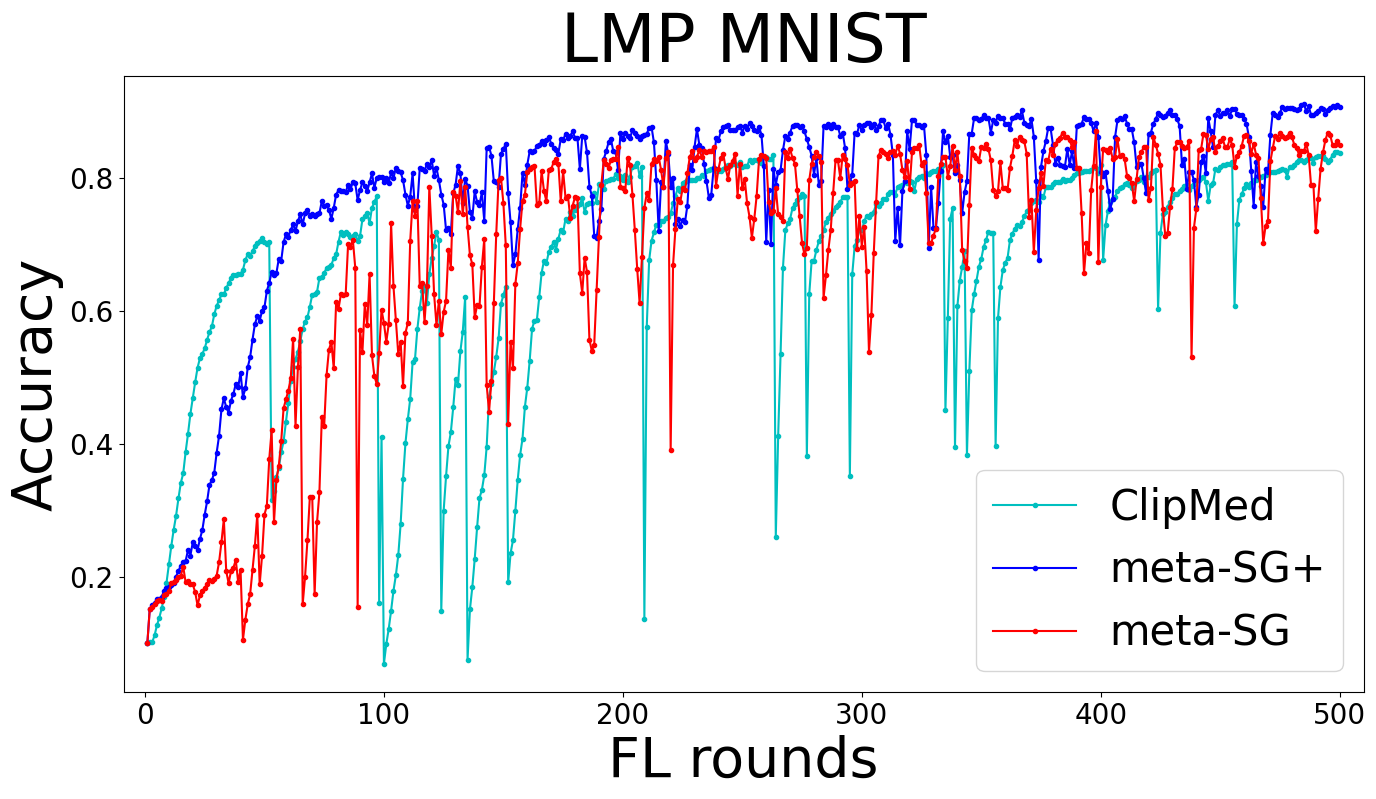}
  \end{subfigure}
  \hfill
    \begin{subfigure}{0.24\textwidth}
      \centering
          \includegraphics[width=\textwidth]{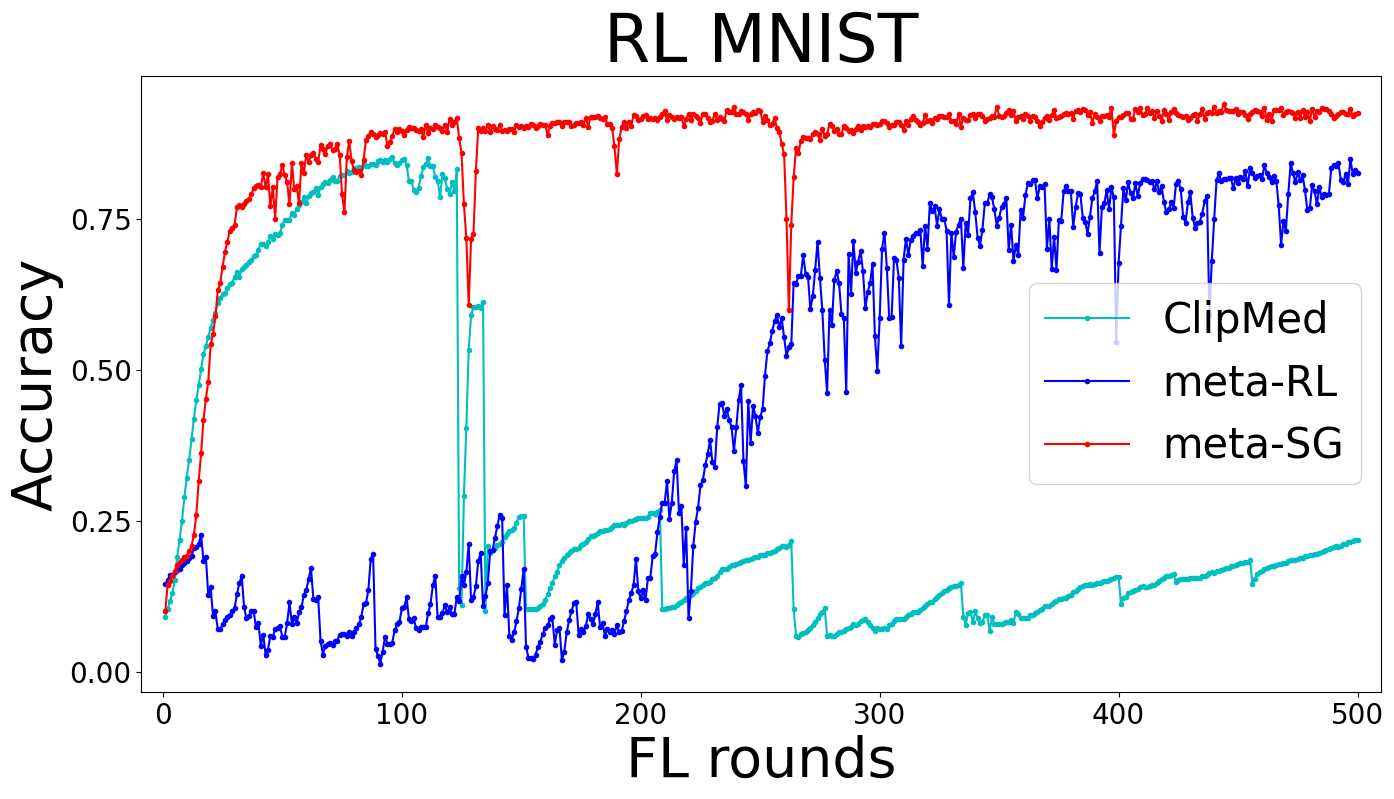}
    \end{subfigure}
  \hfill
    \begin{subfigure}{0.24\textwidth}
      \centering
          \includegraphics[width=\textwidth]{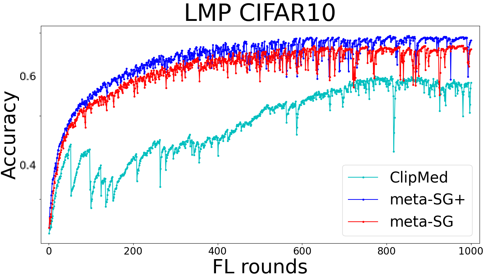}
    \end{subfigure}
  \hfill
    \begin{subfigure}{0.24\textwidth}
      \centering
          \includegraphics[width=\textwidth]{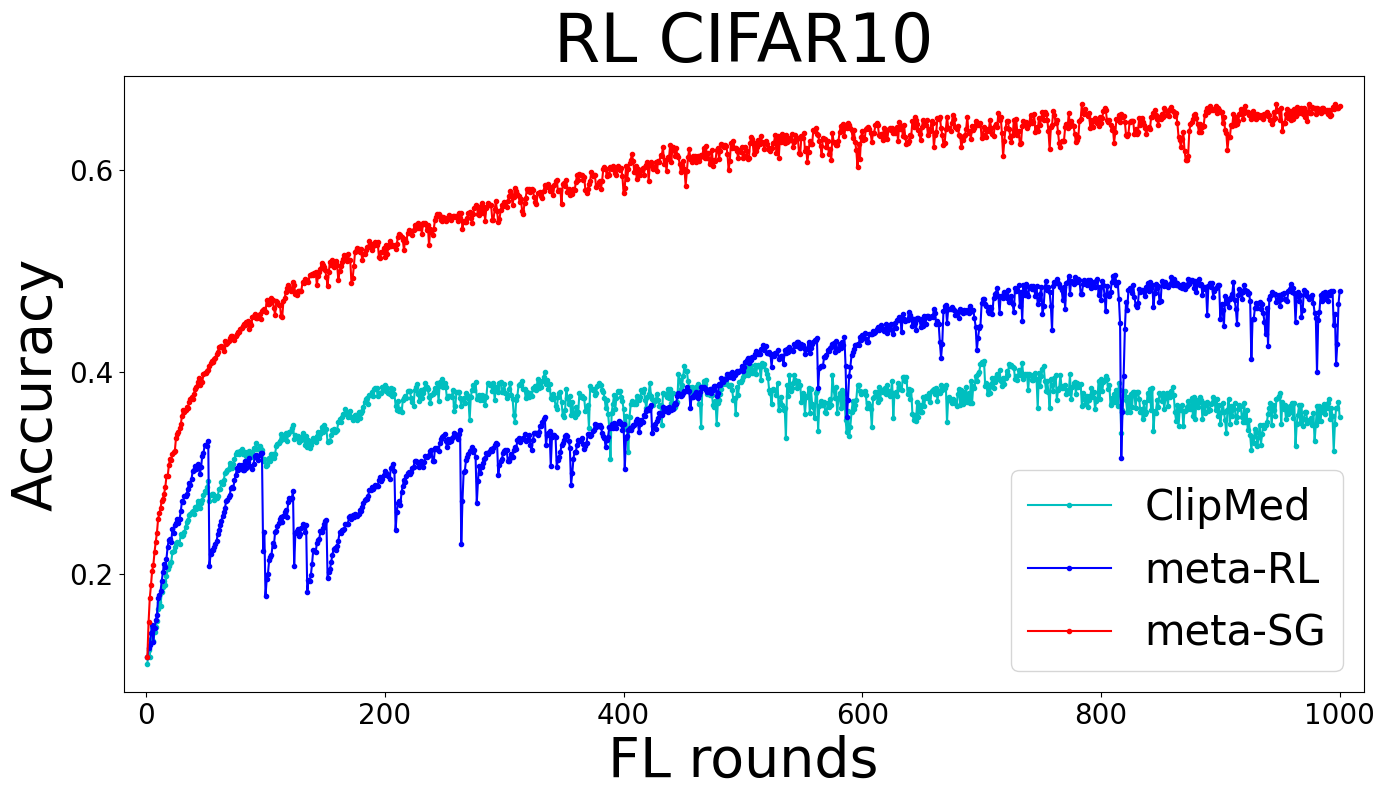}
    \end{subfigure}

 	\caption{\small Comparisons of defenses against untargeted model poisoning attacks (i.e., LMP and RL) on MNIST and CIFAR-10. All parameters are set as default and random seeds are fixed.}
 	\vspace{-0.4cm}
 	\label{fig:untargeted}
\end{figure*}

\paragraph{Adaptation to uncertain/unknown attacks.}
To evaluate the necessity and efficiency of adaptation from the meta-SG policy in the face of unknown attacks, we plot the global model accuracy graph over FL epochs. The meta-RL pre-trained from non-adaptive attack domain $\{$NA, IPM, LMP, BFL, DBA$\}$ (RL attack is unknown), while meta-SG pre-train from interacting with a group of RL attacks initially target on $\{$FedAvg, Coordinate-wise Median, Norm bounding, Krum, FLTrust $\}$ (LMP is unknown). The meta-SG plus (i.e., meta-SG+) is a pre-trained model from the combined attack domain of the above two. All three defenses then adapt to the real FL environments under LMP or RL attacks.
As shown in Fig.~\ref{fig:untargeted}, the meta-SG can quickly adapt to both uncertain RL-based adaptive attacks (attack action is time-varying during FL) and unknown LMP attacks, while meta-RL can only slowly adapt to or fail to adapt to the unseen RL-based adaptive attacks on MNIST and CIFAT-10 respectively. In addition, the first and the third figures in Fig.~\ref{fig:untargeted} demonstrate the power of meta-SG against unknown LMP attacks, even if LMP is not directly used during its pre-training stage. The results are only slightly worse than meta-SG plus, where LMP is seen during pre-training. 
Similar observations are given under IPM in Appendix~\ref{app:add-exp}. 

\paragraph{Defender's knowledge of backdoor attacks.} 
We consider two settings: 1) the server knows the backdoor trigger but is uncertain about the target label, and 2) the server knows the target label but not the backdoor trigger. 
In the former case, the meta-SG first pre-trains the defense policy with RL attacks using a known fixed global pattern (see Fig.~\ref{fig:cifar10_dba}) targeting all 10 classes in CIFAR-10, then adapts with an RL-based backdoor attack using the same trigger targeting class 0 (airplane), with results shown in the third figure of Fig.~\ref{fig:backdoors}.  
In the latter case where the defender does not know the true backdoor trigger used by the attacker, we implement the GAN-based model~\cite{doan2021lira} to generate the worst-case triggers (see Fig.~\ref{fig:gan_trigger}) targeting one known label (truck). The meta-SG will train a defense policy with the RL-based backdoor attacks using the worst-case triggers targeting the known label,
then adapt with a RL-based backdoor attack using a fixed global pattern (see Fig.~\ref{fig:cifar10_dba}) targeting the known label in the real FL environment (results shown in the fourth graph in Fig.~\ref{fig:backdoors}.
We call the two above cases \textbf{blackbox} settings since the defender misses key backdoor information and solely depends on their own generated data/triggers w/o inverting/reversing during online adaptation.   
In the \textbf{whitebox} setting, the server knows the backdoor trigger pattern (global) and the targeted label (truck), and is trained by true clients' data. The corresponding results are in the first two figures of Fig.~\ref{fig:backdoors}, which show the upper bound performance of meta-SG and may not be practical in a real FL environment. Post-training defenses alone (i.e., NeuroClip and Prun) and combined defenses (i.e., ClipMed and FLTrust+NC) are susceptible to RL-based attacks once the defense mechanism is known. On the other hand, as depicted in Fig.~\ref{fig:backdoors}, we demonstrate that our whitebox meta-SG approach is capable of effectively eliminating the backdoor influence while preserving high main task accuracy simultaneously, while blackbox meta-SG against uncertain labels is unstable since the meta-policy will occasionally target a wrong label, even with adaptation and blackbox meta-SG against unknown trigger is not robust enough as its backdoor accuracy still reaches nearly $50\%$ at the end of FL training.

\begin{figure*}[t]
    \centering
    \centering
  \begin{subfigure}{0.24\textwidth}
      \centering
          \includegraphics[width=\textwidth]{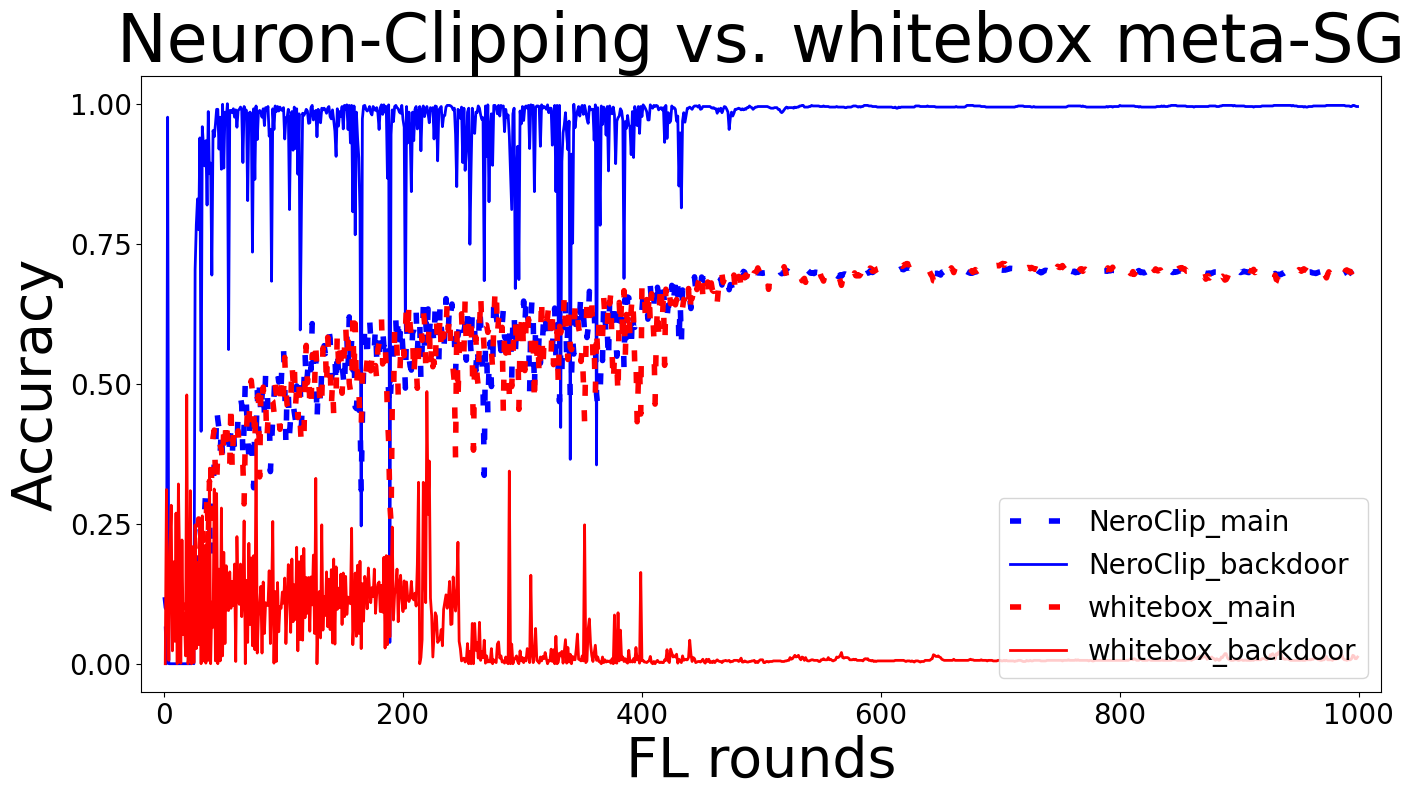}
  \end{subfigure}
  \hfill
    \begin{subfigure}{0.24\textwidth}
      \centering
          \includegraphics[width=\textwidth]{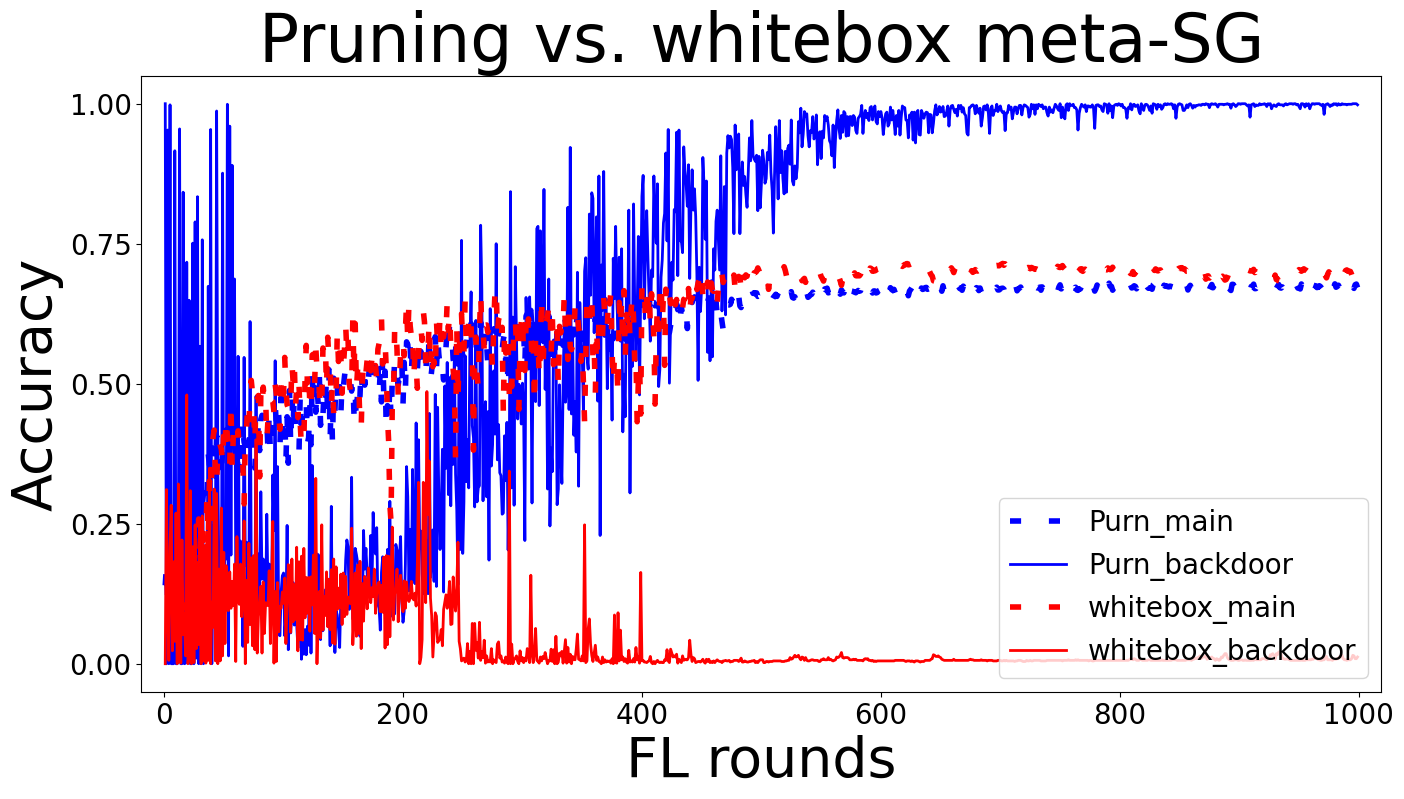}
    \end{subfigure}
  \hfill
    \begin{subfigure}{0.24\textwidth}
      \centering
          \includegraphics[width=\textwidth]
          {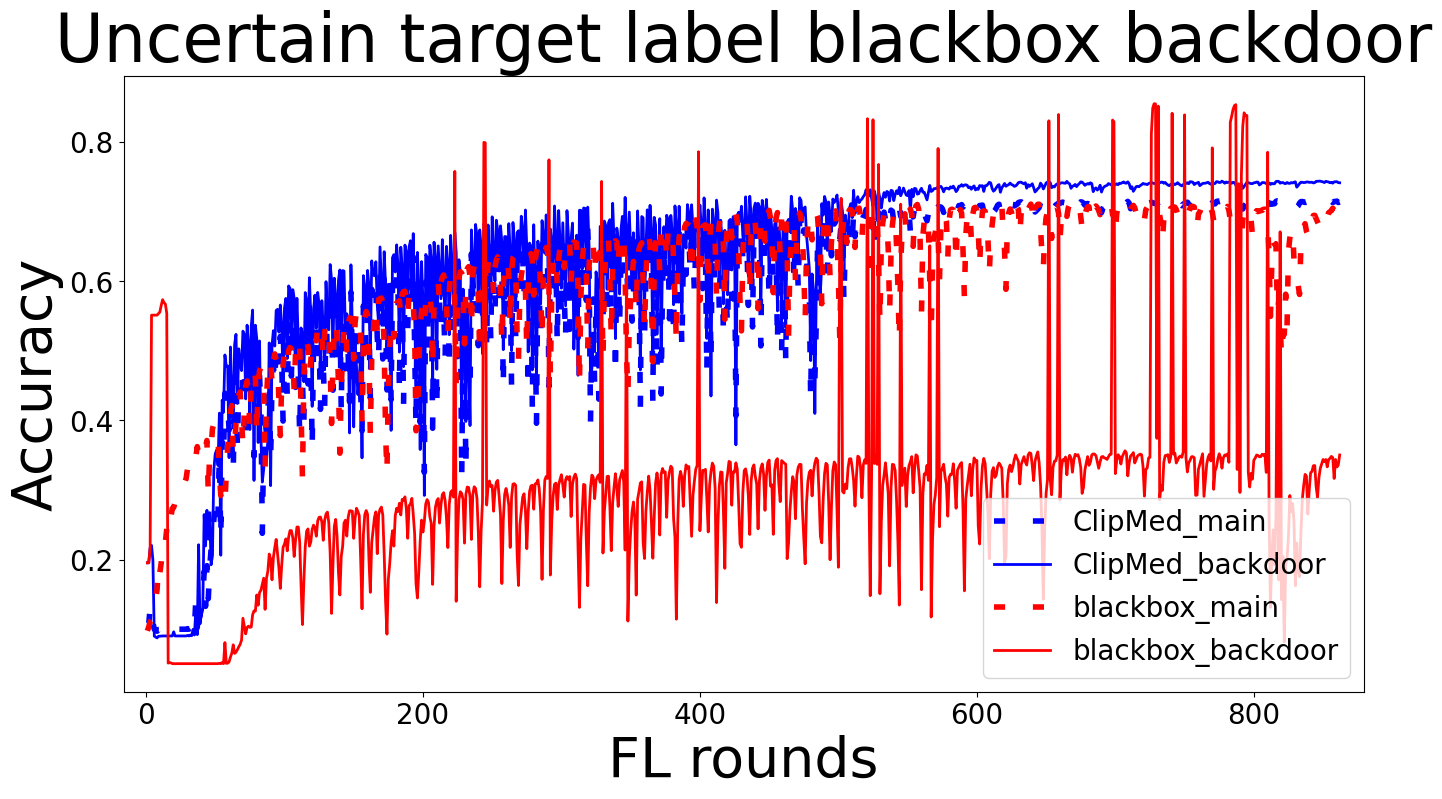}
    \end{subfigure}
  \hfill
    \begin{subfigure}{0.25\textwidth}
      \centering
          \includegraphics[width=\textwidth]
          {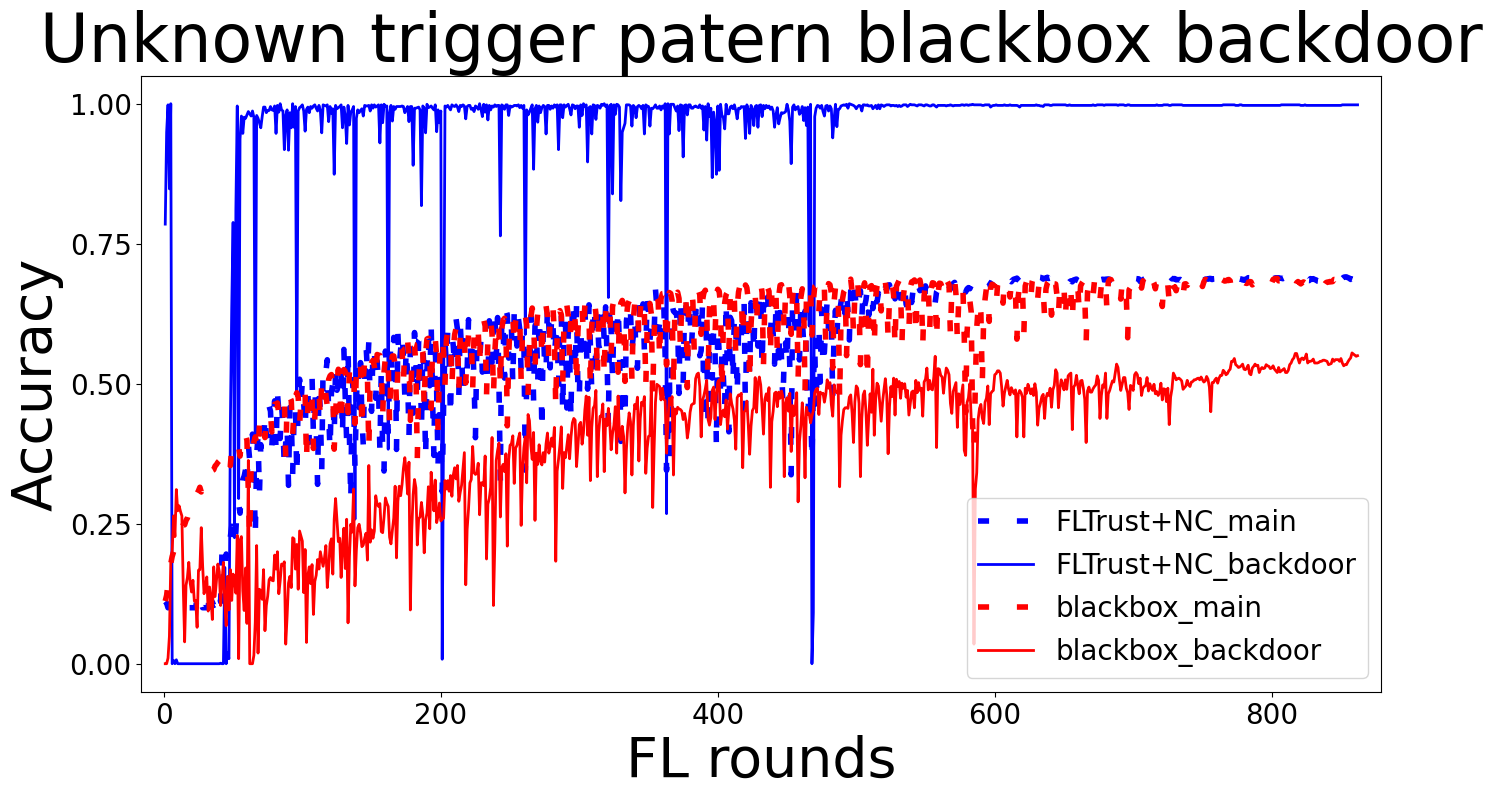}
    \end{subfigure}
  
  \caption{\small{Comparisons of baseline defenses, i.e., NeuroClip, Prun, ClipMed, FLTrust+NeuroClip (from left to right) and whitebox/blackbox meta-SG under RL-based backdoor attack (BRL) on CIFAR-10. The BRLs are trained before FL round 0 against the associate defenses (i.e., NeuroClip, Prun, ClipMed, FLTrust+NC and meta-policy of meta-SG). Other parameters are set as default and all random seeds are fixed.}}\label{fig:backdoors}
\end{figure*}

\begin{figure*}[t]
 	\vspace{-5pt}
 	\centering
 		\subfloat[]{%
 		\includegraphics[width=0.25\textwidth]
                {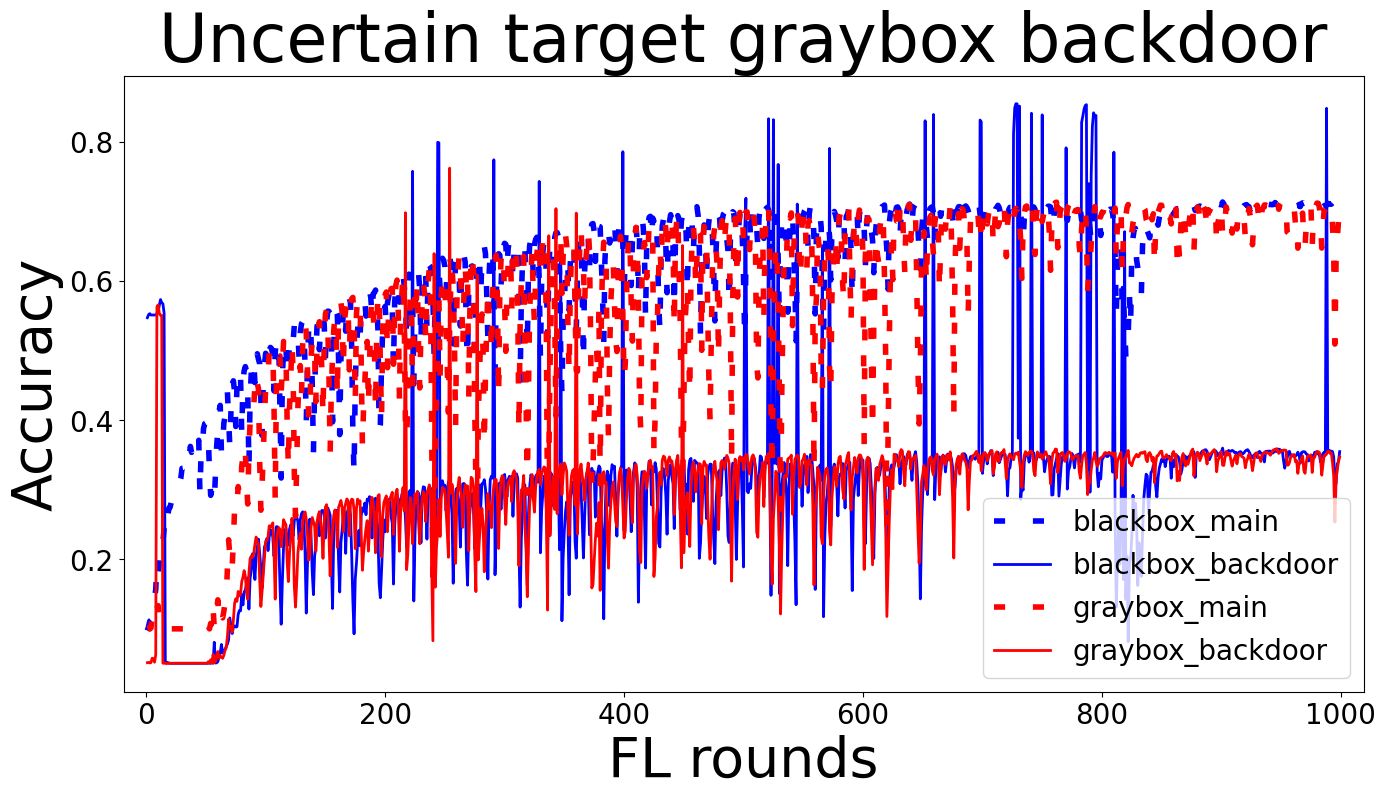}%
                
 	  }
 	      \subfloat[]{%
 		\includegraphics[width=0.2525\textwidth]
                {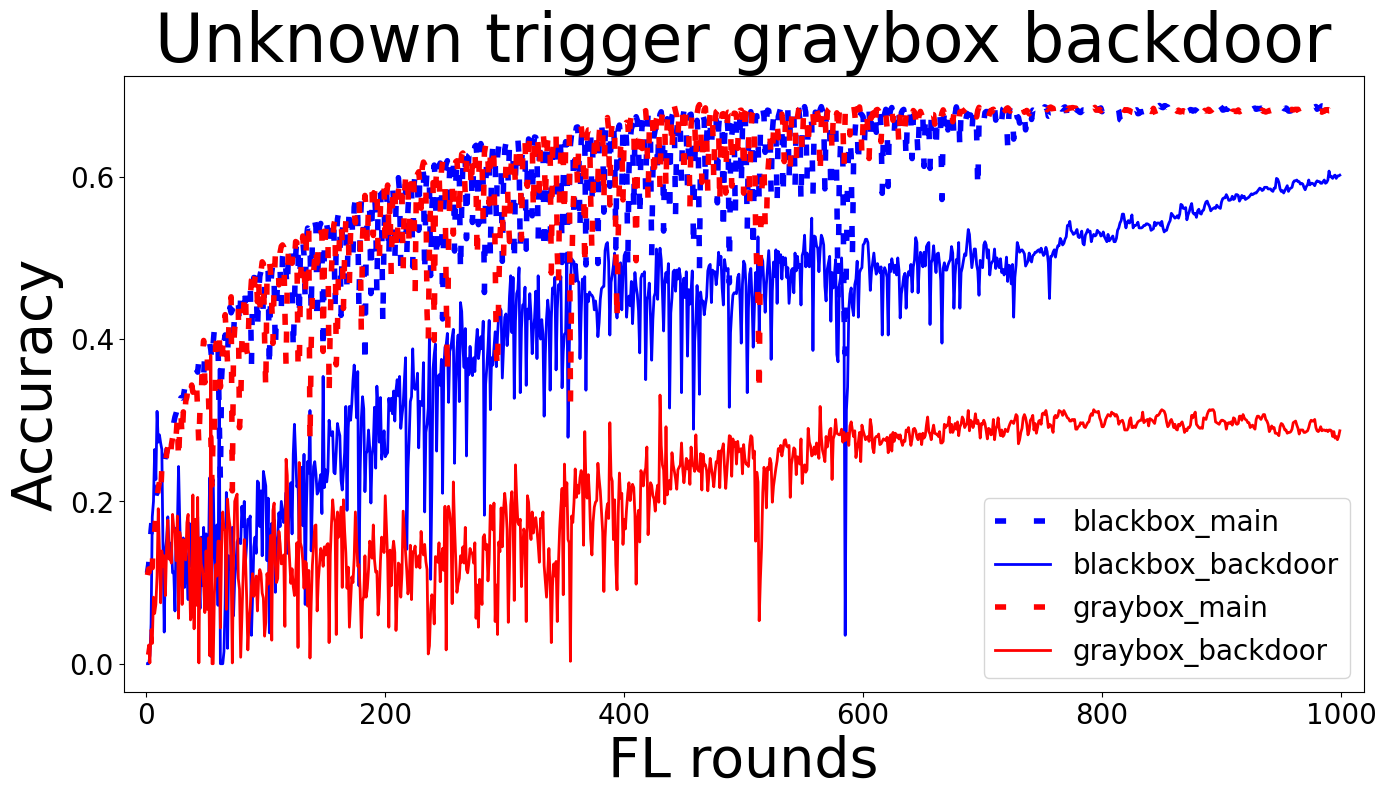}%
        
 	}
 	      \subfloat[]{%
 		\includegraphics[width=0.235\textwidth]{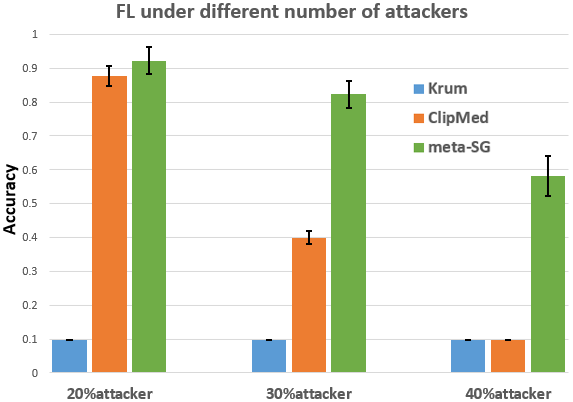}%
 	}
 	      \subfloat[]{%
 		\includegraphics[width=0.25\textwidth]{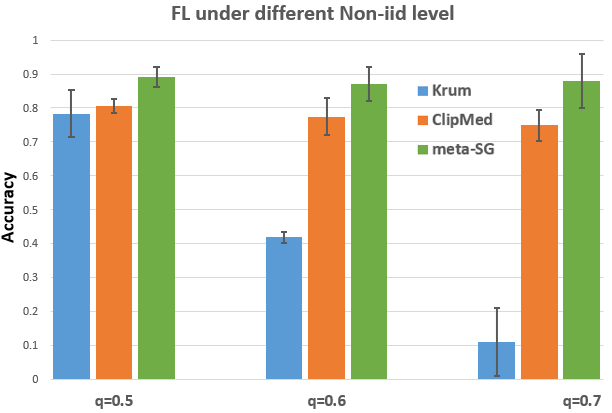}%
 	}

 	 \caption{\small Ablation studies. (a)-(b): uncertain backdoor target and unknown backdoor triggers, where the meta-policies are trained by worst-case triggers generated from GAN-based models~\cite{doan2021lira} or targeting multiple labels on CIFAR-10 during pre-training and utilizing inverting gradient~\cite{geiping2020inverting} and reverse engineering~\cite{wang2019neural} during online adaptation. 
     (c)-(d): meta-RL tested by the number of malicious clients in $[20\%, 30\%, 40\%]$ and non-$i.i.d.$ level in $q=[0.5, 0.6, 0.7]$ on MNIST compared with Krum and ClipMed under LMP attack. Other parameters are set as default.}
 	\vspace{-0.4cm}
 	\label{fig:ablation}
 \end{figure*}

\paragraph{Importance of inverting/reversing methods.}
In the ablation study, we examine a practical and relatively well-performed \textbf{graybox} meta-SG. The graybox meta-SG has the same setting as \textbf{blackbox} meta-SG during pre-training as describe in \Cref{subsec:bsmg}, but utilizes inverting gradient~\cite{geiping2020inverting} and reverse engineering~\cite{wang2019neural} during online adaptation to learn clients' data and backdoor trigger in a way without breaking the privacy condition in FL. The graybox approach only learns ambiguous data from clients, then applies data augmentation (e.g., noise, distortion) and combines them with previously generated data before using. 
Fig.~\ref{fig:ablation}(a) illustrates that graybox meta-SG exhibits a more stable and robust mitigation of the backdoor attack compared to blackbox meta-SG. Furthermore, in Fig.~\ref{fig:ablation}(b), graybox meta-SG demonstrates a significant reduction in the impact of the backdoor attack, achieving nearly a $70\%$ mitigation, outperforming blackbox meta-SG.

\paragraph{Number of malicious clients/Non-i.i.d. level.} 
Here we apply our meta-RL to study the impact of inaccurate knowledge of the number of malicious clients and the non-$i.i.d.$ level of clients' local data distribution. With rough knowledge that the number of malicious clients is in the range of $5\%$-$50\%$, the meta-SG will pre-train on LMP attacks with malicious clients $[5:5:50]$, and adapt to three cases with $20\%$, $30\%$, and $40\%$ malicious clients in online adaptation, respectively. Similarly, when the \textit{non-i.i.d.} level is between $0.1$-$1$, the meta-SG will pre-train on LMP attacks with \textit{non-i.i.d.} level $[0.1:0.1:1]$ and adapt to q$=0.5,0.6,0.7$ in online adaptation. As illustrated in Fig.~\ref{fig:ablation}(c) and~\ref{fig:ablation}(d), meta-SG reaches the highest model accuracy for all numbers of malicious clients and non-$i.i.d.$ levels under LMP.

\section{Related Works}
\label{sec:related}
\subsection{Poisoning/Backdoor Attacks and Defenses in FL} Several defensive strategies against model poisoning attacks broadly fall into two categories. The first category includes
robust-aggregation-based defenses encompassing techniques such as dimension-wise filtering. These methods treat each dimension of local updates individually, as explored in studies by~\cite{bernstein2018signsgd,yin2018byzantine}. Another strategy is client-wise filtering, aiming to limit or entirely eliminate the influence of clients who might harbor malicious intent. This approach has been examined in the works of~\cite{blanchard2017machine,pillutla2022robust,sun2019can}. Some defensive methods necessitate the server having access to a minimal amount of root data, as detailed in the study by~\cite{cao2020FLTrust}. 
Naive backdoor attacks are limited by even simple defenses like norm-bounding ~\cite{sun2019can} and weak differential private ~\cite{geyer2017differentially} defenses.
Despite the sophisticated design of state-of-the-art non-adaptive backdoor attacks against federated learning, post-training stage defenses ~\cite{wu2020mitigating,nguyen2021flame,rieger2022deepsight} can still effectively erase suspicious neurons/parameters in the backdoored model.

\subsection{Defenses Against Unknown Attacks} Prior works have attempted to tackle the challenge of incomplete information on attack types through two distinct approaches. The first approach is the ``infer-then-counter'' approach, where the hidden information regarding the attacks is first inferred through observations. For example, one can infer the backdoor triggers through reverse engineering using model weights \cite{bolun19neural-cleanse}, based on which the backdoor attacks can be mitigated \cite{chen21fedreverse}. The inference helps adapt the defense to the present malicious attacks. However, this inference-based adaptation requires prior knowledge of the potential attacks (i.e., backdoor attacks) and does not directly lend itself to mixed/adaptive attacks. Moreover, the inference and adaptation are offline, unable to counter online adaptive backdoor attack \cite{li2022learning}. Even though some concurrent efforts have attempted online inference \cite{li24col-aisg, hammer24col}, they mainly target a small set of attack types and do not scale.  The other approach has explored the notion of robustness that prepares the defender for the worst case \cite{sinha2018certifying, sengupta20bsmg}, which often leads to a Stackelberg game (SG) between the defender and the attacker. Yet, such a Stackelberg approach often leads to conservative defense, lacking adaptability. Most relevant to our meta-RL-based defense is \cite{tao23ztd}, where meta-learning-based zero-trust network defense is proposed to combat unknown attacks. However, the attack setup is relatively simple and does not consider adaptive attacks as in our work.

\subsection{Usage of Public Dataset in FL}
In FL, it is a common practice to use a small globally shared dataset to enhance robustness (see Section 3.1.1 of~\cite{kairouz2021advances}). This dataset could come from a publicly available proxy source, a separate non-sensitive dataset, or a processed version of the raw data as suggested by~\cite{wang2018dataset}. The use of such public datasets is widely accepted in the FL community ~\cite{kairouz2021advances, fang2020local, huang2022learn, yoshida2020hybrid}. For example, systems like Sageflow~\cite{park2021sageflow}, Zeno~\cite{xie2019zeno}, and Zeno++~\cite{xie2020zeno++} leverage public data at the server to address adversarial threats. Additionally, having public data available on the server supports collaborative model training with formal differential or hybrid differential privacy guarantees~\cite{kairouz2021advances, avent2017blender}.~\cite{avent2017blender} introduces hybrid differential privacy, where some users voluntarily share their data. Many companies, such as Mozilla and Google, utilize testers with high mutual trust who opt into less stringent privacy models compared to the average end-user. 

\section{Conclusion and Future Work}

We have proposed a meta-Stackelberg framework to tackle attacks of uncertain or unknown types in federated learning through data-driven adaptation. 
The proposed meta-Stackelberg equilibrium (\Cref{def:meta-se}) approach is computationally tractable and strategically adaptable, targeting mixed and adaptive attacks under incomplete information. We have developed meta-Stackelberg learning (\Cref{algo:meta-sl}) to approximate the $\varepsilon$-meta-equilibrium, which avoids second-order Hessian computation and matches the state-of-the-art sample complexity: $O(\varepsilon^{-2})$ gradient iterations with $O(\varepsilon^{-4})$ samples per iteration.
 
This paper opens up several directions for future work. One direction is to incorporate additional state-of-the-art defense algorithms to counter more potent attacks, such as edge-case attacks~\cite{wang2020attack}, as well as other attack types, such as privacy-leakage attacks~\cite{lyu2022privacy}. It is also worth exploring more sophisticated application scenarios, including NLP and large generative models, since our meta-Stackelberg framework essentially addresses incomplete information in defense design, which is ubiquitous in adversarial machine learning. Our framework could be further improved by including a client-side defense mechanism that closely mirrors real-world scenarios, replacing the current processes of self-data generation.

\bibliographystyle{ieeetr}
\bibliography{IEEEabrv, reference}
\clearpage
\appendices
\section{Theoretical Proofs}
\label{app:theory}
\setcounter{equation}{0}
\renewcommand{\theequation}{\Alph{section}.\arabic{equation}}
\subsection{Existence of Meta-SE}
We first establish the existence of the first-order meta-SE in the Markov game defined in \Cref{subsec:bsmg}. The proof idea here is we can always augment the original utility function to be strongly concave and then leverage the fixed point theorem to prove the existence of the auxiliary game $( \tilde{\ell}_{\mathcal{D}},   \{\tilde{\ell}_{\xi}\}_{\xi \in \Xi})$. Our proof is inspired by a similar idea in \cite[Proposition 4.2]{pang2016unified}.  
Note that this proof technique does not help us to investigate a second-order equilibrium condition since the Hessians $ \nabla^2 \tilde{\ell}_{\mathcal{D}}$ and $ \nabla^2 \ell_{\mathcal{D}}$ are not equal.

\begin{proof}
 Denote by $\Phi^{|\Xi|}$ the product space of $\Phi$ up to $|\Xi|$ times. It is clear that $\Theta \times \Phi^{|\Xi|}$ is compact and convex as both $\Theta$ and $\Phi$ are compact and convex. Let $\phi \in \Phi^{|\Xi|}, \phi_\xi \in \Phi$ be the type-aggregated and type $\xi$ attacker's strategy, respectively.  Consider twice continuously differentiable utility functions $\ell_\D(\theta, \phi) := \E_{\xi \sim Q} \LL_\D (\theta, \phi_\xi, \xi)$ and $\ell_\xi (\theta, \phi) := \LL_\A(\theta, \phi_\xi, \xi)$ for all $\xi \in \Xi$. 
 Then, there exists a constant $\gamma_c > 0$, such that the auxiliary utility functions $\forall \xi \in \Xi$:
 \begin{equation}\label{auxiliaryutility}
\begin{aligned}
      \tilde{\ell}_\D (\theta; (\theta^{\prime}, \phi^{\prime}))  & :=  \ell_\D (\theta, \phi^{\prime})  - \frac{\gamma_c}{2} \| \theta -  \theta^{\prime}\|^2
      \\  \tilde{\ell}_\xi (\phi_\xi ; ( \theta^{\prime}, \phi^{\prime})) & := \ell_\xi(\theta^{\prime}, (\phi_{\xi }, \phi^{\prime}_{-\xi}) )   - \frac{\gamma_c}{2} \| \phi_\xi -  \phi^{\prime}_\xi\|^2, 
\end{aligned}
\end{equation}
Define the self-map $h: \Theta \times \Phi^{|\Xi|} \to \Theta \times \Phi^{|\Xi|}$ with $h(\theta^{\prime},\phi^{\prime}) := ( \bar{\theta}, \bar{\phi})$, where $\bar{\theta}$ and $\bar{\phi}$ are functions of $(\theta^{\prime},\phi^{\prime})$ given by
\begin{align*}
      \bar{\theta}(\theta^{\prime},\phi^{\prime})  & = \argmax_{\theta \in \Theta}  \tilde{\ell}_\D (\theta; (\theta^{\prime}, \phi^{\prime}) ), 
     \\  
     \bar{\phi}_\xi( \theta^{\prime},\phi^{\prime})  & = \argmax_{\phi_\xi \in \Phi}  \tilde{\ell}_\xi (\phi_\xi ; ( \theta^{\prime}, \phi^{\prime}) ) . 
\end{align*}
Due to compactness of $\Theta \times \Phi^{|\Xi}$, $h$ is well-defined. 
By strong concavity of $ \tilde{\ell}_\D (\cdot; (\theta^{\prime},\phi^{\prime}))$ and $\tilde{\ell}_\xi (\cdot; (\theta^{\prime},\phi^{\prime}))$, it follows from Berge's maximum theorem \cite[Thm 17.31]{aliprantis06} that $h$ is a upper semi-continuous self-mapping from $\Theta \times \Phi^{|\Xi|}$ to itself. By Kakutani's fixed point theorem \cite{glicksberg52kakutani}, there exists at least one $(\theta^*, \phi^*) \in \Theta \times \Phi^{|\Xi|}$ such that $h(\theta^*, \phi^*) = (\theta^*, \phi^*)$, which satisfies the following inequalities (due to the $\argmax$):
\begin{align*}
      \langle \nabla_{\theta} \tilde{\ell}_\D (\theta^*; (\theta^*, \phi^*)) ,  \theta  - \theta^* \rangle & \leq 0 \\ 
      \langle \nabla_{\phi_\xi} \tilde{\ell}_\xi (\theta^*; (\theta^*, \phi^*)) ,  \phi_\xi  - \phi^*_\xi \rangle & \leq 0
\end{align*}

Then, one can verify that $(\theta^*, \phi^*)$ is a meta-FOSE of the meta-SG with utility function $\ell_\D$ and $\ell_\xi$, $\xi \in \Xi$, in view of the following equalities: 
\begin{align*}
   \langle \nabla_{\theta} \tilde{\ell}_\D (\theta^*; (\theta^*, \phi^*)) ,  \theta  - \theta^* \rangle & = \langle \nabla_{\theta} \ell_\D ( \theta^*, \phi^* )  ,  \theta  - \theta^* \rangle   \\
 \langle \nabla_{\phi_\xi} \tilde{\ell}_\xi (\theta^*; (\theta^*, \phi^*)) ,  \phi_\xi  - \phi^*_\xi \rangle & = \langle \nabla_{\phi_\xi } \ell_\xi ( \theta^*, \phi^* )  ,  \phi_\xi  - \phi^*_\xi \rangle .
\end{align*}

the conditions of meta-FOSE are satisfied.
Therefore, the equilibrium conditions for meta-SG with utility functions $\tilde{\ell}_\D$ and $\{\tilde{\ell}_\xi\}_{\xi \in \Xi}$ are the same as with utility functions $\ell_\D$ and $\{\ell_\xi\}_{\xi \in \Xi}$, hence the claim follows.

\end{proof}

\subsection{Proofs: Non-Asymptotic Analysis}

In the sequel, we prove the sample complexity results in \Cref{thm:main}. In addition, we assume, for analytical simplicity, that all types of attackers are unconstrained, i.e., $\Phi$ is the Euclidean space with proper finite dimension. We first recall the following Lipschitz conditions in \cref{asslip}: the functions $\mathcal{L}_{\D}$ and $\LL_\A$ are continuously diffrentiable in both $\theta$ and $\phi$. Furthermore, there exists constants $L_{11}$, $L_{12}, L_{21}$, and $L_{22}$ such that
 for all $\theta, \theta_1, \theta_2 \in \Theta$ and $\phi, \phi_1, \phi_2 \in \Phi$, we have, for any $\xi \in \Xi$,
\begin{flalign}
    \left\|\nabla_{\theta} \mathcal{L}_\D  \left(\theta_{1}, \phi, \xi\right)-\nabla_{\theta} \mathcal{L}_\D\left(\theta_{2}, \phi, \xi\right)\right\| &  \leq L_{11}\left\|\theta_{1}-\theta_{2}\right\|  \label{lip1}
    \\ 
    \left\|\nabla_{\phi} \mathcal{L}_\D \left(\theta, \phi_{1}, \xi\right)-\nabla_{\phi} \mathcal{L}_\D \left(\theta, \phi_{2}, \xi\right)\right\| & \leq L_{22}\left\|\phi_{1}-\phi_{2}\right\|  \label{lip2}
     \\ 
     \left\|\nabla_{\theta} \mathcal{L}_\D \left(\theta, \phi_{1}, \xi\right)-\nabla_{\theta} \mathcal{L}_\D \left(\theta, \phi_{2}, \xi\right)\right\| & \leq L_{12}\left\|\phi_{1}-\phi_{2}\right\|  \label{lip3}
    \\ 
    \left\|\nabla_{\phi} \mathcal{L}_\D \left(\theta_{1}, \phi, \xi\right)-\nabla_{\phi} \mathcal{L}_\D \left(\theta_{2}, \phi, \xi\right)\right\|  & \leq L_{21}\left\|\theta_{1}-\theta_{2}\right\|  \label{lip4}  
    \\ 
    \| \nabla_{\phi} \LL_\A (\theta, \phi_1, \xi ) - \nabla_{\phi} \LL_\A (\theta, \phi_2, \xi)\| & \leq  L_{21} \| \phi_1 - \phi_2\|  \label{lip5}
    \\ 
    \| \nabla_{\phi} \LL_\A (\theta_1, \phi, \xi ) - \nabla_{\phi} \LL_\A (\theta_2, \phi, \xi)\| & \leq  L_{21} \| \theta_1 - \theta_2\| . \label{lip6}
\end{flalign}

\begin{lemma}[Implicit Function Theorem (IFT) for Meta-SG adapted from \cite{still2018lectures}] \label{lemma:ift}
 Suppose for $(\bar{\theta}, \bar{\phi} ) \in \Theta \times \Phi$, $\xi \in \Xi$, we have $\nabla_{\phi} \LL_\A (\bar{\theta}, \bar{\phi}, \xi)= 0 $, and the Hessian $\nabla^2_{\phi} \LL_\A (\bar{\theta}, \bar{\phi}, \xi)$ is non-singular. Then, there exists a neighborhood $B_{\varepsilon}(\bar{\theta}), \varepsilon > 0$ centered around $\bar{\theta}$ and a $C^1$-function $\phi(\cdot): B_{\varepsilon}(\bar{\theta}) \to \Phi$ such that near $(\bar{\theta}, \bar{\phi})$ the solution set $\{ (\theta, \phi) \in \Theta \times \Phi: 
 \nabla_{\phi} \LL_\A (\theta, \phi, \xi)= 0 \}$ is a $C^1$-manifold locally near $(\bar{\theta}, \bar{\phi})$. The gradient $\nabla_{\theta} \phi(\theta)$ is given by $- ( \nabla^2_{ \phi}\LL_\A (\theta, \phi, \xi))^{-1}  \nabla^2_{ \phi \theta} \LL_\A (\theta, \phi, \xi)$.

\end{lemma}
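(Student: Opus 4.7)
The plan is to invoke the classical implicit function theorem applied to the first-order stationarity mapping
\[
 G(\theta, \phi) := \nabla_{\phi} \LL_{\A}(\theta, \phi, \xi),
\]
viewed as a map $G: \Theta \times \Phi \to \Phi$. By hypothesis, $G(\bar{\theta},\bar{\phi})=0$ and the partial Jacobian $D_{\phi}G(\bar{\theta},\bar{\phi}) = \nabla^2_{\phi}\LL_\A(\bar{\theta},\bar{\phi},\xi)$ is non-singular. Local $C^1$-regularity of $G$ is inherited from the continuous differentiability of $\nabla\LL_\A$ together with the Lipschitz bounds (\ref{lip5})--(\ref{lip6}) in \cref{asslip}, which make $D_\theta G$ and $D_\phi G$ continuous in a neighborhood of $(\bar\theta,\bar\phi)$. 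With these in place, the classical IFT produces an $\varepsilon>0$, a neighborhood $B_{\varepsilon}(\bar\theta)\subset\Theta$, and a $C^1$ map $\phi(\cdot):B_{\varepsilon}(\bar\theta)\to\Phi$ with $\phi(\bar\theta)=\bar\phi$ and $G(\theta,\phi(\theta))=0$ on $B_{\varepsilon}(\bar\theta)$.

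Next, I would realize the solution locus $\{(\theta,\phi):\nabla_\phi\LL_\A(\theta,\phi,\xi)=0\}$ as the graph of $\phi(\cdot)$ near $(\bar\theta,\bar\phi)$, which is immediate from the uniqueness clause of the IFT (shrinking $\varepsilon$ if necessary so that $D_\phi G$ remains invertible throughout). This identification exhibits the solution set as a $C^1$-manifold of dimension $\dim\Theta$ locally. The gradient formula then follows by implicit differentiation of the identity $G(\theta,\phi(\theta))\equiv 0$: applying the chain rule in $\theta$ gives
\[
  \nabla^2_{\phi\theta}\LL_\A(\theta,\phi(\theta),\xi) + \nabla^2_{\phi}\LL_\A(\theta,\phi(\theta),\xi)\,\nabla_\theta\phi(\theta) = 0,
\]
and inverting the locally non-singular Hessian yields the claimed expression for $\nabla_\theta\phi(\theta)$.

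The principal subtlety, and the main obstacle, is a regularity gap: the explicit hypotheses in \cref{asslip} only guarantee Lipschitz continuity of first-order gradients, whereas the lemma presupposes a well-defined Hessian and cross-Hessian. To close this gap one should either strengthen \cref{asslip} to $\LL_\A\in C^2$ in a neighborhood of $(\bar\theta,\bar\phi)$ (a natural requirement in the policy-gradient setting, where these quantities are integrals of smooth reward functions against the policy distribution), or invoke the Lipschitzian/Clarke implicit function theorem, which delivers a Lipschitz selection $\phi(\cdot)$ together with a Clarke generalized Jacobian of the same algebraic form almost everywhere. Either route preserves the statement; since the downstream analysis uses the Danskin-type argument underwritten by \cref{ass:sc} to bypass Hessian inversion entirely, the lemma's role is mainly to certify the well-posedness of the best-response map.
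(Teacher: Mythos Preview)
Your argument is correct and is precisely the standard route: the paper does not supply its own proof of this lemma but simply states it as an adaptation of the classical implicit function theorem from \cite{still2018lectures}, so your application of the IFT to $G(\theta,\phi)=\nabla_\phi\LL_\A(\theta,\phi,\xi)$ together with implicit differentiation of $G(\theta,\phi(\theta))\equiv 0$ is exactly what the cited reference would give. Your observation about the regularity gap (Lipschitz gradients in \cref{asslip} versus the $C^2$ smoothness implicitly needed for the Hessian and cross-Hessian) is a genuine and accurate point that the paper glosses over; as you note, the downstream analysis sidesteps this by using \cref{ass:sc} and the Danskin-type argument in \cref{liplemma} rather than the Hessian formula itself.
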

{Recall that in (\ref{eq:meta-se}), $\phi_\xi^*\in\argmax \mathbb{E}_{\tau\sim q}J_\A(\theta+\eta \hat{\nabla}_\theta J_\D(\tau), \phi,  {\xi})$ is a function of $\theta$. Hence, the defender's value function is given by $V(\theta):=\mathbb{E}_{{\xi}\sim Q}\mathbb{E}_{\tau\sim q}[J_\D(\theta+\eta \hat{\nabla}_\theta J_\D(\tau),\phi^*_{\xi}(\theta), {\xi})]$. The computation of $\nabla_\theta V(\theta)$ naturally involves $\nabla_\theta \phi_\xi^*(\theta)$, which brings in the Hessian terms as stated in the lemma above. However, thanks to the strict competitiveness assumption, the following lemma implies that $\nabla_\theta V$ can be calculated using the evaluation of $\phi_\xi$ without Hessian.}

\begin{lemma}\label{liplemma}
    Under assumptions \ref{asslip}, \ref{plass},  there exists $\{ \phi_\xi: \phi_\xi \in \arg\max_{\phi}  \LL_{\A} (\theta, \phi, \xi) \}_{\xi \in \Xi}$, such that the gradient of value function $V(\theta)$  can be written as:
    \begin{equation}\label{eq:firstorderv}
        \nabla_{\theta}  V(\theta) =  \nabla_{\theta} \mathbb{E}_{\xi \sim Q, \tau \sim q} J_\D (\theta + \eta \hat{\nabla}_{\theta} J_{\D}(\tau), \phi_\xi, \xi) .
    \end{equation}
    Moreover, the function $V(\theta)$ is $L$-Lipschitz-smooth, where $L = L_{11} + \frac{L_{12}L_{21}}{\mu}$
    \begin{equation*}
        \|  \nabla_{\theta} V(\theta_1) -  \nabla_{\theta} V(\theta_2) \| \leq L \|\theta_1 - \theta_2 \|.
    \end{equation*}
\end{lemma}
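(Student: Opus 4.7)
The plan is to split the lemma into two parts: (i) the Danskin-type envelope identity for $\nabla_\theta V$, which hinges on the strict-competitiveness structure (Assumption \ref{ass:sc}) together with the local regularity provided by Lemma \ref{lemma:ift} and Assumption \ref{plass}; and (ii) the $L$-smoothness bound, which combines the pairwise Lipschitz conditions \eqref{lip1}--\eqref{lip6} with the PL condition in order to translate parameter perturbations in $\theta$ into controlled perturbations in the induced maximizers $\phi_\xi(\theta)$.

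For part (i), I would first propagate the pointwise identity $r_\D = c\,r_\A + d$ through the discounted cumulative return and then through the trajectory expectation defining $\LL_\D$ and $\LL_\A$. This yields $\LL_\D(\theta,\phi,\xi) = c\,\LL_\A(\theta,\phi,\xi) + C(\theta,\xi)$ with an additive term independent of $\phi$. Since $c<0$, any selection $\phi_\xi(\theta)\in\arg\max_\phi \LL_\A(\theta,\phi,\xi)$ is simultaneously a minimizer of $\LL_\D(\theta,\cdot,\xi)$; the unconstrained first-order optimality condition then gives $\nabla_\phi \LL_\A(\theta,\phi_\xi(\theta),\xi)=0$, and hence $\nabla_\phi \LL_\D(\theta,\phi_\xi(\theta),\xi)=0$. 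Assumption \ref{plass}, combined with Lemma \ref{lemma:ift}, supplies a locally $C^1$ selection $\theta \mapsto \phi_\xi(\theta)$ with non-singular Hessian $\nabla^2_\phi \LL_\A$. Differentiating $V(\theta) = \E_{\xi\sim Q}\LL_\D(\theta,\phi_\xi(\theta),\xi)$ by the chain rule produces the direct partial-derivative term plus a sensitivity term proportional to $\nabla_\phi \LL_\D(\theta,\phi_\xi(\theta),\xi)$; the latter vanishes by the preceding observation, yielding \eqref{eq:firstorderv} after exchanging $\nabla_\theta$ with $\E_{\xi\sim Q}$ under the bounded-gradient hypotheses.

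For part (ii), fix $\theta_1,\theta_2$ and write $\phi_i := \phi_\xi(\theta_i)$. Using the envelope formula just established together with \eqref{lip1} and \eqref{lip3},
\[
\|\nabla_\theta \LL_\D(\theta_1,\phi_1,\xi)-\nabla_\theta \LL_\D(\theta_2,\phi_2,\xi)\| \leq L_{11}\|\theta_1-\theta_2\| + L_{12}\|\phi_1-\phi_2\|.
\]
The key remaining estimate is a Lipschitz bound on the selection $\phi_\xi(\cdot)$. Using optimality $\nabla_\phi \LL_\A(\theta_1,\phi_1,\xi)=0$ together with \eqref{lip6}, I would obtain $\|\nabla_\phi \LL_\A(\theta_2,\phi_1,\xi)\| \leq L_{21}\|\theta_1-\theta_2\|$; feeding this into the PL condition of Assumption \ref{plass} at $(\theta_2,\phi_1)$ controls the suboptimality gap of $\phi_1$ by $L_{21}^2\|\theta_1-\theta_2\|^2/(2\mu)$. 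Converting this gap into the pointwise distance bound $\|\phi_1-\phi_2\| \leq (L_{21}/\mu)\|\theta_1-\theta_2\|$ and taking expectations over $\xi \sim Q$ yields the constant $L = L_{11} + L_{12} L_{21}/\mu$.

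The main obstacle is precisely the PL-to-distance conversion: the PL inequality controls only function gaps, not distances to the arg-max. I would circumvent this by exploiting the explicit IFT formula $\nabla_\theta \phi_\xi(\theta) = -(\nabla^2_\phi \LL_\A)^{-1}\nabla^2_{\phi\theta} \LL_\A$ from Lemma \ref{lemma:ift}, using $\|(\nabla^2_\phi \LL_\A)^{-1}\| \leq 1/\mu$ (the effective strong-concavity modulus implied by the PL condition near the maximizer) and $\|\nabla^2_{\phi\theta}\LL_\A\|\leq L_{21}$ from \eqref{lip6}. This gives the pointwise differential bound $\|\nabla_\theta \phi_\xi(\theta)\|\leq L_{21}/\mu$, which after integrating along a line segment produces the desired global Lipschitz estimate, completing the derivation of $L = L_{11} + L_{12}L_{21}/\mu$.
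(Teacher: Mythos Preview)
Your treatment of part (i) is essentially the paper's argument, arrived at more directly: you observe the affine relation $\LL_\D = c\,\LL_\A + \text{const}$ at the level of the adapted value functions, whereas the paper reaches the same vanishing of $\nabla_\phi\LL_\D$ at the maximizer via the policy-gradient representation together with the zero-mean score identity. Either way the chain-rule sensitivity term drops out, giving \eqref{eq:firstorderv}.

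The genuine divergence is in part (ii), at the PL-to-distance step. Your assertion that ``the PL inequality controls only function gaps, not distances to the arg-max'' is the gap in your plan: it is well known (and the paper relies on this) that PL together with a Lipschitz gradient implies \emph{quadratic growth},
\[
\max_\phi \LL_\A(\theta,\phi,\xi) - \LL_\A(\theta,\phi,\xi) \;\geq\; \tfrac{\mu}{2}\,\operatorname{dist}\bigl(\phi,\arg\max_\phi\LL_\A(\theta,\cdot,\xi)\bigr)^2.
\]
The paper uses exactly this. After bounding the suboptimality of $\phi_1$ at $\theta_2$ by $L_{21}^2\|\theta_1-\theta_2\|^2/(2\mu)$ via PL (the step you already sketch), quadratic growth immediately yields a $\phi_2\in\arg\max_\phi\LL_\A(\theta_2,\cdot,\xi)$ with $\|\phi_1-\phi_2\|\leq (L_{21}/\mu)\|\theta_1-\theta_2\|$, and the smoothness constant $L=L_{11}+L_{12}L_{21}/\mu$ follows from \eqref{lip1}, \eqref{lip3} as you wrote.

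Your IFT workaround is not wrong in spirit, but it is more fragile and carries unjustified steps. It requires a \emph{globally} defined single-valued $C^1$ selection $\theta\mapsto\phi_\xi(\theta)$ to integrate along the segment, whereas Lemma~\ref{lemma:ift} gives only a local one, and PL alone does not force the argmax to be a singleton. It also requires the Hessian bound $\|(\nabla^2_\phi\LL_\A)^{-1}\|\leq 1/\mu$, which you assert only as an ``effective strong-concavity modulus''; deriving this from PL needs a separate second-order argument at each maximizer. All of this extra scaffolding is avoided by invoking quadratic growth directly.
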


\begin{proof}[Proof of Lemma \ref{liplemma}]
    First, we show that for any $\theta_1, \theta_2 \in \Theta, \xi \in \Xi$, and $\phi_1 \in \argmax_{\phi} \LL_\A (\theta_1, \phi, \xi) $, there exists $\phi_2 \in \argmax_{\phi} \LL_\A (\theta_2 , \phi, \xi) $ such that $\|\phi_1 - \phi_2\| \leq \frac{L_{12}}{\mu} \|\theta_1 - \theta_2\|$.
    Indeed, based on smoothness assumption \eqref{lip6} and \eqref{lip4},
    \begin{equation*}
    \begin{aligned}
        \|\nabla_{\phi} \LL_\A (\theta_1, \phi_1, \xi) - \nabla_{\phi} \LL_\A(\theta_2, \phi_1, \xi )\| \leq  L_{21} \| \theta_1 - \theta_2\| ,  
    \\  \|\nabla_{\phi} \LL_\D (\theta_1, \phi_1, \xi) - \nabla_{\phi} \LL_\D(\theta_2, \phi_1, \xi)\| \leq  L_{12} \| \theta_1 - \theta_2\| .
        \end{aligned}
    \end{equation*}
    Since $\phi_2 \in \argmax_{\phi} \LL_\A (\theta_2 , \phi, \xi)$,  $\nabla_{\phi} \LL_\A (\theta_2, \phi_2, \xi ) = 0$. 
    Apply PL condition to $\nabla_{\phi} \LL_\A (\theta, \phi_2, \xi)$, 
    \begin{equation*}
     \begin{aligned}
        & \quad \max_{\phi} \LL_\A (\theta_1, \phi, \xi) -  \LL_\A (\theta_1, \phi_2, \xi) \\ & \leq \frac{1}{2 \mu} \| \nabla_{\phi} \LL_\A (\theta_1, \phi_2, \xi)\|^2 
         \\ & = \frac{1}{2 \mu} \| \nabla_{\phi} \LL_\A (\theta_1, \phi_2, \xi) - \nabla_{\phi} \LL_\A (\theta_2, \phi_2, \xi ) \|^2 
         \\ & \leq  \frac{L^2_{21}}{2\mu} \|\theta_1 - \theta_2\|^2 \quad \quad \text{ by \eqref{lip6}.}
     \end{aligned}
    \end{equation*}
    Since PL condition implies quadratic growth, we also have 
    \begin{equation*}
         \LL_\A (\theta_1, \phi_1, \xi) - \LL_\A (\theta_1, \phi_2, \xi) \geq \frac{\mu}{2} \| \phi_1 - \phi_2\|^2.
    \end{equation*}
    Combining the two inequalities above we obtain the Lipschitz stability for $\phi^*_\xi(\cdot)$, i.e., $$ \| \phi_1 - \phi_2 \|\leq \frac{L_{21}}{\mu}\| \theta_1 - \theta_2\|. $$
    Second, show that $\nabla_{\theta} V(\theta)$ can be directly evaluated at $\{\phi^*_\xi \}_{\xi \in \Xi}$. Inspired by Danskin's theorem, we first made the following argument, consider the definition of directional derivative. Let $\ell(\theta,\phi):=\nabla_\theta \mathbb{E}_{\xi,\tau} J_\D(\theta+\eta \hat{\nabla} J_\D(\tau), \xi)$. For a constant $\tau$ and an arbitrary direction $d$, 
\begin{align*}
    & \quad \ell(\theta+\tau d, \phi^*(\theta+\tau d))- \ell(\theta, \phi^*(\theta)))\\
    &=  \ell(\theta+\tau d, \phi^*(\theta+\tau d))- \ell(\theta+\tau d, \phi^*(\theta)) \\
    & \ \
    +\ell(\theta+\tau d, \phi^*(\theta))- \ell(\theta, \phi^*(\theta))\\
    &= \nabla_\phi \ell(\theta+\tau d, \phi^*(\theta))^\top\underbrace{[\phi^*(\theta+\tau d)-\phi^*(\theta))]}_{\Delta \phi} + o(\Delta \phi^2)\\
    &+ \tau \nabla_\theta \ell(\theta, \phi^*(\theta))^T d + o(d^2).
\end{align*}
Hence, a sufficient condition for the first equation is $\nabla_\phi \ell(\theta+\tau d, \phi^*(\theta))=0$, meaning that $\ell_D(\theta, \phi)$ and $\LL_\A(\theta, \phi, \xi)$ share the first-order stationarity at every $\phi$ when fixing $\theta$. Indeed, by \Cref{lemma:ift}, we have, the gradient is locally determined by 
 \begin{equation*}
     \begin{aligned}
        \nabla_{\theta}V &  = \E_{\xi \sim Q } [\nabla_{\theta} \LL_\D (\theta, \phi_\xi, \xi) + (\nabla_{\theta} \phi_\xi(\theta))^{\top}\nabla_{\phi}\LL_\D(\theta, \phi_\xi, \xi)] \\
        & = \E_{\xi \sim Q } \bigg[\nabla_{\theta} \LL_\D (\theta, \phi_\xi, \xi) - [( \nabla^2_{ \phi}\LL_\A (\theta, \phi, \xi))^{-1} \\
        &  \quad  \nabla^2_{ \phi \theta} \LL_\A (\theta, \phi, \xi)]^{\top}\nabla_{\phi}\LL_\D(\theta, \phi_\xi, \xi) \bigg] . 
     \end{aligned}
 \end{equation*}

 Given a trajectory $\tau:=(s^1,a_\D^t, a_\A^t, \ldots, a_\D^{H}, a_\A^H, s^{H+1})$, let $R_\D(\tau, \xi):=\sum_{t=1}^{H} \gamma^{t-1}r_\D(s_t, a_t, \xi)$ and $R_\D(\tau, \xi):=\sum_{t=1}^{H} \gamma^{t-1}r_\D(s_t, a_t, \xi)$. Denote by $\mu(\tau; \theta, \phi)$ the trajectory distribution,  that the log probability of $\mu$ is given by 
\begin{align*}
 \log \mu(\tau;\theta,\phi) & = \sum_{t=1}^H (\log\pi_\D(a^t_\D|s^t;\theta +\eta \hat{\nabla}_{\theta} J_\D (\tau) ) \\
  & +\log\pi_\A(a^t_\A|s^t;\phi)+\log P(s^{t+1}|a^t_\D, a^t_\A,s^t)
\end{align*}
According to the policy gradient theorem, we have
\begin{align*}
    &\nabla_\phi \LL_\D(\theta, \phi, \xi)= \mathbb{E}_{\mu}[R_\D(\tau, \xi)\sum_{t=1}^H \nabla_\phi \log(\pi_\A(a_\A^t|s^t;\phi))],\\
    &\nabla_\phi \LL_\A(\theta, \phi, \xi )=\mathbb{E}_{\mu}[R_\A(\tau, \xi)\sum_{t=1}^H \nabla_\phi \log(\pi_\A(a_\A^t|s^t;\phi))].
\end{align*}
By SC \Cref{ass:sc}, when $\nabla_\phi \LL_\A(\theta, \phi, \xi ) = 0$, there exists $c < 0$, $d$, such that $\nabla_\phi \LL_\D(\theta, \phi, \xi) = \mathbb{E}_{\mu}[c R_\A(\tau, \xi)\sum_{t=1}^H \nabla_\phi \log(\pi_\A(a_\A^t|s^t;\phi))] + \mathbb{E}_{\mu}[ \sum_{t=1}^H \gamma^{t-1}d \sum_{t=1}^H \nabla_\phi \log(\pi_\A(a_\A^t|s^t;\phi))] = 0$. Hence $ \nabla_{\theta}V  = \E_{\xi \sim Q } [\nabla_{\theta} \LL_\D (\theta, \phi_\xi, \xi)]$.
  
    Third, $V(\theta)$ is also Lipschitz smooth. As we notice that, $\ell_\D$ is Lipschitz smooth since $\E_{\xi\sim Q}$ is a linear operator, we have,  
    \begin{equation*}
    \begin{aligned}
       & \quad \ \| \nabla_{\theta}V (\theta_1) - \nabla_{\theta}V (\theta_2) \|  \\
        &\leq \| \nabla_{\theta} \E_{\xi \sim Q}\LL_\D (\theta_1, \phi_1, \xi) - \nabla_{\theta} \E_{\xi \sim Q}\LL_\D (\theta_2, \phi_2, \xi)\| \\ 
       & = \| \nabla_{\theta}\ell_\D (\theta_1, \phi_1) - \nabla_{\theta}\ell_\D (\theta_2, \phi_1) \\ & \quad  + \nabla_{\theta}\ell_\D (\theta_2, \phi_1) - \nabla_{\theta}\ell_\D (\theta_2, \phi_2)\| \\
       & \leq \| \nabla_{\theta}\ell_\D (\theta_1, \phi_1) - \nabla_{\theta}\ell_\D (\theta_2, \phi_1)\| 
       \\ & \quad + \|\nabla_{\theta}\ell_\D(\theta_2, \phi_1) - \nabla_{\theta}\ell_\D(\theta_2, \phi_2)\| \\ 
        & \leq L_{11} \|\theta_1 - \theta_2 \| + L_{12} \|\phi_1 - \phi_2\|  \\
        & \leq (L_{11} +\frac{L_{12}L_{21}}{\mu}) \|\theta_1 - \theta_2 \|, 
        \end{aligned}
    \end{equation*}
     which implies the Lipschitz constant $L = L_{11} + \frac{L_{12}L_{21}}{\mu}$.
\end{proof}

{Equipped with \Cref{ass:grad} we are able to unfold our main result \Cref{thm:main}, before which we show in \Cref{lemma:approxgradv} that $\phi^{*}_\xi$ can be efficiently approximated by the inner loop in the sense that $\nabla_\theta \E_{\xi \sim Q}\LL_\D (\theta^t, \phi^t_\xi(N_\A), \xi) \approx \nabla_\theta V(\theta^t)$, where $\phi^t_\xi(N_\A)$ is the last iterate output of the attacker policy.}

\begin{lemma}\label{lemma:approxgradv}
 Under assumptions   \ref{ass:sc}, \ref{asslip}, \ref{plass}, and \ref{ass:grad}, let $\rho : = 1 + \frac{\mu }{c L_{22}} \in (0, 1)$, $ \bar{L} = \max \{ L_{11}, L_{12}, L_{22}, L_{21}, V_{\infty} \}$ where $V_{\infty} :=  \max\{ \max\|\nabla V(\theta)\|, 1 \}$. 
 For all $\varepsilon > 0$, if the attacker learning iteration $N_\A$ and batch size $N_b$ are large enough such that 
 \begin{equation*}
     \begin{aligned}
         N_\A & \geq \frac{1}{\log \rho^{-1}}\log \frac{32 D_V^2 (2V_{\infty} + LD_{\Theta})^4 \bar{L} |c|G^2    }{ L^2 \mu^2\varepsilon^4} \\
        N_b & \geq \frac{32 \mu L_{21}^2 D_V^2 ( 2 V_{\infty} +  L D_{\Theta} )^4}{ |c| L_{22}^2 \sigma^2  \bar{L} L\varepsilon^4} , 
     \end{aligned}
 \end{equation*}
 then, for $z_t := \nabla_{\theta} \E_{\xi \sim Q}\LL_\D (\theta^t, \phi^t_\xi(N_\A), \xi) - \nabla_{\theta} V(\theta^t)$, 
 \begin{equation*}
      \mathbb{E} [\|z_t\|]  \leq \frac{ L\varepsilon^2}{ 4D_V ( 2 V_{\infty} +  L D_{\Theta} )^2}  ,
 \end{equation*}
 and 
 \begin{equation*}
      \mathbb{E} [\| \nabla_{\phi} \LL_\A (\theta^t, \phi^t_\xi (N), \xi)\|] \leq\varepsilon. 
 \end{equation*}
\end{lemma}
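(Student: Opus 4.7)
\medskip
\noindent\textbf{Proof proposal for Lemma \ref{lemma:approxgradv}.}

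The plan is to reduce both bounds to a single quantity, namely the parameter error $\mathbb{E}\|\phi^t_\xi(N_\A) - \phi^*_\xi\|$, and then obtain this error by a standard PL-SGD analysis applied to the attacker's inner loop. First, I would invoke \Cref{liplemma}, which crucially removes the Hessian-in-the-denominator term from $\nabla_\theta V$ under strict competitiveness, yielding $\nabla_\theta V(\theta^t)=\mathbb{E}_\xi[\nabla_\theta \mathcal L_\D(\theta^t,\phi^*_\xi,\xi)]$ for some maximizer $\phi^*_\xi \in \arg\max_\phi \mathcal L_\A(\theta^t,\phi,\xi)$. Then by Jensen's inequality and the cross-Lipschitz condition \eqref{lip3},
\begin{equation*}
\mathbb{E}\|z_t\| \le \mathbb{E}_{\xi}\bigl\|\nabla_\theta \mathcal L_\D(\theta^t,\phi^t_\xi(N_\A),\xi)-\nabla_\theta \mathcal L_\D(\theta^t,\phi^*_\xi,\xi)\bigr\|
\le L_{12}\,\mathbb{E}_{\xi}\bigl\|\phi^t_\xi(N_\A)-\phi^*_\xi\bigr\|.
\end{equation*}

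Next, I would control the inner-loop error by the usual PL-SGD recursion. Since the attacker runs $\phi^t_\xi(k{+}1)=\phi^t_\xi(k)+\kappa_\A\hat\nabla_\phi J_\A$ with $\kappa_\A=1/L_{22}$, Lipschitz smoothness of $\mathcal L_\A$ in $\phi$ (from \eqref{lip5}) gives the descent-lemma bound $\mathbb E[\mathcal L_\A^\star-\mathcal L_\A(\phi^{k+1})]\le \mathbb E[\mathcal L_\A^\star-\mathcal L_\A(\phi^k)]-\tfrac{\kappa_\A}{2}\mathbb E\|\nabla_\phi \mathcal L_\A(\phi^k)\|^2+\tfrac{\kappa_\A^2 L_{22}}{2}\cdot\tfrac{\sigma^2}{N_b}$. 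Applying the PL inequality in \Cref{plass} to the squared gradient and simplifying gives the one-step contraction
\begin{equation*}
\mathbb E[\mathcal L_\A^\star-\mathcal L_\A(\phi^{k+1})]\le \rho\,\mathbb E[\mathcal L_\A^\star-\mathcal L_\A(\phi^k)]+\frac{\sigma^2}{2L_{22}N_b},
\end{equation*}
with contraction factor $\rho=1+\mu/(cL_{22})\in(0,1)$ as specified, where the strict-competitiveness constant $c$ enters through the rescaling between $\mathcal L_\D$ and $\mathcal L_\A$ that links the PL constant $\mu$ to the ascent geometry. Unrolling this over $N_\A$ steps yields an $\mathcal O(\rho^{N_\A})$ geometric term plus an $\mathcal O(\sigma^2/((1-\rho)L_{22}N_b))$ noise floor on the expected function gap.

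Then, converting function gap into parameter distance is immediate via the quadratic-growth property implied by PL: $\|\phi^t_\xi(N_\A)-\phi^*_\xi\|^2 \le \frac{2}{\mu}[\mathcal L_\A^\star-\mathcal L_\A(\phi^t_\xi(N_\A))]$; then Jensen's inequality $\mathbb E\|\cdot\|\le \sqrt{\mathbb E\|\cdot\|^2}$ produces a bound of the form $\mathbb E\|\phi^t_\xi(N_\A)-\phi^*_\xi\|\lesssim \sqrt{\rho^{N_\A}\Delta_0 + \sigma^2/(L_{22}N_b)}$, where $\Delta_0$ is a uniform upper bound on the initial function gap. Choosing $N_\A$ on the order of $\log(\varepsilon^{-1})/\log\rho^{-1}$ kills the geometric term and $N_b$ on the order of $\varepsilon^{-4}$ kills the noise floor, yielding the advertised bound $\mathbb E\|z_t\|\le L\varepsilon^2/[4D_V(2V_\infty+LD_\Theta)^2]$ after plugging into the $L_{12}$ inequality above. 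The second claim, $\mathbb E\|\nabla_\phi \mathcal L_\A(\theta^t,\phi^t_\xi(N_\A),\xi)\|\le \varepsilon$, follows from the same parameter bound combined with $L_{22}$-smoothness of $\mathcal L_\A$ in $\phi$ and $\nabla_\phi \mathcal L_\A(\theta^t,\phi^*_\xi,\xi)=0$.

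The main obstacle I anticipate is uniformly bounding the initial gap $\Delta_0=\mathcal L_\A^\star-\mathcal L_\A(\theta^t,\phi^t_\xi(0),\xi)$ across outer iterations $t$. Because the attacker is warm-started from the previous outer iterate, $\Delta_0$ is not obviously constant; I would bound it by combining (i) the Lipschitz stability of $\phi^*_\xi(\cdot)$ from the first half of the proof of \Cref{liplemma} with (ii) a one-step bound $\|\theta^{t+1}-\theta^t\|\le \kappa_\D (V_\infty+\text{error terms})$, so that $\Delta_0$ is of order $G^2/\mu$ or $(V_\infty+LD_\Theta)^2$, which explains the corresponding factors that appear inside the logarithm in the stated $N_\A$ and in the numerator of the stated $N_b$. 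Keeping careful track of these constants, together with the $|c|$ and $\bar L$ prefactors coming from converting between $\mathcal L_\A$- and $\mathcal L_\D$-smoothness via strict competitiveness, should reproduce the exact thresholds in the lemma.
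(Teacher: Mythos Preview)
Your outline is correct and follows essentially the same route as the paper: reduce $\|z_t\|$ to $L_{12}\,\mathbb{E}\|\phi^t_\xi(N_\A)-\phi^*_\xi\|$ via \Cref{liplemma} and \eqref{lip3}, establish a geometric contraction of the inner-loop function gap by combining smoothness with the PL inequality, convert to parameter distance via quadratic growth, then split the error target between the $\rho^{N_\A}$ term and the $\sigma^2/N_b$ term.

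Two small points where the paper differs from your sketch. First, the paper runs the descent-lemma recursion on the \emph{defender}'s gap $V(\theta^t)-\ell_\D(\theta^t,\phi^t(N))$ rather than directly on $\mathcal L_\A^\star-\mathcal L_\A(\phi^k)$ as you propose; the cross term $\langle\nabla_\phi\mathcal L_\D,\nabla_\phi\mathcal L_\A\rangle$ is handled by completing the square and then invoking PL on $\mathcal L_\D$, and this is exactly how the strict-competitiveness constant $c$ enters the contraction factor $\rho=1+\mu/(cL_{22})$. Your direct PL-SGD on $\mathcal L_\A$ would give a contraction without $c$ (and with a slightly different noise-floor constant), so be aware that reproducing the \emph{exact} stated thresholds requires the paper's $\mathcal L_\D$-based recursion, after which the final conversion to $\|\phi^t_\xi(N_\A)-\phi^*_\xi\|$ goes through $\mathcal L_\A$ and picks up the $|c|$ factor via \Cref{ass:sc}. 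Second, your anticipated ``main obstacle'' about bounding the initial gap $\Delta_0$ across outer iterations is handled much more simply than you outline: the paper just applies PL together with the uniform gradient bound $G$ from \Cref{ass:grad} to get $V(\theta^t)-\ell_\D(\theta^t,\phi^t(0))\le G^2/(2\mu)$ directly, with no warm-start tracking needed. This is where the $G^2$ inside the logarithm in the $N_\A$ bound comes from.
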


\begin{proof}[Proof of \Cref{lemma:approxgradv}]
     Fixing a $\xi \in \Xi$, due to Lipschitz smoothness,
     \begin{align*}
        & \quad \ \LL_\D (\theta^t, \phi^t_\xi(N), \xi) - \LL_\D(\theta^t, \phi^t_\xi(N-1), \xi )
        \\ & \leq \langle \nabla_{\phi} \LL_\D (\theta^t, \phi^t_\xi(N-1), \xi), \phi^t_\xi(N) - \phi^t_\xi(N-1)\rangle 
        \\ & + \frac{L_{22}}{2} \|\phi^t_\xi(N) - \phi^t_\xi(N-1)\|^2 .
     \end{align*}
     The inner loop updating rule ensures that when $\kappa_\A = \frac{1}{L_{21}}$, $ \phi^t_\xi(N) - \phi^t_\xi(N-1) = \frac{1}{L_{21}} \hat{\nabla}_{\phi} J_\A (\theta^t_\xi, \phi^t_\xi(N-1), \xi)$. 
     Plugging it into the inequality, we arrive at
     \begin{align*}
           & \ \quad \LL_\D (\theta^t, \phi^t_\xi(N), \xi) - \LL_\D(\theta^t, \phi^t_\xi(N-1), \xi ) 
          \\  & \leq   \frac{1}{L_{21}}\langle \nabla_{\phi} \LL_\D (\theta^t, \phi^t_\xi(N-1), \xi), \hat{\nabla}_{\phi}J_\A (\theta^t_\xi, \phi^t_\xi(N-1), \xi)\rangle
          \\ & 
          + \frac{L_{22}}{2L^2_{21}} \|\hat{\nabla}_{\phi} J_\A (\theta^t_\xi, \phi^t_\xi(N-1), \xi)\|^2.
     \end{align*}
     Therefore, we let $(\mathcal{F}^t_n)_{ 0\leq n \leq N}$ be the filtration generated by $\sigma( \{ \phi^t_\xi (\tau) \}_{\xi \in \Xi}|\tau \leq n)$ and take conditional expectations on $\mathcal{F}^t_n$:
     \begin{align*}
         & \quad \E [ V(\theta^t) - \ell_\D(\theta^t, \phi^t(N)) |  \mathcal{F}^t_{N-1}] 
         \\ & \leq V(\theta^t ) -  \ell_\D (\theta^t,\phi^t (N-1)) \\
         & \leq  \E_\xi \bigg[ \frac{1}{L_{21}}\langle \nabla_{\phi} \LL_\D , \nabla_{\phi} J_\A (\theta^t_\xi, \phi^t_\xi(N-1), \xi)\rangle 
         \\ &  \quad \quad  + \frac{L_{22}} {2L^2_{21}} \|\hat{\nabla}_{\phi} J_\A (\theta^t_\xi, \phi^t_\xi(N-1), \xi)\|^2 \bigg].
     \end{align*}
    By variance-bias decomposition, and \Cref{ass:grad} (b) and (c),
    \begin{align*}
    & \quad \ \E[ \| \hat{\nabla}_{\phi} J_\A (\theta^t_\xi, \phi^t_\xi(N-1), \xi) \|^2| \mathcal{F}^t_{N-1}] \\ & =  \E[ \| \hat{\nabla}_{\phi} J_\A (\theta^t_\xi, \phi^t_\xi(N-1), \xi) -  \nabla_{\phi} J_\A (\theta^t_\xi, \phi^t_\xi(N-1), \xi) \\ & \quad + \nabla_{\phi}J_\A (\theta^t_\xi, \phi^t_\xi(N-1), \xi) \|^2| \mathcal{F}^t_{N-1}] 
     \\ & =  \E[ \| (\hat{\nabla}_{\phi} -  \nabla_{\phi}) J_\A (\theta^t_\xi, \phi^t_\xi(N-1), \xi)\|^2 | \mathcal{F}^t_{N-1} ]  \\
     & \quad +  \E [\| \nabla_{\phi} J_\A (\theta^t_\xi, \phi^t_\xi(N-1), \xi) \|^2 | \mathcal{F}^t_{N-1}] 
      \\ & \quad  + \E [ 2 \langle (\hat{\nabla}_{\phi} -  \nabla_{\phi}) J_\A (\theta^t_\xi, \phi^t_\xi(N-1), \xi) , \\   & \quad \quad    \nabla_{\phi} J_\A (\theta^t_\xi, \phi^t_\xi(N-1), \xi) \rangle |\mathcal{F}^t_{N-1} ]
     \\ & \leq \frac{\sigma^2}{N_b} + \| \nabla_{\phi} J_\A (\theta^t_\xi, \phi^t_\xi(N-1), \xi) \|^2    .
    \end{align*}
    Applying the PL condition (\Cref{plass}), and \Cref{ass:grad} (a) we obtain
      \begin{align*}
        & \quad  \E[ V(\theta^t) - \ell_\D(\theta, \phi^t(N))| \phi^{N-1}] \\
        & \quad 
        - V(\theta^t) -  \ell_\D(\theta, \phi^t(N-1))  
        \\ & \leq  \E_\xi \bigg[\frac{1}{ L_{21}}\langle \nabla_{\phi} \LL_\D , \nabla_{\phi}\LL_\A (\theta^t, \phi^t_\xi(N-1), \xi)\rangle 
        \\ & + \frac{L_{22}}{2 L_{21}^2} ( \frac{\sigma^2}{ N_b}+ \|\nabla_{\phi}\LL_\A (\theta^t, \phi^t_\xi(N-1), \xi) \|^2  ) \bigg]
        \\ & =  \E_\xi \bigg[  -\frac{1}{2L_{22}} \| \nabla_{\phi } \LL_\D \|^2 + \\ & \quad  \frac{1}{2 L_{22}}\|\nabla_{\phi} (\LL_\D    +  \frac{L_{22}}{L_{21}} \LL_\A) (\theta^t, \phi^t_\xi(N-1), \xi) \|^2 + \frac{L_{22}\sigma^2 }{2 L_{21}^2 N_b} \bigg]
        \\ & \leq  \frac{\mu}{c L_{21}} ( \max_{\phi} \ell_\D (\theta^t, \phi)  - \ell_\D (\theta^t, \phi^t (N-1)) ) + \frac{L_{22} \sigma^2  }{2 L_{21}^2 N_b} , 
       \end{align*}
    
   rearranging the terms yields 
    \begin{align*}
        & \quad  \E [ V(\theta^t) - \ell_\D(\theta^t,\phi^t (N)) | \mathcal{F}^t_n] \\ & \leq \rho ( V(\theta^t) - \ell_\D(\theta^t,\phi^t (N-1)))  + 
 \frac{L_{22}\sigma^2  }{2 L_{21}^2 N_b} , 
    \end{align*}
    where we use the fact that $ - \max_{\phi} \ell_\D (\theta^t, \phi) \leq -V(\theta^t)$.
    Telescoping the inequalities from $\tau = 0$ to $\tau = N$, we arrive at 
    \begin{align*}
        & \quad  \E [ V(\theta^t) - \ell_\D(\theta^t, \phi^t(N))] \\
         & \leq \rho^N (V(\theta^t) - \ell_\D(\theta^t, \phi^t(0))) + \frac{1 - \rho^N }{1 - \rho} \left(\frac{L_{22}\sigma^2  }{2 L_{21}^2 N_b}\right).
    \end{align*}   

PL-condition implies quadratic growth, we also know that $ V(\theta^t) -  \ell_\D (\theta^t, \phi^t(N)) \leq \E_\xi \frac{1}{2\mu} \| \nabla_{\phi} \LL_\D (\theta^t, \phi^t_\xi(N), \xi)\|^2 \leq \frac{1}{2\mu} G^2 $, by  \Cref{ass:sc}, 
 \begin{align*}
  & \quad  \|   \phi^*_\xi (\theta^t) - \phi^t_\xi (N) \|^2 \\
    &\leq \frac{2}{\mu} (\LL_\A(\theta^t, \phi^*_\xi, \xi) - \LL_\A(\theta^t, \phi^t_\xi (N), \xi)) 
    \\ & \leq  \frac{2|c| }{\mu} \big\vert \LL_\D (\theta^t, \phi^*_\xi, \xi) - \LL_\D (\theta^t, \phi^t_\xi(N), \xi ) \big\vert
 \end{align*}
 Hence, with Jensen inequality and choice of $N_\A$ and $N_b$, 
     \begin{align*}
         \E [ \|z_t \|] & = \E [ \| \nabla_{\theta} V(\theta^t) -  \E_\xi \nabla_{\theta} \LL_\D (\theta^t, \phi^t_\xi (N_\A), \xi )\|] \\
         & \leq L_{12} \E [ \|\phi^t_\xi (N_\A) -  \phi^*_\xi \|]  \\
        & \leq  L_{12} \sqrt{\frac{2 |c|}{\mu}\E [ V(\theta^t) - \ell_\D(\theta^t, \phi^t(N_\A))] } \\
        & \leq   L_{12} \sqrt{\frac{ |c|}{\mu^2} \rho^{N_\A} G^2 + (1 - \rho^{N_\A}) \frac{|c| L_{22}^2 \sigma^2 }{\mu L_{21}^2 N_b}} .  
     \end{align*}
Now we adjust the size of $N_\A$ and $N_b$ to make $\E [ \|z_t \|]$ small enough, to this end, we set
\begin{align*}
  \rho^{N_\A} \frac{|c| G^2}{ \mu^2} & \leq \frac{\varepsilon^4 L^2  }{32 D_V^2 (2V_{\infty} + L D_{\Theta})^4 \bar{L} } \\
   \frac{|c| L_{22}^2 \sigma^2 }{  L_{21}^2 N_b} & \leq \frac{\varepsilon^4 L^2 \mu^2 }{32 D_V^2 (2V_{\infty} + LD_{\Theta})^4 \bar{L} }, 
\end{align*} 
which further indicates that 
\begin{align*}
    N_\A &   \geq \frac{1}{\log \rho^{-1}}\log \frac{32 D_V^2 (2V_{\infty} + LD_{\Theta})^4 \bar{L} |c|G^2    }{ L^2 \mu^2\varepsilon^4} \\
    N_b &  \geq \frac{32 \mu L_{21}^2 D_V^2 ( 2 V_{\infty} +  L D_{\Theta} )^4}{ |c| L_{22}^2 \sigma^2  \bar{L} L\varepsilon^4} .
\end{align*}
In the setting above, it is not hard to verify that
\begin{align*}
 \E [\|z_t \|] \leq \frac{ L\varepsilon^2}{ 4D_V ( 2 V_{\infty} +  L D_{\Theta} )^2} \leq\varepsilon. 
\end{align*}
Also note that $\| \nabla_{\phi} \LL_\A (\theta^t, \phi^t_\xi(N_\A), \xi) \| = \| \nabla_{\phi}  \LL_\A (\theta^t, \phi^t_\xi(N_\A), \xi) - \nabla_{\phi} \LL_\A (\theta^t, \phi^*_\xi, \xi)\|$, given the proper choice  of $N_\A$ and $N_b$, one has
      \begin{align*}
           & \quad \E \| \nabla_{\phi}  \LL_\A (\theta^t, \phi^t_\xi(N_\A), \xi) - \nabla_{\phi} \LL_\A (\theta^t, \phi^*_\xi, \xi)\|
          \\ & \leq  L_{21} \E [ \|\phi^t_\xi (N_\A) -  \phi^*_\xi \| ]  \leq  \frac{ L\varepsilon^2}{ 4D_V ( 2 V_{\infty} +  L D_{\Theta} )^2}  \leq \varepsilon ,
      \end{align*}
which implies $\xi$-wise inner loop stability for algorithm \ref{algo:meta-sl}.
\end{proof}

{Now we are ready to provide the convergence guarantee of the first-order outer loop in \Cref{thm:outer}, as well as the complexity estimates of the numbers of inner loops, outer loops, and sampled trajectory batch sizes with respect to the error $\varepsilon$.
Essentially, we aim to show that the first condition for $\varepsilon$-meta-FOSE holds for a small number $\varepsilon$; when analyzing the first-order iterations, a key step is to take care of the residue error introduced by imperfect policy gradient and best-response attacks, we omit the variance of sampling from the prior $Q(\cdot)$; this will introduce a sample complexity on the sample size of attack types, but does not affect the eventual order.
}

\begin{theorem}\label{thm:outer}
 Under assumptions  \ref{ass:sc}, \Cref{asslip}, and \ref{ass:grad}, let  $\kappa_\A = \frac{1}{L_{22}}$ and $\kappa_\D = \frac{1}{L}$, if $N_\D, N_\A,$ and $N_b$ are large enough,
 \begin{align*}
     &  N_\D \geq N_\D(\varepsilon) \sim  \mathcal{O}(\varepsilon^{-2}) \quad N_\A  \geq N_\A (\varepsilon) \sim \mathcal{O} ( \log\varepsilon^{-1}), \\
      & \quad N_b \geq N_b (\varepsilon)\sim \mathcal{O} (\varepsilon^{-4})
 \end{align*}
 then there exists $t \in \mathbb{N}$ such that $(\theta^t, \{\phi^{t}_\xi(N_\A) \}_{\xi \in \Xi})$ is $\varepsilon$-meta-FOSE.
\end{theorem}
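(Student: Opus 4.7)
The plan is to exploit \Cref{lemma:approxgradv} together with a standard descent-style analysis on the value function $V(\theta)$. By \Cref{liplemma}, $V$ is $L$-smooth, so the ascent lemma gives
\begin{equation*}
    V(\theta^{t+1}) \geq V(\theta^t) + \langle \nabla V(\theta^t), \theta^{t+1}-\theta^t\rangle - \frac{L}{2}\|\theta^{t+1}-\theta^t\|^2.
\end{equation*}
The outer update (line~25 of \Cref{algo:meta-sl}) has the form $\theta^{t+1} = \theta^t + \kappa_\D \hat{g}_t$, where $\hat{g}_t$ is the sample-average estimator of $\E_{\xi\sim Q}\nabla_\theta \LL_\D(\theta^t,\phi^t_\xi(N_\A),\xi)$. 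I would split $\hat{g}_t = \nabla V(\theta^t) + z_t + n_t$, where $z_t$ is the bias induced by using the inner-loop attacker $\phi^t_\xi(N_\A)$ in place of the true best response (controlled by \Cref{lemma:approxgradv}) and $n_t$ is the zero-mean stochastic sampling noise with variance $\le \sigma^2/N_b$ (from \Cref{ass:grad}).

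Substituting the update into the ascent inequality and taking conditional expectations, I obtain after rearrangement the key descent-style relation
\begin{equation*}
    \E[V(\theta^{t+1})] \geq \E[V(\theta^t)] + \frac{\kappa_\D}{2}\E\|\nabla V(\theta^t)\|^2 - \frac{\kappa_\D}{2}\E\|z_t\|^2 - \frac{L\kappa_\D^2}{2}\cdot\frac{\sigma^2}{N_b},
\end{equation*}
using $\kappa_\D = 1/L$ and the elementary inequality $\langle a, a+b\rangle \geq \tfrac12\|a\|^2 - \tfrac12\|b\|^2$. Telescoping from $t=0$ to $N_\D-1$ and using $\E[V(\theta^{N_\D})] - V(\theta^0) \leq D_V$ yields
\begin{equation*}
    \frac{1}{N_\D}\sum_{t=0}^{N_\D-1}\E\|\nabla V(\theta^t)\|^2 \leq \frac{2L D_V}{N_\D} + \max_t\E\|z_t\|^2 + \frac{\sigma^2}{N_b}.
\end{equation*}
By \Cref{lemma:approxgradv} the last two terms are $O(\varepsilon^4 /V_\infty^4)$ under the stated sizes of $N_\A$ and $N_b$, and choosing $N_\D \geq 4D_V(2V_\infty + LD_\Theta)^2/(L\varepsilon^2)$ makes the first term $O(\varepsilon^2)$. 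Consequently at least one index $t^\star \in \{0,\ldots,N_\D-1\}$ satisfies $\E\|\nabla V(\theta^{t^\star})\|^2 \leq \varepsilon^2$.

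To translate this into the meta-FOSE condition on $(\theta^{t^\star},\{\phi^{t^\star}_\xi(N_\A)\}_{\xi\in\Xi})$, I use the triangle inequality
\begin{equation*}
    \E\|\nabla_\theta \E_{\xi\sim Q}\LL_\D(\theta^{t^\star},\phi^{t^\star}_\xi(N_\A),\xi)\| \leq \E\|\nabla V(\theta^{t^\star})\| + \E\|z_{t^\star}\| \leq \varepsilon + O(\varepsilon) ,
\end{equation*}
which certifies the leader's first-order condition (up to a harmless rescaling of constants absorbed into the definition of $\varepsilon$). The follower's first-order condition $\E\|\nabla_\phi \LL_\A(\theta^{t^\star},\phi^{t^\star}_\xi(N_\A),\xi)\|\leq\varepsilon$ is given directly by the second conclusion of \Cref{lemma:approxgradv}. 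Finally, the displayed complexity rates $N_\D = O(\varepsilon^{-2})$, $N_\A = O(\log\varepsilon^{-1})$, $N_b = O(\varepsilon^{-4})$ are read off from \Cref{lemma:approxgradv} together with the $\varepsilon^2$-scaling of the descent bound above.

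The main obstacle, in my view, is bookkeeping rather than conceptual: one must ensure that the tolerance on $\|z_t\|$ demanded by \Cref{lemma:approxgradv} (namely $O(\varepsilon^2/(V_\infty + LD_\Theta)^2)$) is tight enough so that the bias term in the descent telescoping dominates only at order $\varepsilon^2$, not $\varepsilon$; that is precisely why the Lemma is stated with a squared-$\varepsilon$ bound on $\E\|z_t\|$. A secondary subtlety is that the inner-loop error $z_t$ is correlated with $\theta^t$ across rounds, so the argument must handle expectations using the appropriate filtration (as done inside \Cref{lemma:approxgradv}) rather than treating $z_t$ as independent noise. With those issues resolved, the proof is a clean combination of smooth nonconvex SGD analysis and the two-timescale inner-loop guarantee.
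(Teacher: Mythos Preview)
Your proposal is correct and follows essentially the same template as the paper: the $L$-smoothness of $V$ from \Cref{liplemma}, the ascent lemma plus telescoping, the inner-loop bias control via \Cref{lemma:approxgradv}, and a pigeonhole argument for the existence of a good iterate. The only notable presentational difference is that the paper works with the directional quantity $e_t := \langle \nabla_\theta \ell_\D(\theta^t,\phi^t(N_\A)),\theta-\theta^t\rangle$ (which matches the variational form of \Cref{def:meta-fose} and accommodates compact $\Theta$), whereas you bound $\|\nabla V(\theta^t)\|^2$ directly and then pass to the FOSE condition via a triangle inequality; in the unconstrained regime the two are equivalent, and your route is arguably cleaner.

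One small bookkeeping point: your telescoped inequality carries the term $\max_t\E\|z_t\|^2$, but \Cref{lemma:approxgradv} as stated only bounds $\E\|z_t\|$. This is not a real gap---the proof of the lemma in fact controls $\E\|\phi^t_\xi(N_\A)-\phi^*_\xi\|^2$ (via PL-implied quadratic growth), from which the second-moment bound $\E\|z_t\|^2\le L_{12}^2\,\E\|\phi^t_\xi(N_\A)-\phi^*_\xi\|^2=O(\varepsilon^4)$ follows immediately---but you should cite that intermediate step rather than the lemma's final display.
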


\begin{proof} 
According to the update rule of the outer loop, (here we omit the projection analysis for ease of exposition)
\begin{align*} 
 \theta^{t+1} - \theta^t = \frac{1}{L}\hat{\nabla}_{\theta} \ell_\D (\theta^t, \phi^t(N_\A)), 
\end{align*}
one has, due to unbiasedness assumption, let $(\mathcal{F}_t)_{ 0 \leq t\leq N_\D}$ be the filtration generated by $\sigma(\theta^t| k \leq t)$ 
\begin{align*}
  \\  & \quad    \E[ \langle  \nabla_{\theta} \ell_\D (\theta^t, \phi^t(N_\A)),  \theta^{t+1} - \theta^t \rangle | \mathcal{F}_t ] \\ & =  \frac{1}{L}\E[ \| 
 \nabla_{\theta} \ell_\D(\theta^t, \phi^t(N_\A))\|^2 |\mathcal{F}_t] 
 \\ &  = L  \E \| \theta^{t+1} - \theta^t\|^2 | \mathcal{F}_t ], 
\end{align*}
which leads to
\begin{align*}
  \\ & \quad   \E[ \langle \nabla_{\theta} \ell_\D(\theta^t, \phi^*) , \theta^{t+1} - \theta^t \rangle | \mathcal{F}_t ]  \\ & = \E[ \langle z_t, \theta^t - \theta^{t+1}\rangle| \mathcal{F}_t ] + L \E[ \| \theta^{t+1} - \theta^t\|^2\|]  . 
\end{align*}
 Since $V(\cdot)$ is $L$-Lipschitz smooth,
 \begin{equation} \label{telev}
 \begin{aligned}
  & \quad  \E[ V( \theta^{t}) -  V(\theta^{t+1} )]  \\ & \leq  \E[ \langle  \nabla_{\theta} V(\theta^t ), \theta^{t} - \theta^{t+1} \rangle] + \frac{L}{2} \E[ \| \theta^{t+1} - \theta^t\|^2 ]
   \\ & \leq  \E[\langle z_t, \theta^{t+1}  - \theta^{t}\rangle] - \E[\langle \nabla_{\theta} \ell_\D(\theta^t, \phi^t(N_\A) ),  \theta^{t+1} - \theta^t\rangle] 
   \\ & + \frac{L}{2} \E[ \| \theta^{t+1} - \theta^t\|^2 ] 
    \\ &  \leq \E[\langle z_t, \theta^{t+1}  - \theta^{t}\rangle] - \frac{L}{2} \E[ \| \theta^{t+1} - \theta^t\|^2 ] .
 \end{aligned}
 \end{equation}

Fixing a $\theta \in \Theta$, let $e_t :=  \langle \nabla_{\theta} \ell_\D(\theta^t, \phi^t(N_\A) ), \theta - \theta^t \rangle$, we have
\begin{equation} \label{boundet}
 \begin{aligned}
     \E[e_t | \mathcal{F}_t ] & =  L \E[  \langle \theta^{t+1} - \theta^t   , \theta - \theta^{t} \rangle | \mathcal{F}_t] 
     \\ &  =  \E[ \langle  \nabla_{\theta} \ell_\D (\theta^t, \phi^t(N_\A)) - \nabla_{\theta} V(\theta^t), \theta^{t+1} - \theta^{t}  \rangle
      \\  + & \langle  \nabla_{\theta} V(\theta^t),  \theta^{t+1} - \theta^t \rangle]  + L \E[  \langle \theta^{t+1} - \theta^t,  \theta - \theta^{t+1} \rangle ]
    \\ & \leq   \E [ (  \|z_t\| + V_{\infty} + LD_{\Theta}) \| \theta^{t+1} - \theta^t \|]
 \end{aligned}
 \end{equation}

  By the choice of $N_b$, we have, since $V_{\infty} = \max\{ \max_{\theta} \|\nabla V(\theta)\|, 1\}$,
 \begin{align*}
      \E [ \|z_t\|] \leq L_{12} \E [ \| \phi^N - \phi^*\|] \leq \frac{ L\varepsilon^2}{ 4D_V ( 2 V_{\infty} +  L D_{\Theta} )} \leq V_{\infty} .
 \end{align*}
Thus, the relation \eqref{boundet} can be reduced to 
\begin{equation*}
\E [ e_t] \leq (2 V_{\infty}+ LD_{\Theta} ) \E [\|\theta^{t+1} - \theta^t\|]. 
\end{equation*}
Telescoping \eqref{telev} yields 
\begin{align*}
& \quad - D_V \leq \E[ V(\theta^0) - V(\theta^{N_\D})] \\ & \leq  D_{\Theta} \sum_{t=0}^{T-1} \E [ \| z_t \|] -  \frac{L}{2(2 V_{\infty} + L D_{\Theta})^2} \E[\sum_{t=0}^{T-1} \E[ e^2_t | \mathcal{F}_t ].
\end{align*}

Thus, setting $N_\D \geq \frac{ 4D_V (2 V_{\infty} + LD_{\Theta})^2}{L\varepsilon^2 }$, and then by Lemma \ref{lemma:approxgradv}, we obtain that, 
\begin{align*}
     \frac{1}{N_\D } \sum_{t=0}^{N_\D -1} \E [e^2_t ] 
     \leq \frac{\varepsilon^2 }{2}  + \frac{ 2 D_V(2V_{\infty}+ LD_{\Theta})^2  }{L N_\D }  \leq \varepsilon^2
\end{align*}
which implies there exists $t \in \{0, \ldots, N_\D - 1\}$ such that $\E [ e^2_t] \leq\varepsilon^2$.


\end{proof}

\subsection{Proof of \Cref{main_prop:generalization}}

 For two distributions $P$ and $Q$, defined over the sample space $\Omega$ and $\sigma$-field $\mathcal{F}$, the total variation between $P$ and $Q$ is $\|P- Q\|_{TV} := \sup_{ U \in \mathcal{F}} |P(U) - Q(U)|$. The celebrated result shows the following characterization of total variation,
 \begin{equation*}
    \|P-Q\|_{T V}=\sup _{f: 0 \leq f \leq 1} \mathbb{E}_{x \sim P}[f(x)]-\mathbb{E}_{x \sim Q}[f(x)].
 \end{equation*}

Since $q^{\theta}_i$ is factorizable, we have Lemma \ref{lem:marginal} to eliminate $\| q^{\theta}_i - q^{\theta}_{m+1}\|_{TV}$ dependence on $\theta$ by upper bounding it using another pair of mariginal distributions. 

 \begin{lemma}\label{lem:marginal}
      For any $\theta \in \Theta$, there exist marginals $d_i, d_{m+1}:  (S  \times  A_\A \times S)^{H-1} \times S \to [0,1]$ total variation $\|q^{\theta}_i - q^{\theta}_{m+1}\|_{TV}$ can be bounded by  $\|d_i - d_{m+1}\|_{TV}$.
 \end{lemma}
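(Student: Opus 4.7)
The proof plan is to exploit the fact that both trajectory distributions share an identical defender-policy factor, so that all $\theta$-dependence can be isolated into a common multiplicative term that cancels when bounding total variation. Writing the joint density explicitly,
\begin{equation*}
q_i^\theta(\tau) \;=\; p(s^1)\prod_{t=1}^H \pi_\D(a_\D^t|s^t;\theta)\,\pi_{\xi_i}(a_\A^t|s^t;\phi_i)\,\mathcal{T}(s^{t+1}|s^t,a_\D^t,a_\A^t),
\end{equation*}
and identifying the common factor $\mu_\theta(\tau):=p(s^1)\prod_{t}\pi_\D(a_\D^t|s^t;\theta)$, one obtains the pointwise identity $q_i^\theta(\tau)-q_{m+1}^\theta(\tau) = \mu_\theta(\tau)\cdot (d_i(\tau)-d_{m+1}(\tau))$, in which the $\theta$-dependence is confined to $\mu_\theta$.

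Next I would convert this factorization into a $\theta$-free TV bound via the contraction of total variation under a shared Markov kernel, which is cleanest to prove by coupling. I construct a joint law in which the two processes share the same sample of defender actions drawn from $\pi_\D(\cdot|s^t;\theta)$ given the current state, while the attacker-side randomness is coupled maximally between $\pi_{\xi_i}$ and $\pi_{\xi_{m+1}}$. Under this coupling the two trajectories can disagree only through the attacker's actions and the ensuing downstream states; the disagreement probability, which upper bounds $\|q_i^\theta-q_{m+1}^\theta\|_{TV}$, depends only on the residue factors, and hence equals $\|d_i-d_{m+1}\|_{TV}$ once one defines the marginal on $(S\times A_\A\times S)^{H-1}\times S$ by integrating out the defender actions under $\pi_\D^\theta$ (or equivalently any fixed reference kernel, as the bound is insensitive to this choice).

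The main obstacle is that $\pi_\D^\theta$ is interleaved in time with both the attacker's policy and the transition kernel, so a one-shot tensorization of TV does not directly apply and the marginal $d_i$ on attacker-only variables can itself depend on how defender actions are integrated out. I would handle this by an induction on the horizon $t=1,\ldots,H$, applying a one-step conditional TV contraction at each time step (using the variational characterization $\|P-Q\|_{TV}=\sup_{0\le f\le 1}\mathbb{E}_P f-\mathbb{E}_Q f$ together with the tower property) and then reassembling the per-step bounds into a single trajectory bound. The outcome is that the shared defender factor $\mu_\theta$ is invisible to the disagreement probability, which yields the claimed $\theta$-free inequality $\|q_i^\theta-q_{m+1}^\theta\|_{TV}\le\|d_i-d_{m+1}\|_{TV}$ and feeds directly into the generalization argument of \Cref{main_prop:generalization}.
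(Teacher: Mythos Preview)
Your proposal is correct and rests on the same key observation as the paper: the trajectory density factorizes as $q_i^\theta(\tau)=\mu_\theta(\tau)\,d_i(\tau)$ with the defender factor $\mu_\theta$ common to both $i$ and $m+1$, so the difference $q_i^\theta-q_{m+1}^\theta$ carries $\theta$ only through a shared multiplier. Where you diverge is in the execution. The paper does not couple or induct; it simply invokes the subadditivity of total variation for product-type factorizations,
\[
\|q_i^\theta-q_{m+1}^\theta\|_{TV}\;\le\;\sum_{t=1}^{H-1}\underbrace{\|\pi_\D(\cdot|s^t;\theta)-\pi_\D(\cdot|s^t;\theta)\|_{TV}}_{=0}\;+\;\|d_i-d_{m+1}\|_{TV},
\]
and defines $d_i$ as the literal product of the remaining factors $\prod_t\pi_{\xi_i}(a_\A^t|s^t;\phi_i)\,\mathcal{T}(s^{t+1}|s^t,a_\D^t,a_\A^t)$ rather than as a marginal obtained by integrating out $a_\D$. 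This is a one-line argument. Your route via maximal coupling and a horizon induction is more careful about the point you correctly flag---that the factors are conditionally interleaved rather than a genuine product measure---and it would yield a fully rigorous bound, but it is considerably heavier than what the paper actually does. Note also that your suggestion to obtain $d_i$ by integrating out defender actions under $\pi_\D^\theta$ would reintroduce $\theta$-dependence; the paper sidesteps this by keeping $d_i$ as the raw residual factor (a function still of the full trajectory variables), which is $\theta$-free by construction.
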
 

 \begin{proof}
     By factorization, for a trajectory $\tau$, any $\theta \in \Theta$, and any type index $i = 1, \ldots, m+1$: 
     \begin{align*} 
       & q^{\theta}_i ( \tau ) =  \prod_{t=1}^{H-1} \pi_\D ( a_\D^t| s^t; \theta )  \\ & \quad \quad  \prod_{t=1}^{H-1} \pi_{\xi_i}( a_\A^t | s^t, \phi_i) \prod_{t=1}^{H-1}  \mathcal{T}(s^{t+1}|s^t, a_\D^t, a_\A^t), 
     \end{align*}
     thus, by the inequality of product measure, 
     \begin{align*}
         &  \| q^{\theta}_i - q^{\theta}_{m+1}\|_{TV} \leq  \sum_{t=1}^{H-1} \underbrace{\|\pi_\D (\cdot | s_t; \theta )  - \pi_\D (\cdot | s_t; \theta ) \|_{TV}}_{0} \\ & \quad  + \| d_i - d_{m+1} \|_{TV},
     \end{align*}
     where $d_i$ and $d_{m+1}$ are the residue factors of $q^\theta_i$ and $q^{\theta}_{m+1}$ after removing $\pi_\D(\cdot | s^t; \theta)$. 
 \end{proof}


{The upper bound on the total variation of trajectory distributions, which determines the gradient adaptation, leads to an upper bound on the difference $| \hat{V}_{m+1} (\theta) - \hat{V} (\theta)|$, characterizing the generalization error.} 

\begin{proof}[Proof of \Cref{main_prop:generalization}]
    We start with the decomposition of the generalization error, for an arbitrary attack type $\xi_i$, $i = 1, \ldots, m$, fixing a policy $\theta \in \Theta$ determines jointly with each $\phi_i$ the trajectory distribution $q^{\theta}_i$.
    Denoting the one-step adaptation policy $\theta^{\prime}(\tau) = \theta + \eta \nabla J_\D (\tau)$ as a function of trajectory $\tau$, we have the following decomposition,
    \begin{align*} 
         & \quad \hat{V}_{m+1}  ( \theta) - \hat{V} (\theta) 
            = \E_{\tau_{m+1} \sim q^{\theta}_{m+1}} J_\D (\theta^{\prime}(\tau_{m+1}), \phi_{m+1}, \xi_{m+1}) \\
            & \quad - \frac{1}{m} \sum_{i=1}^m \E_{\tau_{i} \sim q^{\theta}_{i}} J_\D (\theta^{\prime}(\tau_{i}), \phi_{i}, \xi_{i})  \\
        &  =   \left. \begin{array}{c}
            \E_{\tau_{m+1} \sim q^{\theta}_{m+1}}  J_\D (\theta^{\prime}(\tau_{m+1}), \phi_{m+1}, \xi_{m+1})    \\
            -  \frac{1}{m} \sum_{i=1}^m \E_{\tau_{m+1} \sim q^{\theta}_{m+1}} J_\D (\theta^{\prime}(\tau_{m+1}), \phi_i, \xi_i)  
        \end{array} \right \} (i) \\   
        & \quad  \left.\begin{array}{c}
            +  \frac{1}{m} \sum_{i=1}^m \E_{\tau_{m+1} \sim q^{\theta}_{m+1}} J_\D (\theta^{\prime}(\tau_{m+1}), \phi_i, \xi_i)  \\
               -  \frac{1}{m} \sum_{i=1}^m \E_{\tau_{i} \sim q^{\theta}_{i}} J_\D (\theta^{\prime}(\tau_{i}), \phi_{i}, \xi_{i}) .
        \end{array}  \right\} (ii)
    \end{align*}
  We assume $(\tau_{m+1}, \tau_i)$ is drawn from a joint distribution which has marginals $q^{\theta}_{m+1}$ and $q^{\theta}_i$ and is corresponding to the maximal coupling of these two. Then,
    \begin{equation*}
        \tau_{m+1} \sim q^{\theta}_{m+1}, \quad  \tau_i \sim q^{\theta}_i, \quad  \mathbb{P} (\tau_{m+1} \neq \tau_i) = \| q^{\theta}_i - q^{\theta}_{m+1}\|_{TV}, 
    \end{equation*}
    if $\tau_{m+1}$ disagrees with $\tau_i$, for $(ii)$, we have, since $J_\D^{\theta} $ is Lipschitz with respect to $\theta$ (\Cref{ass:grad}(a)),
    \begin{align*}
         & \quad  \| J_\D ( \theta^{\prime}(\tau_{m+1}) , \phi_i, \xi_i) - J_\D ( \theta^{\prime}(\tau_i), \phi_i,  \xi_{i}) \|  \\
        & \leq  \eta G \| \hat{\nabla}_{\theta} J_\D( \tau_{m+1}) - \hat{\nabla}_{\theta} J_\D( \tau_i)  \|   \\
        & \leq 2 \eta G^2 ,
    \end{align*}
    as a result, denoting  the maximal coupling of $q^{\theta}_{m+1}$ and  $q^{\theta}_i$ as $\prod$ gives,
    \begin{align*}
      &    \E_{\tau_{m+1} \sim q^{\theta}_{m+1}} J_\D ( \theta^{\prime}(\tau_{m+1}), \phi_i, \xi_i) - \E_{\tau_{i} \sim q^{\theta}_{i}} J_\D ( \theta^{\prime}(\tau_i), \phi, \xi_{i})   \\
    & = \E_{( \tau_{m+1}, \tau_i) \sim \prod  }[J_\D ( \theta^{\prime}(\tau_{m+1}), \phi_i, \xi_i)  -  J_\D ( \theta^{\prime}(\tau_i), \phi, \xi_{i}) ] \\
     & \leq 2 \eta G^2 \| q^{\theta}_{m+1} -  q^{\theta}_i \|_{TV} \leq 2 \eta G^2  \|   d_i - d_{m+1}\|_{TV} ,
    \end{align*}
   where the last inequality is due to Lemma \ref{lem:marginal}.
    Averaging the $m$ empirical $\xi_i$'s yeilds the result:
    \begin{align*}
        (ii) \leq  \frac{2 \eta G^2}{m} \sum_{i=1}^m \| d_i - d_{m+1}\|_{TV} .
    \end{align*}


Since the trajectory distribution is a product measure, the difference between $q^{\theta}_i$ and $q^{\theta}_{m+1}$ only lies by attacker's type, $\|q^{\theta^{\prime}(\tau_{m+1})}_{m+1} -  q^{\theta^{\prime}(\tau_{m+1})}_i\|_{TV} =  \| q^{\theta}_{m+1} -  q^{\theta}_i\|_{TV} \leq \| d_{m+1} - d_{i}\|_{TV}$. 
      
Now we bound $(i)$, for ease of exposition we let $q^{\prime\prime} = q^{\theta^{\prime}(\tau_{m+1})}_{m+1}$ and $q^{\prime}_i := q^{\theta^{\prime}(\tau_{m+1})}_{i}$.
By the finiteness of total trajectory reward $R(\tau)$ for any trajectory $\tau$, $R(\tau) \leq \frac{1 - \gamma^{H}}{1 - \gamma }$, hence,

\begin{align*}
(i)  & =  \E_{\tau_{m+1} \sim q^{\theta}_{m+1}}  J_\D (\theta^{\prime}(\tau_{m+1}), \phi_{m+1}, \xi_{m+1}) \\
& \quad -  \frac{1}{m} \sum_{i=1}^m \E_{\tau_{m+1} \sim q^{\theta}_{m+1}} J_\D (\theta^{\prime}(\tau_{m+1}), \phi_i, \xi_i) \\
    &  = \E_{\tau_{m+1} \sim q^{\theta}_{m+1} } \left[\E_{ \tau^{\prime\prime} \sim q^{\prime\prime}} R_\D (  \tau^{\prime \prime}) -  \frac{1}{m} \sum_{i=1}^m \E_{\tau^{\prime}_i \sim q^{\prime}_i} R_\D (\tau^{\prime }_i) \right]   \\
    & \leq  \E_{\tau_{m+1} \sim q^{\theta}_{m+1}}  \frac{1 - \gamma^H}{ 1 - \gamma} \| q^{\prime\prime}_{m+1} - \frac{1}{m} \sum_{i=1}^m q^{\prime}_i \|_{TV}    \\
    & \leq  \frac{1 - \gamma^H}{ 1 - \gamma } \| d_{m+1} - \frac{1}{m} \sum_{i=1}^m d_i \|_{TV} . 
\end{align*}

\end{proof}

\section{Algorithm}
\label{app:algo}
\newcommand{\tp}{\mathsf{T}}
\setcounter{equation}{0}
\renewcommand{\theequation}{\Alph{section}.\arabic{equation}}
This section elaborates on the algorithmic details behind the proposed meta-Stackelberg learning.  To begin with, we first review the policy gradient method \cite{sutton_PG} in RL and its Monte-Carlo estimation. To simplify our exposition, we fix the attacker's policy $\phi$, and then the Markov game reduces to a single-agent MDP, where the optimal policy to be learned is the defender's $\theta$. 
\subsection{Policy Gradient}
The idea of the policy gradient method is to apply gradient ascent to the value function $J_\D$. Following \cite{sutton_PG}, we obtain $\nabla_\theta J_\D:=\E_{\tau\sim q(\theta)}[g(\tau;\theta)]$, where $g(\tau;\theta)=\sum_{t=1}^H \nabla_\theta \log \pi(a_\D^t|s^t;\theta)R(\tau)$ and $R(\tau)=\sum_{t=1}^H \gamma^t r(s^t, a_{\D}^t)$. Note that for simplicity, we suppress the parameter $\phi, \xi$ in the trajectory distribution $q$, and instead view it as a function of $\theta$. In numerical implementations, the policy gradient $\nabla_\theta J_\D$ is replaced by its Monte-Carlo (MC) estimation using sample trajectory. Suppose a batch of trajectories $\{\tau_i\}_{i=1}^{N_b}$, and $N_b$ denotes the batch size, then the MC estimation is 
\begin{equation}
    \hat{\nabla}_\theta J_\D(\theta,\tau):=1/{N_b} \sum_{\tau_i} g(\tau_i;\theta).
    \label{eq:mc-pg}
\end{equation}
 The same deduction also holds for the attacker's problem when fixing the defense $\theta$. 

\subsection{Debiased Meta-Learning and Reptile} 
Fixing the attacker's policy $\phi$, the defender's problem under one-step gradient adaptation reduces to the following.
\begin{equation}
    \max_{\theta} \E_{\xi \sim Q(\cdot)}\E_{\tau\sim q(\theta)}[J_\D(\theta+\eta \hat{\nabla}_\theta J_\D(\tau), \phi, \xi)].
    \label{eq:meta-def-sgd}
\end{equation}
 To apply the policy gradient method to (\ref{eq:meta-def-sgd}), one needs an unbiased estimation of the gradient of the objective function in (\ref{eq:meta-def-sgd}). Consider the gradient computation using the chain rule:
\begin{equation}
   \begin{aligned}
    &\nabla_\theta \E_{\tau\sim q(\theta)}[J_\D(\theta+\eta \hat{\nabla}_\theta J_\D(\tau), \phi, \xi )]\\
    &=\E_{\tau\sim q(\theta)}\{\underbrace{\nabla_\theta J_\D(\theta+\eta\hat{\nabla}_\theta J_\D(\tau), \phi, \xi)(I+\eta \hat{\nabla}^2_\theta J_D(\tau))}_{\text{\ding{172}}}\\
    &\quad  +\underbrace{J_\D(\theta+\eta \hat{\nabla}_\theta J_\D(\tau))\nabla_\theta \sum_{t=1}^H \log\pi(a^t|s^t;\theta)}_{\text{\ding{173}}}\}.
\end{aligned} 
\label{eq:chain}
\end{equation}
The first term results from differentiating the integrand $J_\D(\theta+\eta \hat{\nabla}_\theta J_\D(\tau), \phi, \xi )$ (the expectation is taken as integration), while the second term is due to the differentiation of $q(\theta)$. One can see from the first term that the above gradient involves a Hessian $\hat{\nabla}^2 J_\D$, and its sample estimate is given by the following. For more details on this Hessian estimation, we refer the reader to \cite{fallah2021convergence}.
\begin{align}
    \hat{\nabla}^2 J_\D(\tau)=\frac{1}{N_b}\sum_{i=1}^{N_b} [ g(\tau_i;\theta)\nabla_\theta \log q(\tau_i;\theta)^\tp+\nabla_\theta g(\tau_i;\theta)]
    \label{eq:hessian-est}
\end{align}
Finally, to complete the sample estimate of $\nabla_\theta \E_{\tau\sim q(\theta)}[J_\D(\theta+\eta \hat{\nabla}_\theta J_\D(\tau), \phi, \xi )]$, one still needs to estimate $\nabla_\theta J_\D(\theta+\eta\hat{\nabla}_\theta J_\D(\tau), \phi, \xi)$ in the first term. To this end, we need to first collect a batch of sample trajectories ${\tau'}$ using the adapted policy $\theta'=\theta+\eta \hat{\nabla}_\theta J_D(\tau)$. Then, the policy gradient estimate of $\hat{\nabla}_\theta J_\D(\theta')$ proceeds as in (\ref{eq:mc-pg}). To sum up, constructing an unbiased estimate of (\ref{eq:chain}) takes two rounds of sampling. The first round is under the meta policy $\theta$, which is used to estimate the Hessian (\ref{eq:hessian-est}) and to adapt the policy to $\theta'$. The second round aims to estimate the policy gradient $\nabla_\theta J_\D(\theta+\eta\hat{\nabla}_\theta J_\D(\tau), \phi, \xi)$ in the first term in (\ref{eq:chain}).

To avoid Hessian estimation in implementation, we employ a first-order meta-learning algorithm called Reptile~\cite{nichol2018first}. The gist is to simply ignore the chain rule and update the policy using the gradient $\nabla_\theta J_\D(\theta',\phi,\xi)|_{\theta'=\theta+\eta\hat{\nabla}_\theta J_\D(\tau)}$. Naturally, without the Hessian term, the gradient in this update is biased, yet it still points to the ascent direction as argued in \cite{nichol2018first}, leading to effective meta policy. The advantage of Reptile is more evident in multi-step gradient adaptation. Consider a $l$-step gradient adaptation, the chain rule computation inevitably involves multiple Hessian terms (each gradient step brings a Hessian term) as shown in \cite{fallah2021convergence}. In contrast, Reptile only requires first-order information, and the meta-learning algorithm ($l$-step adaptation) is given by \Cref{algo:meta-rl}. 
\begin{algorithm}[ht]
\begin{algorithmic}[1]
\STATE \textbf{Input: } the type distribution $Q$, step size parameters $\kappa,\eta$
\STATE \textbf{Output: }$\theta^T$
\STATE randomly initialize $\theta^0$
\FOR{iteration $t=1$ to $T$}
\STATE Sample a batch $\hat{\Xi}$ of $K$ attack types from $Q(\xi)$;
\FOR{each $\xi\in \hat{\Xi}$}
\STATE $\theta_\xi^{t}(0) \leftarrow \theta^{t}$
\FOR{$k=0$ to $l-1$}
\STATE Sample a batch trajectories ${\tau}$ of the horizon length $H$ under $\theta^{t}_\xi(k)$;
\STATE Evaluate $\hat{\nabla}_\theta J_\D(\tau)$ using MC in (\ref{eq:mc-pg});
\STATE $\theta_\xi^{t}(k+1) \leftarrow \theta_\xi^{t}(k) + \kappa \hat{\nabla}_\theta J_\D(\tau)$
\ENDFOR
\ENDFOR
\STATE Update $\theta^{t+1} \leftarrow \theta^{t}+1/K \sum_{\xi\in \hat{\Xi}}(\theta^t_\xi(l)-\theta^t)$;
\ENDFOR
\end{algorithmic}
 \caption{Reptile Meta-Reinforcement Learning ($l$-step adaptation)}
 \label{algo:meta-rl}
\end{algorithm}
\section{Experiment Setup}
\label{app:exp-setup}

\paragraph{Datasets}We consider two datasets: MNIST \cite{lecun1998gradient} and CIFAR-10 \cite{krizhevsky2009learning}, and default $i.i.d.$ local data distributions, where we randomly split each dataset into $n$ groups, each with the same number of training samples. MNIST includes 60,000 training examples and 10, 000 testing examples, where each example is a 28$\times$28 grayscale image, associated with a label from 10 classes. CIFAR-10 consists of 60,000 color images in 10 classes of which there are 50, 000 training examples and 10,000 testing examples. 
For the {\em non-i.i.d.} setting (see Fig.~\ref{fig:ablation}(d) in Appendix~\ref{app:add-exp}), we follow the method of~\cite{fang2020local} to quantify the heterogeneity of the data. We split the workers into $C=10$ (for both MNIST and CIFAR-10) groups and model the \textit{non-i.i.d.} federated learning by assigning a training instance with label $c$ to the $c$-th group with probability $q$ and to all the groups with probability $1-q$. A higher $q$ indicates a higher level of heterogeneity.

\paragraph{Federated learning setting}We use the following default parameters for the FL environment: local minibatch size = 128, local iteration number = 1, learning rate = 0.05, number of workers = 100, number of backdoor attackers = 5, number of untargeted model poisoning attackers = 20, subsampling rate = $10\%$, and the number of FL training rounds = 500 (resp. 1000) for MNIST (resp. CIFAR-10). For MNIST, we train a neural network classifier of 8×8, 6×6, and 5×5 convolutional filter layers with ReLU activations followed by a fully connected layer and softmax output. For CIFAR-10, we use the ResNet-18 model~\cite{he2016deep}.
We implement the FL model with PyTorch~\cite{paszke2019pytorch} and run all the experiments on the same 2.30GHz Linux machine with 16GB NVIDIA Tesla P100 GPU. 
We use the cross-entropy loss as the default loss function and stochastic gradient descent (SGD) as the default optimizer. For all the experiments except Fig.~\ref{fig:ablation}(c) and~\ref{fig:ablation}(d), we fix the initial model and random seeds of subsampling for fair comparisons.

\paragraph{Baselines} We evaluate our defense method against various state-of-the-art attacks, including non-adaptive and adaptive untargeted model poison attacks (i.e., IPM~\cite{xie2020fall}, LMP~\cite{fang2020local}, RL~\cite{li2022learning}), as well as backdoor attacks (BFL~\cite{bagdasaryan2020backdoor} without model replacement, BRL~\cite{li2023learning}, with tradeoff parameter  $\lambda=0.5$, DBA~\cite{xie2019dba} where each selected attacker randomly chooses a sub-trigger as shown in Fig.~\ref{fig:cifar10_dba}, PGD attack~\cite{wang2020attack} with a projection norm of 0.05), and a combination of both types. To establish the effectiveness of our defense, we compare it with several strong defense techniques. These baselines include defenses implemented during the training stage, such as Krum~\cite{blanchard2017machine}, ClipMed~\cite{yin2018byzantine,sun2019can,li2022learning} (with norm bound 1), FLTrust~\cite{cao2020FLTrust} with 100 root data samples and bias $q=0.5$, training stage CRFL~\cite{xie2021crfl} with norm bound of 0.02 and noise level $1e-3$ as well as post-training defenses like NeuroClip~\cite{wang2023mm} and Prun~\cite{wu2020mitigating}. We use the  original clipping thresholds 7 in~\cite{wang2023mm} and set the default Prun number to 256.

\begin{table}[!h]
\centering
\begin{tabular}{@{\extracolsep{1pt}}lccc}
    \toprule
        Attack type & \multicolumn{1}{c}{Category} & \multicolumn{1}{c}{Adaptivity} \\
        \midrule
        IPM \cite{xie2020fall} & untargeted model poisoning & non-adaptive \\
        LMP \cite{fang2020local} & untargeted model poisoning & non-adaptive \\
        BFL \cite{bagdasaryan2020backdoor}& backdoor & non-adaptive \\
        DBA \cite{xie2019dba}& backdoor & non-adaptive \\
        RL \cite{li2022learning}& untargeted model poisoning & adaptive \\
        BRL \cite{li2023learning}& backdoor & adaptive \\
    \bottomrule 
\end{tabular}
\vspace{0.2cm}
\caption{A summary of all attacks in the experiments, with their corresponding categories and adaptivities.}
\label{table:attacks}
\vspace{-0.2cm}
\end{table}

\begin{table*}[h!]
\centering
\scriptsize
\begin{tabular}{@{\extracolsep{1pt}}lcccc}
    \toprule
        Settings & \multicolumn{1}{c}{Pre-training} & \multicolumn{1}{c}{Online-adaptation} & \multicolumn{1}{c}{Related figures/tables} \\
        \midrule
        meta-RL & \{NA, IPM, LMP, BFL, DBA\} & \{IPM, LMP, BFL, DBA, IPM+BFL, LMP+DBA\}  & \cref{table:1}, \cref{fig:untargeted,fig:additional,fig:ablation} \\
        meta-SG & \{RL, BRL\} & \{IPM, LMP, RL, BRL\} & \cref{table:2,table:6}, \cref{fig:untargeted,fig:additional,fig:ablation,fig:backdoors} \\
        meta-SG+ & \{NA, IPM, LMP, BFL, DBA, RL, BRL\} & \{IPM, LMP, RL, BRL\} & \cref{fig:untargeted,fig:additional} \\
    \bottomrule 
\end{tabular}
\vspace{0.2cm}
\caption{A table showcasing the attacks and defenses employed during pre-training and online-adaptation, with links to the relevant figures or tables. RL and BRL are initially target on \{FedAvg, ClipMed, Krum, FLTrust+NC\} during pre-training.}
\label{table:setting}
\vspace{-0.2cm}
\end{table*}

\subsection{Meta Reinforcement Learning Setups}
\paragraph{Reinforcement learning setting.}In our RL-based defense, since both the action space and state space are continuous, we choose the state-of-the-art Twin Delayed DDPG (TD3)~\cite{fujimoto2018addressing} algorithm to individually train the untargeted defense policy and the backdoor defense policy. We implement our simulated environment with OpenAI Gym~\cite{1606.01540} and adopt OpenAI Stable Baseline3~\cite{stable-baselines3} to implement TD3.
The RL training parameters are described as follows: 
the number of FL rounds = 300 rounds, policy learning rate = 0.001, the policy model is MultiInput Policy, batch size = 256, and $\gamma$ = 0.99 for updating the target networks.
The default $\lambda=0.5$ when calculating the backdoor rewards.

\paragraph{Meta-learning setting} 
The attack domains (i.e., potential attack sets) are built as follows:
For meta-RL, we consider IPM~\cite{xie2020fall}, LMP~\cite{fang2020local}, EB~\cite{bhagoji2019analyzing} as three possible attack types. For meta-SG against untargeted model poisoning attack, we consider RL-based attacks~\cite{li2022learning} trained against Krum~\cite{blanchard2017machine} and ClipMed~\cite{li2022learning, yin2018byzantine, sun2019can} as initial attacks. For meta-SG against backdoor attack, we consider RL-based backdoor attacks~\cite{li2023learning} trained against Norm-bounding~\cite{sun2019can} and NeuroClip~\cite{wang2023mm} (Prun~\cite{wu2020mitigating}) as initial attacks. For meta-SG against mix type of attacks, we consider both RL-based attacks~\cite{li2022learning} and RL-based backdoor attacks~\cite{li2023learning} described above as initial attacks.

At the pre-training stage, we set the number of iterations $T=100$. In each iteration, we uniformly sample $K=10$ attacks from the attack type domain (see Algorithm~\ref{algo:meta-rl} and Algorithm~\ref{algo:meta-sl}). For each attack, we generate a trajectory of length $H=200$ for MNIST ($H=500$ for CIFAR-10), and update both attacker's and defender's policies for 10 steps using TD3 (i.e., $l=N_{\A}=N_{\D}=10$). At the online adaptation stage, the meta-policy is adapted for $100$ steps using TD3 with $T=10$, $H=100$ for MNIST ($H=200$ for CIFAR-10), and $l=10$. Other parameters are described as follows: single task step size $\kappa=\kappa_{\A}=\kappa_{\D}= 0.001$, meta-optimization step size $=1$, adaptation step size $=0.01$.

\paragraph{Space compression}
Following the BSMG model, it is natural to use $w_g^t$ as the state, and $\{\widetilde{g}_k^t\}_{k=1}^{M_1+ M_2}$ or $w_g^{t+1}$ as the action for the attacker and the defender, respectively, if the federated learning model is small.
However, when we use federated learning to train a high-dimensional model (i.e., a large neural network), the original state/action space will lead to an extremely large search space that is prohibitive in terms of training time and memory space. We adopt the RL-based attack in~\cite{li2022learning} to simulate an adaptive model poisoning attack and the RL-based local search in~\cite{li2023learning} to simulate an adaptive backdoor attack, both having a $3$-dimensioanl real action spaces after space comparison (see ). We further restrict all malicious devices controlled by the same attacker to take the same action. To compress the state space, we reduce $w_g^t$ to only include its last two hidden layers for both attacker and defender.

Our approach rests on an RL-based synthesis of existing specialized defense methods against mixed attacks, where multiple defenses can be selected at the same time and combined with dynamically tuned hyperparameters. The following specialized defenses are selected for our implementation. For training stage aggregation-based defenses, we first normalize {the magnitude of} all gradients to a threshold $\alpha\in(0,\max_{i\in \mathcal{S}^t}\{\|g_i^t\|\}]$, then apply coordinate-wise trimmed mean~\cite{yin2018byzantine} with trimmed rate $\beta\in [0,1)$. For post-training defense, NeuroClip~\cite{wang2023mm} with clip range $\varepsilon$ or Prun~\cite{wu2020mitigating} with mask rate $\sigma$ is applied. The concrete approach used in each of the above defenses can be replaced by other defense methods. The key novelty of our approach is that instead of using a fixed and hand-crafted algorithm as in existing approaches, we use RL to optimize the policy network $\pi_\D(a_\D^t|s^t;\theta)$. Similar to RL-based attacks, the most general action space could be the set of global model parameters. However, the high dimensional action space will lead to an extremely large search space that is prohibitive in terms of training time and memory space. Thus, we limit the action space to $a^t_\D:=(\alpha^t, \beta^t, \varepsilon^t \slash \sigma^t)$. Note that the execution of our defense policy is lightweight, without using any extra data for evaluation/validation.

\begin{figure*}
\centering
    \includegraphics[width=0.6\textwidth]{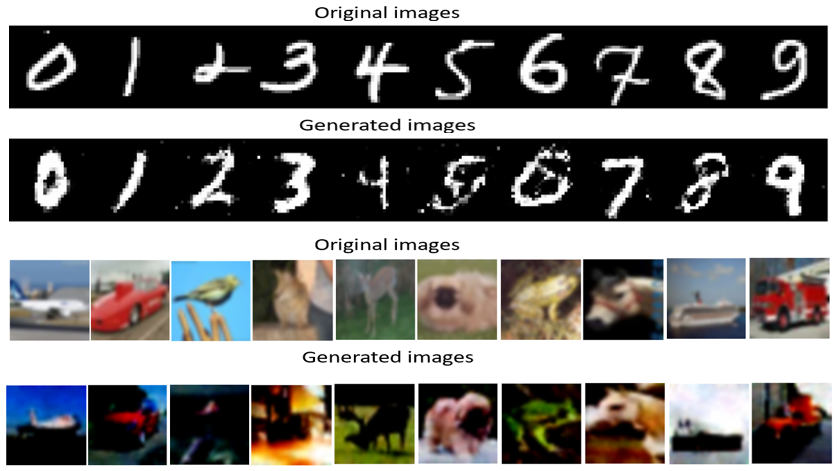}
  \caption{Self-generated MNIST images using conditional GAN~\cite{mirza2014conditional} (second row) and CIFAR-10 images using a diffusion model~\cite{sohl2015deep} (fourth row). 
  }
  \label{fig:generated}
\end{figure*}

\begin{figure*}
\centering
    \includegraphics[width=0.7\textwidth]{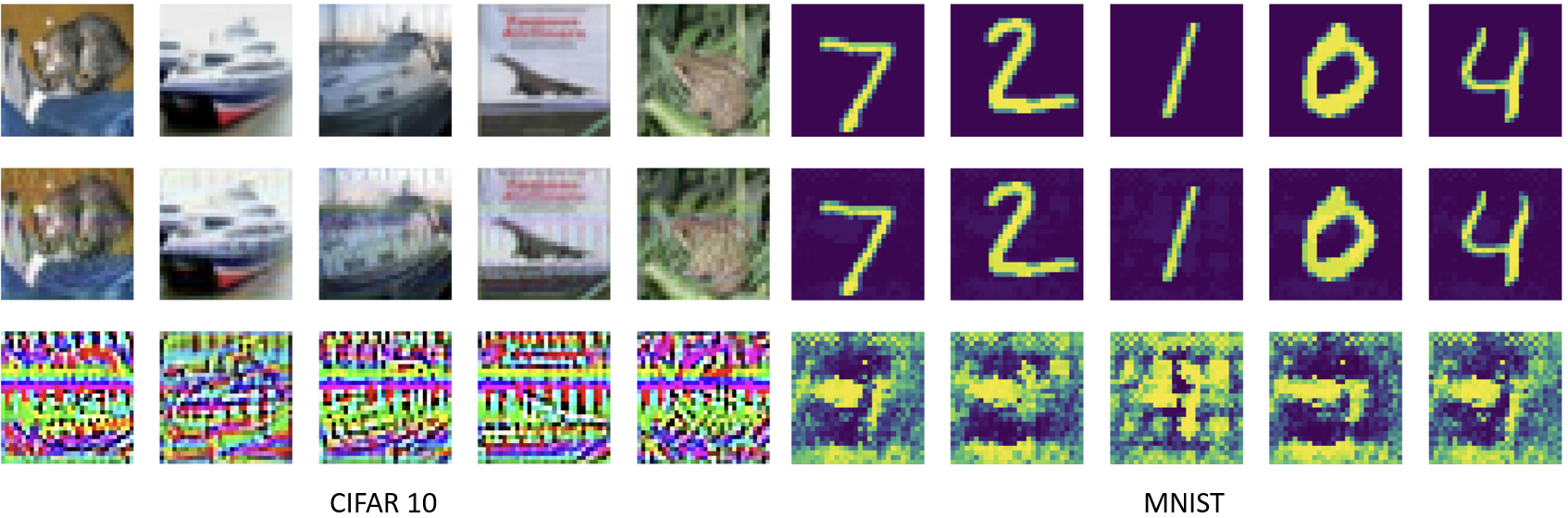}
  \caption{Generated backdoor triggers using GAN-based models~\cite{doan2021lira}. Original image (first row). Backdoor image (second row). Residual (third row).}
  \label{fig:gan_trigger}
\end{figure*}

\begin{figure*}
    \centering
    \includegraphics[width=.99\textwidth]{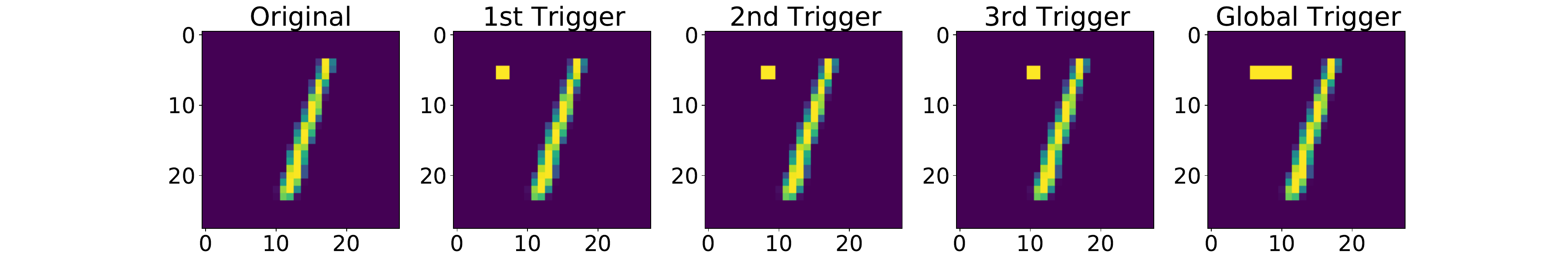}
     \caption{\small{MNIST backdoor trigger patterns. The global trigger is considered the default poison pattern and is used for backdoor accuracy evaluation. The sub-triggers are used by pre-training and DBA only.}}
    \label{fig:mnist_dba}
\end{figure*}

\begin{figure*}
    \centering
    \includegraphics[width=1\textwidth]{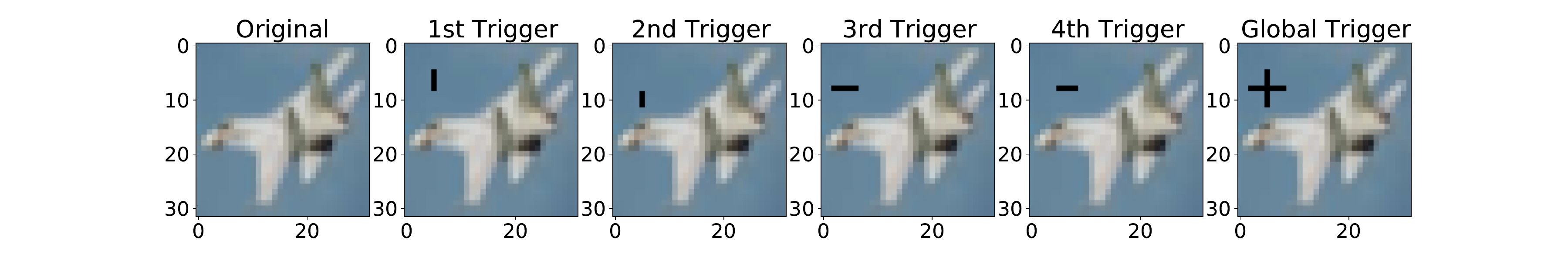}
    \caption{\small{CIFAR-10 fixed backdoor trigger patterns. The global trigger is considered the default poison pattern and is used for online adaptation stage backdoor accuracy evaluation. The sub-triggers are used by pre-training and DBA only.}}
    \label{fig:cifar10_dba}
\end{figure*}

\subsection{Self-generated data}
\label{app:self-data}
We begin by acknowledging that the server only holds a small amount of initial data (200 samples with $q=0.1$ in this work) learned from first 20 FL rounds using inverting gradient~\cite{geiping2020inverting}, to simulate training set with 60,000 images (for both MNIST and CIFAR-10) for FL. This limited data is augmented using several techniques, such as normalization, random rotation, and color jittering, to create a more extensive and varied dataset, which will be used as an input for generative models. 

For MNIST, we use the augmented dataset to train a Conditional Generative Adversarial Network (cGAN) model~\cite{mirza2014conditional,odena2017conditional} built upon the codebase in~\cite{Pytorch-cGAN}. The cGAN model for the MNIST dataset comprises two main components - a generator and a discriminator, both of which are neural networks. Specifically, we use a dataset with 5,000 augmented data as the input to train cGAN, keep the network parameters as default, and set the training epoch as 100.

For CIFAR-10, we leverage a diffusion model implemented in~\cite{cifar-diffusion} that integrates several recent techniques, including a Denoising Diffusion Probabilistic Model (DDPM)~\cite{ho2020denoising}, DDIM-style deterministic sampling~\cite{song2020denoising}, continuous timesteps parameterized by the log SNR at each timestep~\cite{kingma2021variational} to enable different noise schedules during sampling. The model also employs the `v' objective, derived from Progressive Distillation for Fast Sampling of Diffusion Models \cite{salimans2022progressive}, enhancing the conditioning of denoised images at high noise levels. During the training process, we use a dataset with 50,000 augmented data samples as the input to train this model, keep the parameters as default, and set the training epoch as 30.

\subsection{Simulated Environment}
\label{app:sim-env}
To further improve efficiency and privacy, the defender simulates a smaller FL system when solving the game. In our experiments, we include 10 clients in pre-training while using 100 clients in the online FL system. The simulation relies on a smaller dataset (generated from root data) and endures a shorter training time (100 (500) FL rounds for MINST (CIFAR-10) v.s. 1000 rounds in online FL experiments). Although the offline simulated Markov game deviates from the ground truth, the learned meta-defense policy can quickly adapt to the real FL during the online adaptation, as shown in our experiment section.

\paragraph{Backdoor attacks}
We consider the trigger patterns shown in Fig.~\ref{fig:gan_trigger} and Fig.~\ref{fig:cifar10_dba} for backdoor attacks. For triggers generated using GAN (see Fig.~\ref{fig:gan_trigger}), the goal is to classify all images of different classes to the same target class (all-to-one). For fixed patterns (see Fig.~\ref{fig:cifar10_dba}), the goal is to classify images of the airplane class to the truck class (one-to-one). The default poisoning ratio is 0.5 in both cases. The global trigger in Fig.~\ref{fig:cifar10_dba} is considered the default poison pattern and is used for the online adaptation stage for backdoor accuracy evaluation.
In practice, the defender (i.e., the server) does not know the backdoor triggers and targeted labels. To simulate a backdoor attacker's behavior, we first implement multiple GAN-based attack models as in~\cite{doan2021lira} to generate worst-case triggers {(which maximizes attack performance given backdoor objective)} in the simulated environment. 
Since the defender does not know the {poisoning ratio $\rho_i$} and target label of the attacker's poisoned dataset (involved in the attack objective $F'$), we {approximate} the attacker's reward function as $r_{\mathcal{A}}^t=-F''(\widehat{w}^{t+1}_g)$, $F''(w):=\min_{c\in C} [\frac{1}{M_1}\sum_{i=1}^{M_1}\frac{1}{|D_i'|}\sum_{j=1}^{|D_i'|}\ell(w,(\hat{x}_i^j,c))]-\frac{1}{M_2}\sum_{i=M_1+1}^M f(\omega, D_i)$. $F''$ differs $F'$ only in the first $M_1$ clients, where we use a strong target label (the minimizer) as a surrogate to the true label $c^*$.

\begin{figure*}
    \centering
    \includegraphics[width=0.7\textwidth]{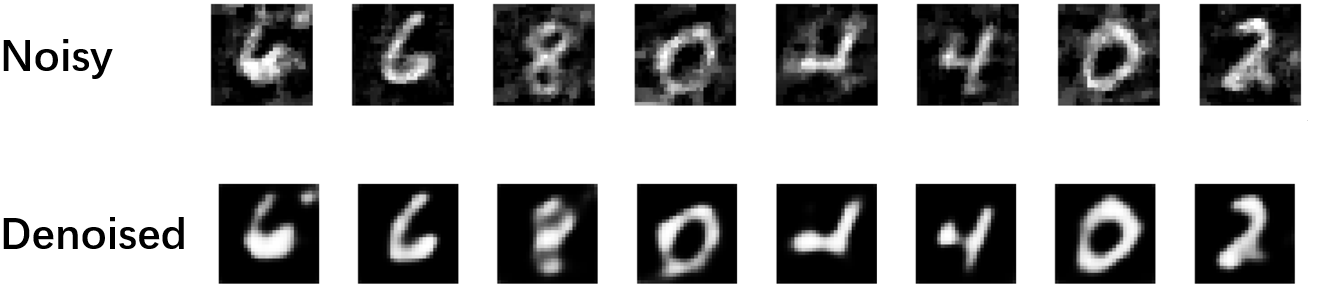}
    \caption{\small{Examples of reconstructed images using inverting gradient (before and after denoising)}}
    \label{fig: mnist_distribution.png}
\end{figure*}

\begin{figure*}
    \centering
    \includegraphics[width=0.4\textwidth]{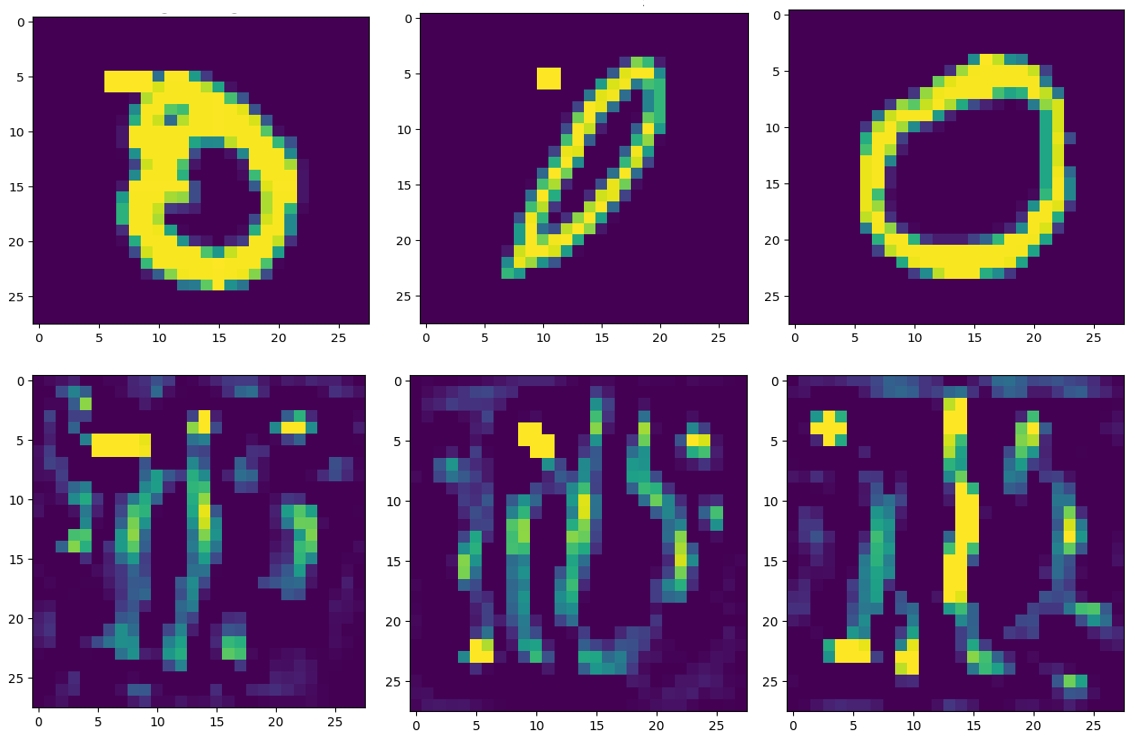}
    \caption{\small{Reversed MNIST backdoor trigger patterns.
    Original triggers (first row). Reversed triggers (second row)}}
    \label{fig:reversed_all.png}
\end{figure*}

\paragraph{Inverting gradient/reverse engineering}
In invert gradient, we set the step size for inverting gradients $\eta' = 0.05$,  the total variation parameter $\beta =0.02$, optimizer as Adam, the number of iterations for inverting gradients $max\_iter = 10,000$, and learn the data distribution from scratch. The number of steps for distribution learning is set to $\tau_E=100$. 32 images are reconstructed (i.e., $B'=32$) and denoised in each FL epoch. If no attacker is selected in the current epoch, the aggregate gradient estimated from previous model updates is reused for reconstructing data. To build the denoising autoencoder, a Gaussian noise sampled from $0.3\mathcal{N}(0,1)$ is added to each dimension of images in $D_{reconstructed}$, which are then clipped to the range of [0,1] in each dimension. The result is shown in Fig.~\ref{fig: mnist_distribution.png}.

In the process of reverse engineering, we use Neural Cleanse \cite{wang2019neural} to find hidden triggers (See Fig.~\ref{fig:reversed_all.png}) connected to backdoor attacks. This method is essential for uncovering hidden triggers and for preventing such attacks. 
In particular, we use the global model, root-generated data, and inverted data as inputs to reverse backdoor triggers. The Neural Cleanse class from ART is used for this purpose. The reverse engineering process in this context involves using the generated backdoor method from the Neural Cleanse defense to find the trigger pattern to which the model is sensitive. The returned pattern and mask can be visualized to understand the nature of the backdoor. 

\paragraph{Online adaptation and execution}
During the online adaptation stage, the defender starts by using the meta-policy learned from the pre-training stage to interact with the true FL environment while collecting new samples $\{s,a,\widetilde{r},s'\}$. Here, the estimated reward $\widetilde{r}$ is calculated using the self-generated data and simulated triggers from the pertaining stage, as well as new data inferred online through methods such as inverting gradient~\cite{geiping2020inverting} and reverse engineering~\cite{wang2019neural}. Inferred data samples are blurred using data augmentation~\cite{shorten2019survey} real distributions) while protecting clients' privacy. For a fixed {number} of FL rounds ({e.g., $50$ for MNIST and $100$ for CIFAR-10 in our experiments}), the defense policy will be updated using gradient ascents from the collected trajectories. Ideally, the defender's adaptation time  (including the time for collecting new samples and updating the policy) should be significantly less than the whole FL training period so that the defense execution will not be delayed. In real-world FL training, the server typically waits for up to $10$ minutes before receiving responses from the clients~\cite{bonawitz2019towards,kairouz2021advances}, enabling defense policy's online update with enough episodes.

\section{Additional Experiment Results}
\label{app:add-exp}
\paragraph{More untargetd model poisoning/backdoor results.} As shown in Fig.~\ref{fig:additional}, similar to results in Fig.~\ref{fig:untargeted} as described in Section~\ref{sec:exp}, meta-SG plus achieves the best performance (slightly better than meta-SG) under IPM attacks for both MNIST and CIFAR-10. On the other hand, meta-SG performs the best (significantly better than meta-RL) against RL-based attacks for both MNIST and CIFAR-10. Notably, Krum can be easily compromised by RL-based attacks by a large margin.  In contrast, meta-RL gradually adapts to adaptive attacks, while meta-SG displays near-immunity against RL-based attacks. In addition, we illustrate results under backdoor attacks and defenses on MNIST in~\cref{table:2}.

\begin{figure*}[t]
    \centering
  \begin{subfigure}{0.24\textwidth}
      \centering
          \includegraphics[width=\textwidth]{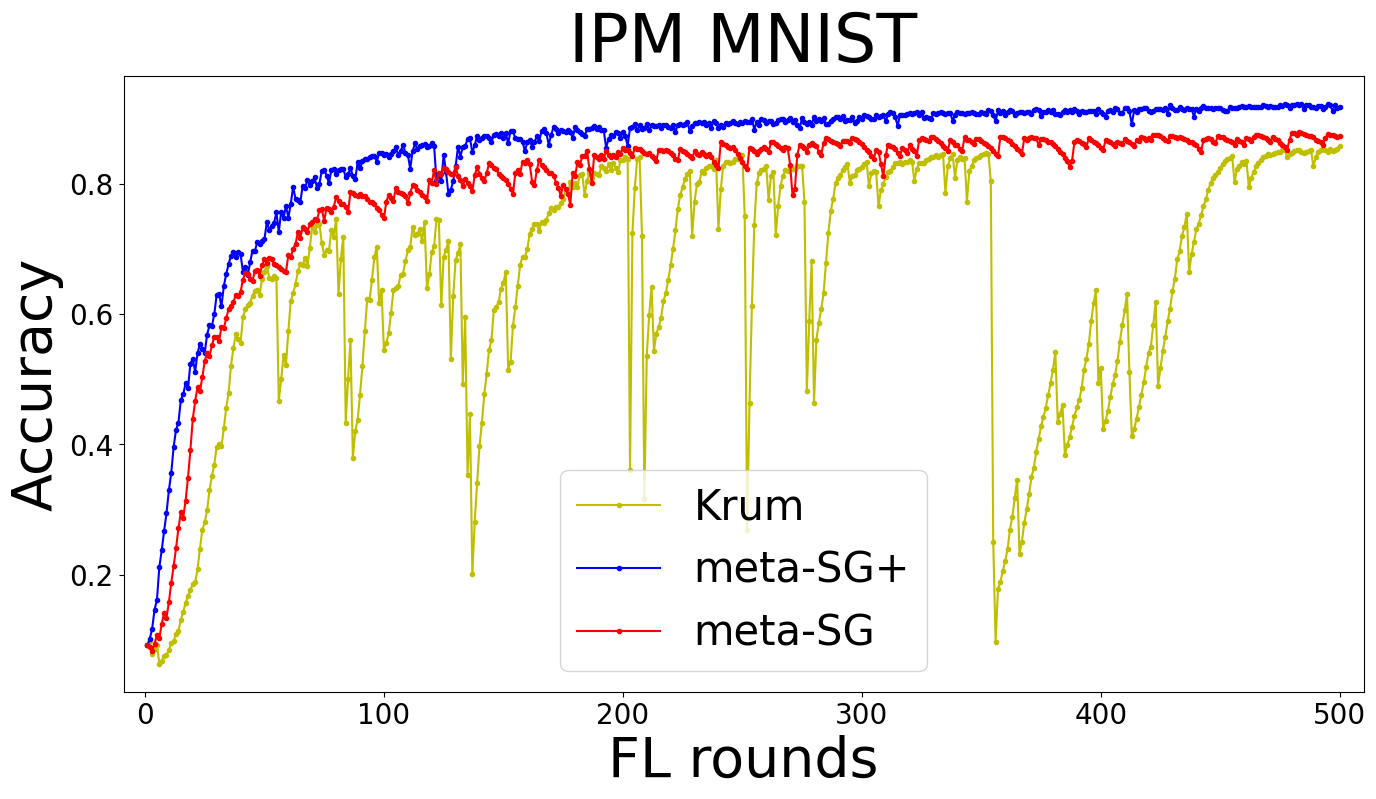}
  \end{subfigure}
  \hfill
    \begin{subfigure}{0.24\textwidth}
      \centering
          \includegraphics[width=\textwidth]{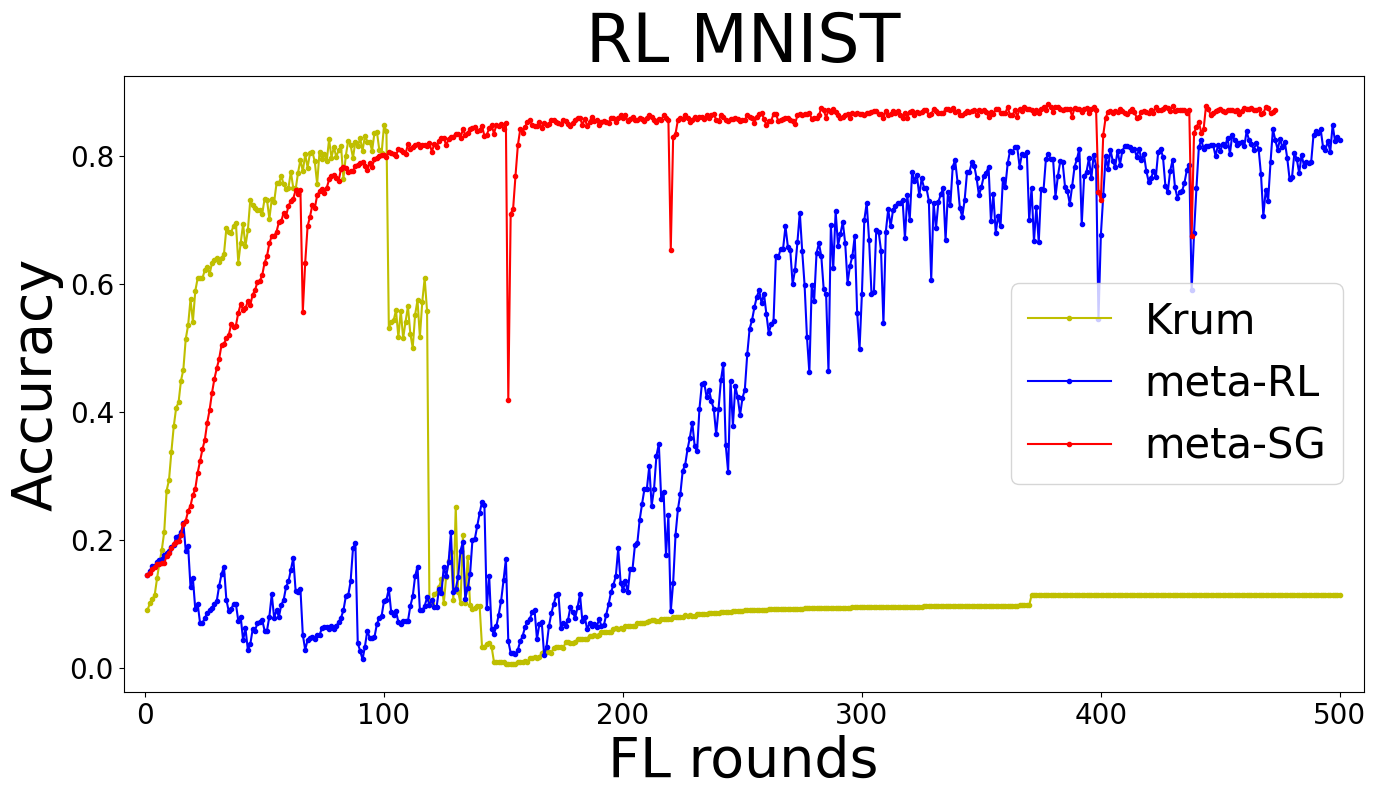}
  \end{subfigure}
  \hfill
    \begin{subfigure}{0.24\textwidth}
      \centering
          \includegraphics[width=\textwidth]{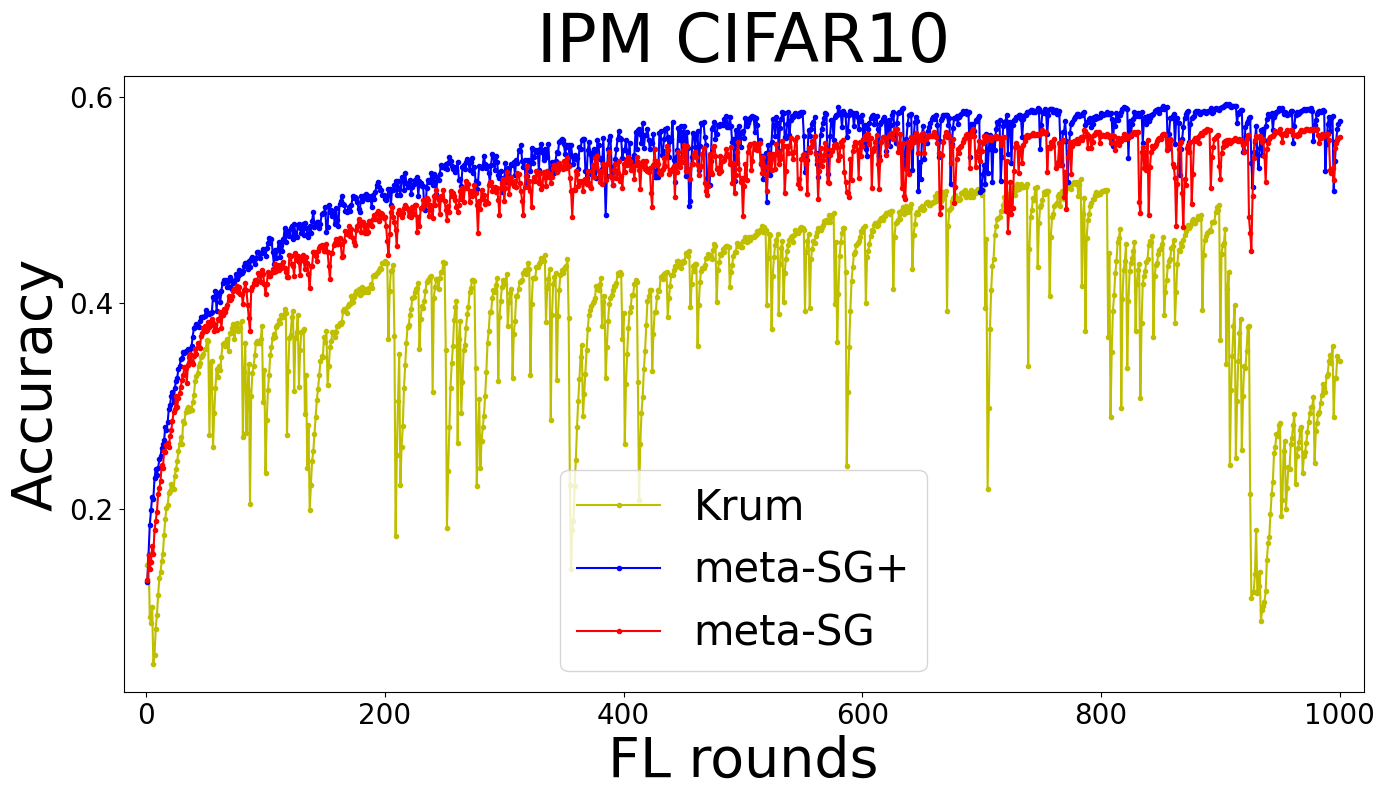}
  \end{subfigure}
  \hfill
    \begin{subfigure}{0.24\textwidth}
      \centering
          \includegraphics[width=\textwidth]{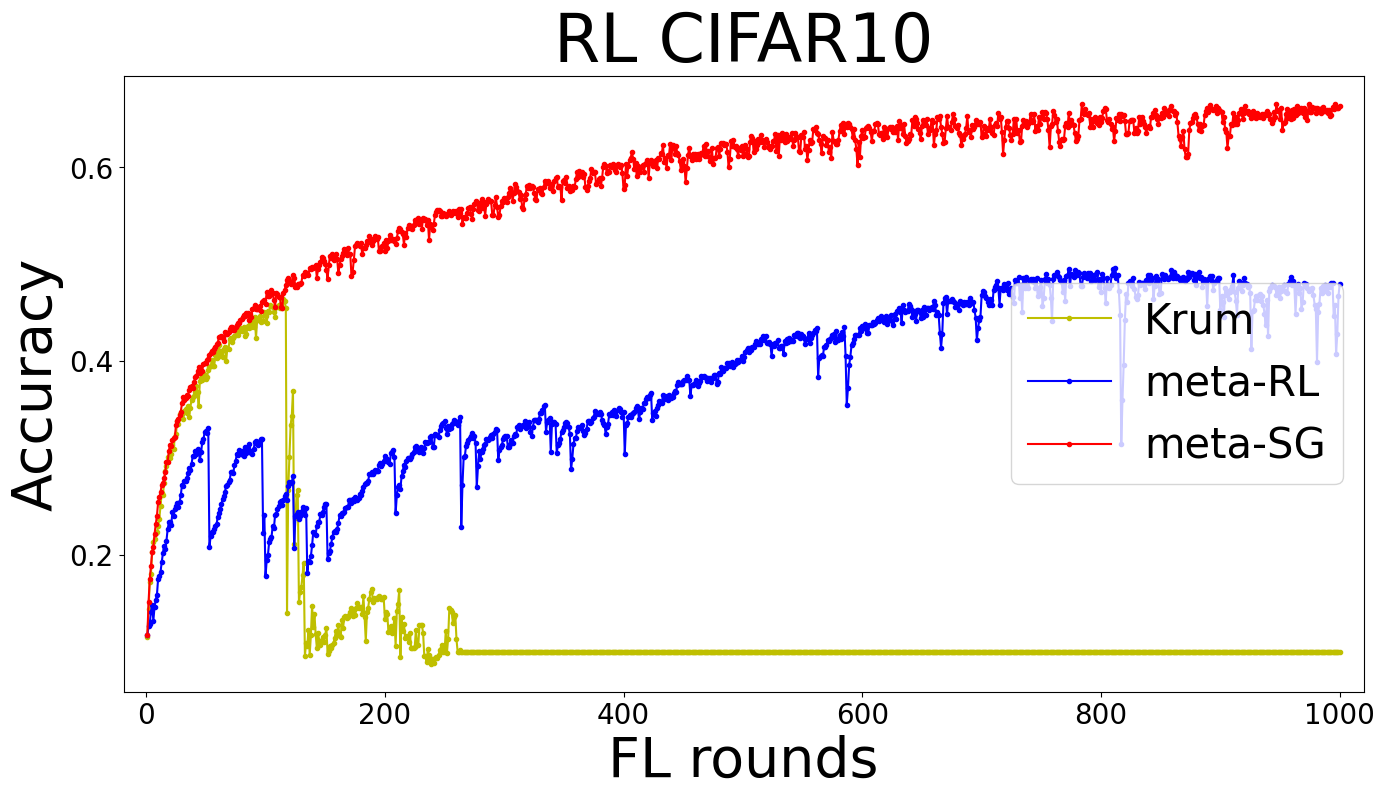}
  \end{subfigure}
  \caption{\small{Comparisons of defenses against untargeted model poisoning attacks (i.e., IPM and RL) on MNIST and CIFAR-10. RL-based attacks are trained before FL round 0 against the associate defenses (i.e., Krum and meta-policy of meta-RL/meta-SG). All parameters are set as default and all random seeds are fixed.}}\label{fig:additional}
\end{figure*}

\begin{table}[h!]
\centering
\begin{tabular}{@{\extracolsep{1pt}}lcccc}
    \toprule
        Bac & \multicolumn{1}{c}{Krum} & \multicolumn{1}{c}{CRFL} & \multicolumn{1}{c}{Meta-SG (ours)} \\
        \midrule
        BFL & $0.8257$ & $0.4253$  & $0.0086$ \\
        DBA & $0.4392$ & $0.215$ & $0.2256$ \\
        BRL & $0.9901$ & $0.8994$ & $0.2102$ \\
    \bottomrule 
\end{tabular}
\vspace{0.1cm}
\caption{Comparisons of average backdoor accuracy (lower the better) after 250 FL rounds under backdoor attacks and defenses on MNIST. All parameters are set as default and all random seeds are fixed.}
\label{table:2}
\vspace{-0.2cm}
\end{table}

\begin{table*}[h!]
\centering
\begin{tabular}{@{\extracolsep{1pt}}lccccc}
    \toprule
        Acc & \multicolumn{1}{c}{NA/FedAvg} & \multicolumn{1}{c}{Root data} & \multicolumn{1}{c}{Generated data} & \multicolumn{1}{c}{Pre-train only} &\multicolumn{1}{c}{Online-adapt only} \\
        \midrule
        MNIST & $0.9016$ & $0.4125$  & $0.5676$ & $0.6125$ & $0.4134$\\
        CIFAR-10 & $0.7082$ & $0.2595$ & $0.3833$ & $0.1280$ & $0.3755$\\
    \bottomrule 
\end{tabular}
\vspace{0.1cm}
\caption{Ablation studies of only using root data/generated dataset in simulated environment to learn the FL model and the defense performance under IPM of directly applying meta-policy learned from pre-training without adaptation/starting online adaptation from a randomly initialized defense policy. Results are average globel model accuracy after $250$ ($500$) FL rounds on MNIST (CIFAR-10). All parameters are set as default and all random seeds are fixed..}
\label{table:3}
\vspace{-0.2cm}
\end{table*}

\paragraph{Importance of pre-training and online adaptation}
As shown in \Cref{table:3}, the pre-training is to derive defense policy rather than the model itself. Directly using those shifted data (root or generated) to train the FL model will result in model accuracy as low as $0.2$-$0.3$ ($0.4$-$0.5$) for CIFAR-10 (MNIST) in our setting.  
Pre-training and online adaptation are indispensable in the proposed framework. Our experiments in \Cref{table:3} indicate that directly applying defense learned from pre-training w/o online adaptation and adaptation from randomly initialized defense policy w/o pre-training both fail to address malicious attacks, resulting in global model accuracy as low as $0.3$-$0.6$ ($0.1$-$0.4$) on MNIST (CIFAR-10). In the absence of adaptation, meta policy itself falls short of the distribution shift between the simulated and the real environment. Likewise, the online adaptation fails to attain the desired defense policy without the pre-trained policy serving as a decent initialization.

\paragraph{Biased/Limited  root data}
We evaluate the average model accuracy after 250 FL epochs under the meta-SG framework against the IPM attack, using root data with varying i.i.d. levels (as defined in the experiment setting section). Here, q = 0.1 (indicating the root data is i.i.d.) serves as our baseline meta-SG, as presented in the paper. We designate class 0 as the reference class. For instance, when q = 0.4, it indicates a $40\%$ probability for each data labeled as class 0 within the root data, while the remaining $60\%$ are distributed equally among the other classes. We observe that when q is as high as 0.7, there is one class (i.e., 3) missing in the root data. Although, through inverting methods in online adaptation, the defender can learn the missing data in the end, it suffered the slower adaptation compared with a good initial defense policy.
In addition, we test the average model accuracy after 250 FL epochs under meta-SG against IPM attack using different numbers of root data (i.e., 100, 60, 20), where 100 root data is our original meta-SG setting in the rest of paper. We overserve that when number of root data is 20, two classes of data are missing (i.e., 1 and 5).

\begin{table}[h!]
\centering
\subfloat[Ablation study of biased root data.]{
\begin{tabular}{@{\extracolsep{1pt}}lcccc}
    \toprule
        Biased Level & \multicolumn{1}{c}{q = 0.1} & \multicolumn{1}{c}{q = 0.4} & \multicolumn{1}{c}{q = 0.7}\\
        \midrule
        Acc & $0.8951$ & $0.8612$  & $0.7572$\\
    \bottomrule 
\end{tabular}
}
\qquad\qquad
\subfloat[Ablation study of limited root data.]{%
\begin{tabular}{@{\extracolsep{1pt}}lcccc}
    \toprule
        Number of Root Data & \multicolumn{1}{c}{100} & \multicolumn{1}{c}{60} & \multicolumn{1}{c}{20}\\
        \midrule
        Acc & $0.8951$ & $0.8547$ & $0.6902$\\
    \bottomrule 
\end{tabular}
}
\vspace{0.1cm}
\caption{Results of the average model accuracy on MNIST after 250 FL epochs under meta-SG against IPM attack using root data with (a) different i.i.d levels and (b) different numbers of root data. All random seeds are fixed and all other parameters are set as default.}
\label{table:5}
\vspace{-0.2cm}
\end{table}

\begin{table}[h!]
\centering
\begin{tabular}{@{\extracolsep{1pt}}lcccc}
    \toprule
        Acc/Bac & \multicolumn{1}{c}{NormBound 0.2} & \multicolumn{1}{c}{NormBound 0.1} & \multicolumn{1}{c}{NormBound 0.05}\\
        \midrule
        DBA & $0.6313/0.9987$ & $0.5192/0.6994$  & $0.3610/0.4392$\\
        IPM+BFL & $0.6060/0.5123$ & $0.4917/0.2104$ & $0.3614/0.2253$\\
        \toprule
        Acc/Bac & \multicolumn{1}{c}{NeuroClip 10} & \multicolumn{1}{c}{NeuroClip 6} & \multicolumn{1}{c}{NeuroClip 1}\\
        \midrule
        DBA & $0.6221/0.9974$ & $0.6141/0.9984$ & $0.2515/0.0002$\\
        IPM+BFL & $0.1/0.0020$ & $0.1/0$ & $0.1/0$\\
    \bottomrule 
\end{tabular}
\vspace{0.1cm}
\caption{Results of manually tuning norm threshold~\cite{sun2019can} and clipping range~\cite{wang2023mm}. All other parameters are set as default and all random seeds are fixed.}
\label{table:4}
\vspace{-0.2cm}
\end{table}

\paragraph{Generalization to unseen adaptive attacks}
We thoroughly search related works considering adaptive attacks in FL and find very limited works (with solid and lightweight open-source implementation) that can be used as our benchmark. As a result, we introduce two new benchmark adaptive attack methods in the testing stage as unseen adaptive attacks: (1) adaptive LMP!\cite{fang2020local}, which requires access to normal clients’ updates in each FL round, and (2) RL attack~\cite{li2022learning} restricted 1-dimensional action space (i.e., adaptive scalar factor) compared with the baseline 3-dimensional RL attack~\cite{li2022learning} showing in our paper. The defender in pre-training only interacts with the 3-dimensional RL attack. We test the average model accuracy after 250 FL epochs under meta-SG against different (unseen) adaptive attacks. What is interesting here is that meta-SG can achieve even better performance against unseen attacks.

\begin{table}[h!]
\centering
\begin{tabular}{@{\extracolsep{1pt}}lcccc}
    \toprule
        Attack Methods & \multicolumn{1}{c}{Model Acc} \\
        \midrule
        3-dimensional RL & $0.8652$\\
        Adaptive LMP & $0.8692$\\
        1-dimensional RL & $0.8721$\\
    \bottomrule 
\end{tabular}
\vspace{0.1cm}
\caption{Comparisons of average model accuracy after 250 FL rounds under different adaptive attacks on MNIST. All parameters are set as default and all random seeds are fixed.}
\label{table:6}
\vspace{-0.2cm}
\end{table}

\end{document}